\newcommand*{\lmset}{\{\mskip-4mu\{}
\newcommand*{\rmset}{\}\mskip-4mu\}}
\newcommand{\set}[1]{\left\{ #1\right\}}
\newcommand{\mult}[1]{\operatorname{m}_{#1}}
\newcommand{\msetch}[2]{
\left.\mathchoice
  {\left(\kern-0.45em\binom{#1}{#2}\kern-0.45em\right)}
  {\big(\kern-0.10em\binom{\smash{#1}}{\smash{#2}}\kern-0.10em\big)}
  {\left(\kern-0.10em\binom{\smash{#1}}{\smash{#2}}\kern-0.10em\right)}
  {\left(\kern-0.10em\binom{\smash{#1}}{\smash{#2}}\kern-0.10em\right)}
\right.}
\newcommand{\dm}[2]{\inprod{#1}{#2}}
\newcommand{\LSC}{\mcL_{\operatorname{SC}}}
\newcommand{\bbI}{\mathbb{I}}
\newcommand{\bbN}{\mathbb{N}}
\newcommand{\bbS}{\mathbb{S}}
\newcommand{\R}{\mathbb{R}}
\newcommand{\mcB}{\mathcal{B}}
\newcommand{\mcL}{\mathcal{L}}
\newcommand{\mcY}{\mathcal{Y}}
\newcommand{\mcZ}{\mathcal{Z}}
\newcommand{\rmE}{\mathrm{E}}
\newcommand{\inprod}[2]{\langle #1 , #2 \rangle}
\newcommand\restr[2]{{
  \left.\kern-\nulldelimiterspace 
  #1 
  \vphantom{\big|} 
  \right|_{#2} 
  }}
\theoremstyle{plain}
\newtheorem{lemma}{Lemma}
\theoremstyle{definition}
\newtheorem{definition}{Definition}
\theoremstyle{remark}
\newtheorem{remark}{Remark}
\renewcommand{\emptyset}{\phi}
\crefname{table}{table}{table}
\Crefname{table}{table}{table}
\crefname{algorithm}{algorithm}{algorithm}
\Crefname{algorithm}{algorithm}{algorithm}
\Crefname{figure}{figure}{figure}
\crefname{figure}{figure}{figure}
\crefname{claim}{claim}{claim}
\Crefname{claim}{claim}{claim}
\crefname{lemma}{lemma}{lemma}
\Crefname{lemma}{lemma}{lemma}
\declaretheoremstyle[
    headfont=\bfseries, 
    bodyfont=\itshape
]{boldstyle}
\pgfplotsset{compat=1.18} 
\title{Global Pre-fixing, Local Adjusting: A Simple yet Effective Contrastive Strategy for Continual Learning}
\author{
 Jia Tang \\
 Nanjing University of Aeronautics and Astronautics\\
  \texttt{tangjia@nuaa.edu.cn} \\
   \And
 Xinrui Wang \\
  Nanjing University of Aeronautics and Astronautics\\
  \texttt{wangxinrui@nuaa.edu.cn} \\
  \And
 Songcan Chen \thanks{Corresponding author: s.chen@nuaa.edu.cn}  \\
  Nanjing University of Aeronautics and Astronautics\\
  \texttt{s.chen@nuaa.edu.cn} \\
}
\begin{document}
\maketitle

\begin{abstract}
Continual learning (CL) involves acquiring and accumulating knowledge from evolving tasks while alleviating catastrophic forgetting. Recently, leveraging contrastive loss to construct more transferable and less forgetful representations has been a promising direction in CL. Despite advancements, their performance is still limited due to confusion arising from both inter-task and intra-task features. To address the problem, we propose a simple yet effective contrastive strategy named \textbf{G}lobal \textbf{P}re-fixing, \textbf{L}ocal \textbf{A}djusting for \textbf{S}upervised \textbf{C}ontrastive learning (GPLASC). Specifically, to avoid task-level confusion, we divide the entire unit hypersphere of representations into non-overlapping regions, with the centers of the regions forming an inter-task pre-fixed \textbf{E}quiangular \textbf{T}ight \textbf{F}rame (ETF). Meanwhile, for individual tasks, our method helps regulate the feature structure and form intra-task adjustable ETFs within their respective allocated regions. As a result, our method \textit{simultaneously} ensures discriminative feature structures both between tasks and within tasks and can be seamlessly integrated into any existing contrastive continual learning framework. Extensive experiments validate its effectiveness.
\end{abstract}


\section{Introduction}
\label{sec:intro}

Continual learning (CL) learns from a sequence of tasks while addressing the Catastrophic Forgetting (CF) problem \cite{catastrophic}. A key scenario in CL is Class Incremental Learning (CIL), where the model must not only maintain intra-task discriminability but also distinguish between classes from different tasks without access to task identifiers during inference. This setup requires the model to handle classes it has never encountered together during training, making CIL particularly challenging.

So far, numerous approaches have been proposed, predominantly learned by minimizing cross-entropy loss (CE). However, a recent study Co2L\cite{cha2021co2l} and its variants\cite{cclis,wen2024provable} leverage supervised contrastive loss (SupCon)\cite{SupCon} to learn features with less forgetting compared to CE-trained ones. We argue that their success can be due to learning implicitly
\textit{intra-task} features\footnote{Unless otherwise specified, features in our work are normalized and exist on the surface of a hypersphere.} geometry from SupCon in the form of a simplex \textbf{E}quiangular \textbf{T}ight \textbf{F}rame(ETF)\cite{graf2021dissecting}.
Such structure offers better transferability and discriminability\cite{etf_transfer,neural_collapse}. However, as shown in \Cref{fig:local_supervised_contrastive_loss_img} (left), due to the sequential arrival of tasks and the (global) invisibility of inter-task data, the SupCon cannot spontaneously form a \textit{task-level} well-separated feature structure in CL without explicit separability requirement, thus degrading the whole learning performance of these methods.

In order to achieve \textit{inter-task} separable representation, some methods have been proposed\cite{2021gpm,shiprospective}.
They optimize the current task's features within a subspace that does not intersect with those of past tasks to minimize inter-task interference.
However, as the number of learned tasks increases, the optimization subspace for the current task will be severely restricted. 
On the one hand, directly applying such idea in Co2l cannot ensure the formation of the intra-task ETF when optimizing within constrained subspace, resulting in intra-task feature confusion.
On the other hand, such methods overlook the task-level well-separated feature structure, limiting their inter-task performance.

Building on the above analysis, we follow Co2L but differ in two key motivations:

\noindent\textbf{For inter-task features: global pre-fixing.}
Obtaining an \textit{inter-task} discriminative features through directly optimization is relatively difficult in CL.
The underlying reasons are: (1) We cannot replay sufficient previous task data due to the memory constraint in the context of CL.
(2) The sequential arrival of tasks makes it challenging to achieve global inter-task discrimination by focusing solely on the current task's optimization.
Facing with these, intuitively, if we could globally pre-allocate non-overlapping feature regions for different tasks from the beginning, the model would be equipped with task-level discriminability inherently.

\begin{figure*}[!ht]
	\centering
    \includegraphics[width=0.9\textwidth]{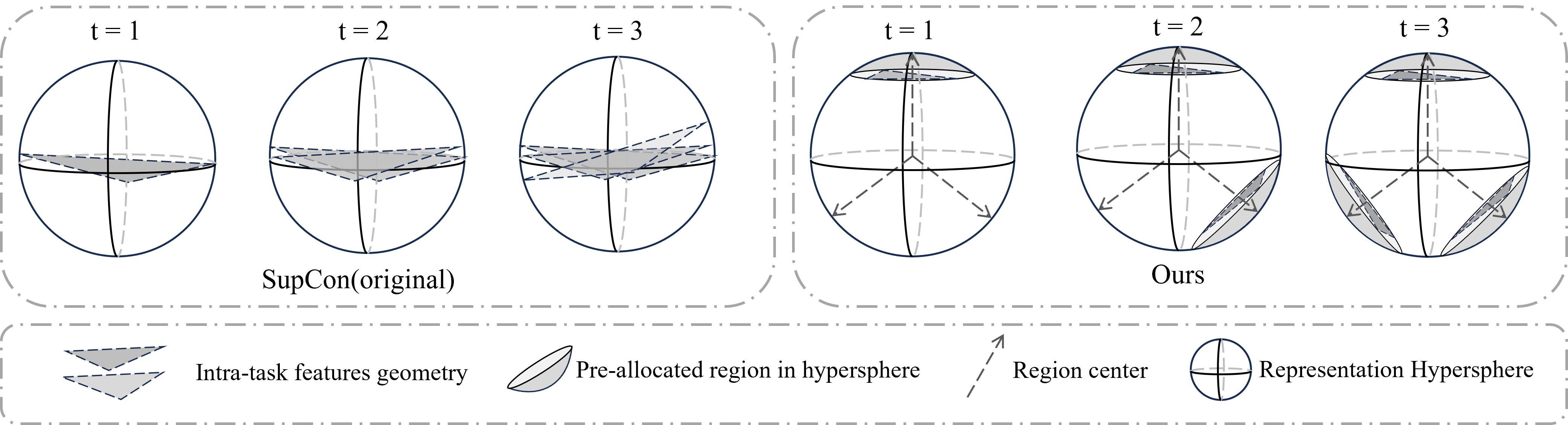} 
	\caption{
    The difference between our method and SupCon in CL. 
    \textbf{(left):} Due to the sequential arrival of tasks and the (global) invisibility of inter-task data, the SupCon cannot spontaneously form a \textit{task-level} well-separated feature structure in CL.
    \textbf{(right):} 
We pre-divide the unit representation hypersphere into non-overlapping regions according to the pre-estimated maximum task counts, with the center of the regions forming a task-level pre-fixed ETF. The proposed loss helps adjust feature structure and form an intra-task ETF in the allocated region.}  
	\label{fig:local_supervised_contrastive_loss_img}
\end{figure*}

\noindent\textbf{For intra-task features: local adjusting.} Task-level feature discriminability is not enough yet for CL. Intra-task separability is also crucial for performance. Considering each task separately holds a pre-allocated feature region, we can independently optimize to learn the class-level features of the current task within its own region, intuitively able to secure locally-optimal discrimination. 


In addition, it is worth to point out that such \textit{global pre-fixing and local adjusting} strategy can be applied to any training paradigm, as long as the intra-task optimization is appropriately adapted. In our work, we adopt SupCon for intra-task features optimization because it can better construct feature structure, leading to improved discriminability and reduced forgetting\cite{cha2021co2l}. Furthermore, based on the above motivations, we design a new contrastive loss for each task, named \textbf{R}egion \textbf{R}estricted \textbf{S}upervised \textbf{C}ontrastive \textbf{L}oss (R2SCL), to follow the contrastive paradigm and form an intra-task ETF in a specific region. Furthermore, we propose a simple yet effective contrastive strategy and learning method
for CL: \textit{\textbf{G}lobal \textbf{P}re-fixing, \textbf{L}ocal \textbf{A}djusting} for \textbf{S}upervised \textbf{C}ontrastive learning(GPLASC). As shown in \Cref{fig:local_supervised_contrastive_loss_img}, compared to SupCon, our method achieves a separable feature structure both between tasks and within tasks as tasks arrive continually. 

There are some works similar to ours, such as PRL\cite{shiprospective} and LODE\cite{liang2023loss}.
PRL try to separate features from different classes in continual learning. However, it does not explore how to incrementally construct discriminative geometric structure during the separation which limits their performance. LODE’s limitations are its reliance on CE loss and one-hot labels to provide separation information.
Although our work can be interpreted from a decoupling perspective as LODE and PRL, they differ in their nature and present distinct challenges. Our work focuses on incrementally building discriminative representation geometry under continual learning, and we propose a more general strategy.





Specifically, for \textit{local adjusting}, we first theoretically analyze the key factors that influence the individual feature regions of SupCon by revisiting its lower bound\cite{graf2021dissecting}. Our theoretical result reveals that by adjusting the intra-task feature similarity and task prototypes, we can restrict the individual feature region on the unit hypersphere and preserve intra-task ETF structure in the region. Based on our theoretical work, we design a new contrastive loss for each task to adjust features in the specific region.
For \textit{global pre-fixing}, we pre-allocate the task regions with their centers forming a fixed inter-task ETF. Furthermore, 
the proposed loss helps constrain the features of different tasks to their allocated regions. 

On the other hand, as tasks arrive continually, learned features will gradually drift away from their pre-fixed regions\cite{sdc}. Existing contrastive continual learning methods\cite{cha2021co2l,wen2024provable} usually adopt an instance relation distillation to prevent relative drift between instances. Our experimental results show that introducing and optimizing an additional feature-level Mean Squared Error (MSE) loss can mitigate the drift more effectively, leading to better performance.

In the following, we present our contributions:
\begin{itemize}

\item To tackle the feature confusion of existing contrastive continual learning methods, we propose a strategy that combines global prefixing with local adjustment to decouple the intra-task and inter-task feature learning. For the former, we pre-allocate non-overlapping regions respectively for different tasks to avoid inter-task feature confusion. For the latter, we optimize the features localizing in pre-allocated region to mitigate intra-task feature confusion.

\item To deconfuse intra-task class-level features according to the above strategy, inspired by SupCon that can implicitly learn the ideal discriminative feature structure of ETF, we propose a new contrastive loss. By optimizing the loss, we can form an intra-task ETF for each task in its pre-allocated region. We provide a theoretical explanation for this.

\item We evaluate our proposed method with various contrastive continual learning methods to confirm its superiority. Our method achieves a consistent improvement in CIL performance across all datasets with different buffer sizes.

\end{itemize}

\section{Preliminary}

\label{sec:preliminary}
\subsection{Problem Setup}
Continual learning aims to learn from a sequence of tasks, indexed by $t=1,2,...,T$, where each task has a dataset denoted as $D^t=\{(x_i^t,y_i^t)\}_{i=1}^{N^t}$. $N^t$ is the size of dataset in task $t$. $x_i^t$ is the instance with the label $y_i^t \in \mathcal{Y}^t$ in task $t$, $\mathcal{Y}^t$ is the corresponding label space where $\mathcal{Y}^t \cap \mathcal{Y}^{t^\prime} = \oslash$ for $t \ne t^\prime$. We denote $|\mathcal{Y}^t|$ as the class number of the task $t$.
We use the superscript $t$ to represent the task ID. 
In traditional CL, we train a network $f_{\theta}=f_c \circ f_e $  with parameters $\theta$, including a backbone $f_e$ and a classifier $f_c$. For a specific task, $f_e$ maps the input $\{x_1^t,x_2^t,\ldots,x_{N^t}^t\}$ to representation embeddings $\mathcal{Z}^t=\{z_1^t,z_2^t,\ldots,z_{N^t}^t\},\text{ i.e. }z_i^t = f_e(x_i^t)$. 
During training, we use a memory buffer $D_M^{t-1}$ to store a small subset of data from the previous tasks. In the \( t \)-th task, we train on $ D^t \cup D_M^{t-1}$. 
There are two important settings in CL: Class-Incremental Learning (CIL) and Task-Incremental Learning (TIL). Their main difference lies in the evaluation phase that TIL has access to task ID $t$, while CIL does not. 
Our method is designed for CIL.


\subsection{Contrastive Continual Learning}
Contrastive continual learning leverages supervised contrastive loss (Supcon)\cite{SupCon} to achieve less forgettable features\cite{cha2021co2l}. Specifically, it focuses on the encoder $f_e$. Specifically, for a specific task,  let $Z=(z_1,\ldots,z_N)$ be $N$ normalized features with corresponding labels $Y=(y_1,\ldots,y_N)$, we aim to optimize the total loss 
    $ \LSC(Z;Y) =\sum\nolimits_{B \in \mcB} \mathcal{L}_{SupCon}(Z; Y, B)$
    , with $\mcB$ representing a batch set for current task.
For a specific batch $B$ with batchsize $b$, we calculate $\mathcal{L}_{SupCon}(Z; Y, B)$,  which is defined \footnote{
Following \cite{graf2021dissecting}, we omit the term  $\tau$  for simplicity and set $\frac{1}{|B_{y_i}|-1}\bbI_{\set{|B_{y_i}|>1}}=0$ when $|B_{y_i}|=1$ for notational reasons.}
as:
\begin{equation}
\nonumber
    -\sum\limits_{i \in B}
    \frac{\bbI_{\{|B_{y_i}|>1\}}}{|B_{y_i}|-1}\hspace{-0.1cm}
    \sum\limits_{j \in B_{y_i}\setminus \lmset i \rmset} \hspace{-0.2cm}
    \!\log
    \left(
        \frac
        {\exp\big(\dm{z_i}{z_j}\big)}
        {
            \sum\limits_{k \in B \setminus \lmset i \rmset}\hspace{-0.2cm}
            \exp\big(\dm{z_i}{z_k}\big)
        }
    \right)
    \label{def:llsc}
    \end{equation}
 where $B_{y_i} = \lmset j \in B: y_{j} = y_i \rmset$ denotes the multi-set of indices in a batch $B \in \mcB$ with label equal to $y_i$. $\bbI(\cdot)$ is the indicator function.

\subsection{Simplex Equiangular Tight Frames}

Equiangular tight frames (ETFs) are versatile in many fields due to their unique properties of equiangularity and tightness. These properties make ETFs particularly useful in applications requiring minimal correlation and optimal redundancy, such as communication systems, quantum information, signal processing and image processing.
    In \Cref{def:simplex}, we introduce a special ETF\cite{fickus2018equiangular},  
    $\rho$-sphere-inscribed regular simplex\cite{graf2021dissecting}
    , which will be widely used in our proof.
\begin{definition}[\textbf{$\rho$-Sphere-inscribed regular simplex}] 
\label{def:simplex}
    Let $\bbN$ denote the set of natural numbers, $h,K \in \bbN$ with $K\le h+1$.
    We say that $\zeta_1, \dots, \zeta_K \in \R^h$ form the vertices of a regular simplex inscribed in the hypersphere of radius $\rho>0$, if and only if the following conditions\footnote{To facilitate references, we assign a name to each condition (see parentheses)} hold:
 \begin{itemize}
        \item \label{def:simplex:s1} $\sum_{i \in [K]} \zeta_i = 0$ \textit{(Centroid Condition)}
        \item \label{def:simplex:s2} $\| \zeta_i \| = \rho$\, for $i \in [K]$ \textit{(Radius Condition)}
        \item \label{def:simplex:s3} $\exists d \in \R: d = \inprod{\zeta_i}{\zeta_j}$\, for $1 \leq i < j \leq K$ \textit{(Equiangular Condition)}
    \end{itemize}
\end{definition}

\section{Methods}
\label{sec:analysis}

\textbf{Challenges.} In CIL, despite intra-task discriminability, the model needs to separate classes from different tasks which have never been seen together before. Based on this, we decompose the challenges of CIL into three parts: \textit{intra-task feature confusion}, \textit{inter-task feature confusion} and \textit{forgetting}.

\noindent\textit{Intra-task feature confusion:} 
On the one hand, to minimize forgetting during learning, various methods optimize the current task in a subspace that does not intersect with past tasks, which may potentially affect the plasticity of the current task as task number increases. On the other hand, complex datasets and the model's poor ability to learn discriminative representations also lead to feature confusion.




\noindent\textit{Inter-task feature confusion:}
Various existing works confuse \textit{ inter-task confusion} and \textit{forgetting} in CIL. They view task confusion as one of the reasons for forgetting\cite{soutif2021importance}. However, the \textit{Infeasibility Theorem}\cite{nipstf_cf} has proved that even the discriminative model without forgetting cannot reach the optimal CIL performance due to task confusion, suggesting their distinct roles. Such a problem arises from dynamic task distributions and the invisibility of inter-task data, which poses a challenge to the \textit{task-level} discriminative learning.

\noindent\textit{Forgetting:} The model experiences a performance decline on earlier tasks. From the perspective of representation, the learned features will drift away from their original position as tasks arrive continually\cite{sdc}.

\noindent\textbf{How to Construct More Effective Representations in CIL: A Simple Overview of Our Method.}
According to \cite{kim2022theoretical}, the CIL performance is bounded by the performance of both the within-task prediction (WP) and task-id prediction (TP). In other words, a discriminative feature in CL needs to consider both the intra-task and the inter-task performance.
Therefore, we decouple the training of features to two aspects: \textit{global pre-fixing}, \textit{local adjusting}.
A simple illustration of our method is shown in \Cref{fig:local_supervised_contrastive_loss_img}.
Specifically, we enhance inter-task separability by pre-allocating distinct, non-overlapping regions for different tasks, while intra-task discriminability is obtained by adjusting contrastive features in the task-specific region.
We detail this in the following subsection:
\begin{itemize}
   \item \Cref{controable_contrastive_learning}: Our theoretical work. We explore the key factors that control the final feature structure of SupCon, which provides the theoretical foundation for the \textit{local adjusting}.

    \item \Cref{sec:scaffold} (the first part) : Implementation of \textit{local adjusting}. Based on our theoretical results, we design and optimize a contrastive loss to adjust the feature to the resided region for a specific task.
    \item \Cref{sec:scaffold} (the second part) : Implementation of \textit{global pre-fixing}. We pre-allocate the non-overlapping feature regions for different tasks and then leverage the proposed loss to adjust features for each task.
    

    \item \Cref{sec:simple_distallation}: Implementation of more effective distillation in contrastive continual learning. We introduce an additional feature-level MSE loss to better prevent the feature drift of past tasks.

\end{itemize}

\subsection{Revisiting Feature Structure in SupCon from a Theoretical Perspective}
\label{controable_contrastive_learning}

In this section, we discuss the features formed by the SupCon and try to identify the keys that control the feature structure. 
All of the following analyses are focused on training for a specific task. 
For simplicity, we omit the superscript $t$.
We start by considering the following lemma:

\begin{restatable}{lem}{lower@bound@contrastive}
    \label{lemma:lower_bound_contrastive}
    Let $Z=(z_1,\ldots,z_N)$ be $N$ normalized features with corresponding labels $Y=(y_1,\ldots,y_N)$. If the label configuration $Y$ is balanced, it holds that 
    \begin{align}
    \nonumber
        &\LSC(Z;Y) \notag
        \ge 
        \sum_{l = 2}^{b} l M_l                           
            \log 
            ( 
                l - 1 + (b-l)
                \exp ( 
                    \frac {1} {M_l}
                     \\ \notag
                        &\underbrace{
                            ( - \frac{1}{|B_{y}|\left(|B_{y}|-1\right)}\sum_{y\in \mathcal{Y}}{\sum_{B\in 
                            \mathcal{B}_{y,l}}{
                                \sum_{i\in B_{y}}\sum_{j\in B_{y}\setminus\{\{i\}\}}\langle z_{i},z_{j}\rangle 
                            }}
                        }_{\text{attraction term}}\\
                        &+
                        \underbrace{
                            ( 
                            \frac{|\mathcal{B}_{y,l}|}{N^2}\frac{|\mathcal{Y}|^2}{|\mathcal{Y}|-1}
                            \sum_{y\in\mathcal{Y}}\sum_{ \substack{n\in[N]\\y_n=y}}
                            \sum_{y_m\neq y}\langle z_n,z_m \rangle 
                            ) 
                        }_{\text{repulsion term}}
                    )
                )
            )\nonumber
        \enspace,
    \end{align}
      where $M_l =\sum_ {y \in \mcY} |\mcB_{y,l}|$. 
\end{restatable}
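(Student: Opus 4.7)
The approach is to reduce each per-batch SupCon term to a tractable form via Jensen's inequality on the exponential, then aggregate across batches via a second Jensen step on a convex log-exp function. First I would use the identity $\log(e^{a_j}/\sum_k e^{a_k}) = a_j - \log\sum_k e^{a_k}$ to split each per-batch term. After collapsing the inner sum over $j$ in the log-sum-exp piece (it contributes a factor $|B_{y_i}|-1$ that cancels the $1/(|B_{y_i}|-1)$ prefactor), one gets
\[
\mathcal{L}_{SupCon}(Z;Y,B) = -\sum_{i\in B}\frac{\bbI_{|B_{y_i}|>1}}{|B_{y_i}|-1}\sum_{j\in B_{y_i}\setminus\{i\}}\dm{z_i}{z_j} + \sum_{i\in B}\bbI_{|B_{y_i}|>1}\log\sum_{k\in B\setminus\{i\}}\exp(\dm{z_i}{z_k}).
\]

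Next I would split the denominator in the log by class-membership into $(l-1)$ same-class and $(b-l)$ cross-class exponentials, writing $l:=|B_{y_i}|$ and $b:=|B|$. Jensen applied to the convex map $\exp$ within each sub-sum lower-bounds the bracketed quantity by $(l-1)\exp(A_{i,B}) + (b-l)\exp(R_{i,B})$, where $A_{i,B}$ and $R_{i,B}$ are the sample-$i$ averages of same-class and cross-class inner products in $B$. Factoring $\exp(A_{i,B})$ out of the bracket generates an additive $A_{i,B}$ outside the log which exactly cancels the outer attraction term (since that term already equals $A_{i,B}$ by construction). Each per-sample contribution then collapses to $\log\!\big((l-1) + (b-l)\exp(R_{i,B}-A_{i,B})\big)$.

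The second stage aggregates across $B\in\mcB$ and partitions the resulting terms by $(l,y)$: the $l$ samples of class $y$ in each $B\in\mcB_{y,l}$ contribute log-terms with identical coefficients $(l-1)$ and $(b-l)$, giving a total count of $\sum_y l|\mcB_{y,l}| = lM_l$ per value of $l$; the indicator $\bbI_{|B_{y_i}|>1}$ enforces the lower limit $l=2$ in the lemma. Since $f(x)=\log((l-1)+(b-l)e^x)$ has positive second derivative $(l-1)(b-l)e^x/((l-1)+(b-l)e^x)^2$, it is convex, so Jensen $\sum_n f(x_n)\ge lM_l\, f(\bar x)$ pulls the mean of the $R_{i,B}-A_{i,B}$'s inside the log with normalization $1/(lM_l)$; the factor of $l$ is then absorbed by the extra $1/(l-1)$ coming from $A_{i,B}$ to produce the $1/(|B_y|(|B_y|-1))$ coefficient on the attraction term displayed in the lemma, with the $1/M_l$ surviving as the outer exp-normalization.

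The hard part will be matching the repulsion piece to its stated form, namely the full-dataset sum $\sum_y\sum_{y_n=y}\sum_{y_m\ne y}\dm{z_n}{z_m}$ scaled by $|\mcB_{y,l}|\,|\mcY|^2/(N^2(|\mcY|-1))$ rather than a per-batch quantity. The balanced-label hypothesis is essential here: with each class holding $N/|\mcY|$ samples, the symmetry of $\mcB_{y,l}$ should guarantee that every cross-class pair $(n,m)$ with $y_n\ne y_m$ is visited with identical multiplicity across batches in $\mcB_{y,l}$, so averaging the per-batch sum $\sum_{B\in\mcB_{y,l}}\sum_{i\in B_y, k\in B\setminus B_y}\dm{z_i}{z_k}$ converts it into a constant multiple of the global cross-class inner-product sum. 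The factor $|\mcY|^2/(|\mcY|-1)$ accounts for class-pair combinatorics while $1/N^2$ normalizes against total sample pairings. Pinning down this combinatorial identity, including the precise sampling convention on $\mcB$ implicit in the balanced hypothesis, is the bookkeeping-intensive step I expect to require the most care, since a miscount would propagate through the final expression.
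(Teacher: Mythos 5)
The paper does not actually prove this lemma---it states that it is an intermediate result from Graf et al.\ \cite{graf2021dissecting} and defers to that reference---and your proposal is a correct reconstruction of exactly that argument: the log-sum-exp split, Jensen on $\exp$ within the same-class and cross-class groups, cancellation of the attraction term against the factored-out $A_{i,B}$, and a second Jensen step on the convex map $x\mapsto\log\bigl((l-1)+(b-l)e^{x}\bigr)$ aggregated over the $lM_l$ triples $(y,B,i)$. The only point worth sharpening is the bookkeeping you flag at the end: the equal-multiplicity property of cross-class pairs follows from the symmetry of the batch family $\mcB$ (each ordered pair $(n,m)$ with $y_n=y\neq y_m$ lies in exactly $\binom{\mult{Y}(y)-1}{l-1}\binom{N-\mult{Y}(y)-1}{b-l-1}$ batches of $\mcB_{y,l}$, independent of $n,m$), not from balancedness; the balanced hypothesis enters only to rewrite $\mult{Y}(y)\bigl(N-\mult{Y}(y)\bigr)=N^2(|\mcY|-1)/|\mcY|^2$, which yields the stated coefficient, and to make $|\mcB_{y,l}|$ independent of $y$ so that $M_l=|\mcY|\,|\mcB_{y,l}|$.
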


Following the notations used in \cite{graf2021dissecting}, $\mcB_{y,l}$ represents the specific batch that contains $l$ samples with label $y$. For each batch $B$ with batchsize $b$. In fact, \Cref{lemma:lower_bound_contrastive} is an intermediate result from \cite{graf2021dissecting}.
For a more detailed proof, see \cite{graf2021dissecting}.

\Cref{lemma:lower_bound_contrastive} decomposes the SupCon into two components: the \textit{attraction term}, which pulls positive pairs closer, and the \textit{repulsion term}, which separates negative pairs. 
Minimizing the \textit{attraction term} will cause features within the same class to collapse into a single point and minimizing the \textit{ repulsion term} will lead to the dispersion of different class features across the entire hypersphere.
If the class labels within a task are balanced, the representations of each class will finally collapse to the vertices of a $\rho_{\mcZ}$-sphere-inscribed regular simplex \cite{graf2021dissecting}. The inscribed hypersphere has its center at the origin and a radius of $\rho_{\mcZ}=1$.
Such a feature structure ensures that the separation is uniform, i.e., no class is closer or farther compared to others.

Notice the close relationship between the feature structure and the \textit{repulsion term}, which represents the similarity of intra-task features from different classes, we try to explore and demonstrate such relationship theoretically. We denote the lower bound of the similarity by $k$, i.e. $ \langle z_i,z_j \rangle \geq k $, $z_i,z_j$ are features of different classes. Obviously, in \Cref{lemma:lower_bound_contrastive}, $k = -1$.
Specifically, we show the following theorem:

\begin{restatable}[\textbf{Lower Bound and Final Feature Geometry for Supervised Contrastive Loss with Inter-Class Feature Similarity Constraints}]{thm}{localsupcontheorem}
    \label{lemma:local_supcon_theorem}
    
    $\forall$ $ i,j$  such that $ y_i \neq y_j$,   
    $\left< z_i,z_j \right> \ge k $ , \( h \geq  |\mcY|  - 1\). When \( -\frac{1}{|\mcY|-1} \leq k \leq 1 \),
    the supervised contrasive loss $\LSC(Z;Y)$ is bounded from below by
    \[
    \begin{array}{c}
    {{\cal L}_{{\rm{SC}}}}(Z;Y) \ge \sum\limits_{l = 2}^b l {M_l}\log (l - 1 + (b - l)\exp (
     - (1-k){\rho_{\mcZ}}^2
             ))
    \end{array}\]
    where equality is attained if and only if there are $\zeta_1, \dots, \zeta_{|\mcY|} \in \R^h$ such that the following conditions hold:
    
       \begin{itemize}
       \item $\forall n \in [N]: z_n = \zeta_{y_n}$.
        
        \item for $z_i, z_j$ belong to different classes, $\left< z_i,z_j \right> = k $.
        \item $\{\zeta_y\}_{y\in \mcY}$ form a $\rho$-sphere-inscribed regular simplex, with center at  $\frac{1}{N}\sum_i z_i$  and  radius of $\rho= \sqrt{(1-\frac{1}{|\mcY|})(1-k)}$ .
    \end{itemize}
\end{restatable}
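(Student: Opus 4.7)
The plan is to build directly on \Cref{lemma:lower_bound_contrastive}, which already expresses $\LSC(Z;Y)$ as an outer $\sum_l l M_l \log(\cdot)$ wrapped around a sum of attraction and repulsion terms inside an $\exp$. Since the outer $\log$ and inner $\exp$ are monotone, the whole task reduces to finding the best lower bound on (attraction + repulsion) that is consistent with normalization $\|z_n\|=\rho_\mcZ$ and the extra hypothesis $\inprod{z_i}{z_j}\ge k$ for $y_i\ne y_j$. The two terms can be bounded independently and then recombined, because the Cauchy--Schwarz-style constraints on each are separate.

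First I would bound the attraction term. Each same-class pair satisfies $\inprod{z_i}{z_j}\le \rho_\mcZ^2$, so the negated sum is at least $-\rho_\mcZ^2$ after the explicit averaging $\frac{1}{|B_y|(|B_y|-1)}$; summing over $y$ and over $B\in\mcB_{y,l}$ gives an attraction contribution of at least $-M_l\rho_\mcZ^2$. Next I would bound the repulsion term using $\inprod{z_n}{z_m}\ge k$ for $y_n\ne y_m$; balance of $Y$ lets me count the cross-class pairs as $\sum_y\sum_{y_n=y}|\{m:y_m\ne y\}|=N^2(1-1/|\mcY|)$. The combinatorial prefactor $\frac{|\mcB_{y,l}|}{N^2}\cdot\frac{|\mcY|^2}{|\mcY|-1}$ collapses this to exactly $k\,M_l\rho_\mcZ^2$ (with the balance assumption ensuring $|\mcB_{y,l}|$ is $y$-independent, so that it fits inside $M_l$). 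Dividing by $M_l$ inside the $\exp$ then produces the exponent $-(1-k)\rho_\mcZ^2$ uniformly in $l$, and plugging back into \Cref{lemma:lower_bound_contrastive} yields the claimed lower bound.

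For the equality characterization, equality in the attraction bound forces $\inprod{z_i}{z_j}=\rho_\mcZ^2$ for every same-class pair, which together with $\|z_i\|=\|z_j\|=\rho_\mcZ$ forces $z_i=z_j$; this gives the collapsed representatives $\zeta_y$. Equality in the repulsion bound forces $\inprod{\zeta_y}{\zeta_{y'}}=k$ for every $y\ne y'$. A direct Gram-matrix computation then verifies the simplex conditions in \Cref{def:simplex}: the centroid $\bar\zeta=\frac{1}{|\mcY|}\sum_y\zeta_y$ has squared norm $\frac{1+(|\mcY|-1)k}{|\mcY|}$, and $\|\zeta_y-\bar\zeta\|^2=1-\|\bar\zeta\|^2=(1-\tfrac{1}{|\mcY|})(1-k)$, so the $\zeta_y-\bar\zeta$ lie on a sphere of radius $\rho=\sqrt{(1-1/|\mcY|)(1-k)}$, are equal in norm, sum to zero by construction, and have equal pairwise inner products inherited from the common $k$.

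Feasibility is the last step: the $|\mcY|\times|\mcY|$ Gram matrix with unit diagonal and off-diagonal $k$ is positive semidefinite iff $k\ge-1/(|\mcY|-1)$, so this is exactly the condition for such a configuration to exist at all; its rank is at most $|\mcY|$ (and $|\mcY|-1$ at the endpoint $k=-1/(|\mcY|-1)$), so it embeds in $\R^h$ whenever $h\ge|\mcY|-1$. The main obstacle I expect is not the inequality itself, which is a clean parametric extension of the $k=-1$ argument already proved in Lemma 1, but the bookkeeping that ensures the $M_l$ prefactor cancels cleanly in both the attraction and repulsion contributions so that the same exponent $-(1-k)\rho_\mcZ^2$ appears inside every $l$-summand; I would verify this index-matching (which relies essentially on the balance assumption) before claiming the compact form stated in the theorem.
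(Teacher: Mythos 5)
Your proposal follows essentially the same route as the paper's proof: the lower bound is obtained by bounding the attraction term by $-M_l\rho_{\mcZ}^2$ and the repulsion term by $kM_l\rho_{\mcZ}^2$ under the balance assumption, and the equality case is characterized via collapse of same-class features, constant cross-class inner products equal to $k$, and a positive-semidefiniteness check of the Gram matrix $(1-k)\mathbf{I}+k\mathbf{1}\mathbf{1}^{\top}$ together with the centroid/radius computation giving $\rho=\sqrt{(1-\tfrac{1}{|\mcY|})(1-k)}$. The only cosmetic difference is that you bound the repulsion sum directly term by term, whereas the paper reaches the same conclusion (all cross-class inner products equal $k$ at the optimum) through an explicit KKT argument.
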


We prove the above theorem in \Cref{appendix:proofs}. In \Cref{sec:k_annother}, we discuss the corresponding feature geometry and lower bound when $\ -1\leq k < -\frac{1}{|\mcY|-1} $. Furthermore, we prove that the final theoretical results
\footnote{According to \cite{graf2021dissecting}, optimizing SupCon leads to features of different classes forming a $\rho$-sphere-inscribed regular simplex, $\rho=1$ .}
about SupCon in \cite{graf2021dissecting} is a special case of \Cref{lemma:local_supcon_theorem}.
In fact, there is an implicit lower bound constraint on the similarity between negative sample pairs within a task in \Cref{lemma:lower_bound_contrastive}, i.e. $\left< z_i,z_j \right> \geq -1 $. We discuss a more general case.

As \Cref{lemma:local_supcon_theorem} shows, the lower bound of the inter-class similarity determines the radius and position of the final inscribed hypersphere of the intra-task ETF. Specifically, when \( -\frac{1}{|\mcY|-1} \leq k \leq 1 \), the radius of the hypersphere is reduced from $1$ to $ \sqrt{(1-\frac{1}{|\mcY|})(1-k)}$, indicating a smaller region as k increases. Moreover, the center shifts from the origin to $\frac{1}{N}\sum_i z_i$. The above conditions suggest that features no longer spread across the entire hypersphere, but instead focus on a restricted region of the hypersphere. Meanwhile, for a single task, the features will eventually form a scaled ETF, which is desirable because it ensures maximal class-level separation in the task-specific region. 
For clarity, we illustrate the main idea of the \Cref{lemma:local_supcon_theorem} in \Cref{fig:single}.

Note that the inscribed hypersphere of the final ETF is different from the original hypersphere when faced with inter-class feature similarity constraints. For clarity, we differentiate between them by referring to the \textit{final inscribed hypersphere} and the \textit{original hypersphere}. Obviously, when \( k = -\frac{1}{|\mcY|-1}\), $\rho= 1$. The \textit{final inscribed hypersphere} and \textit{original hypersphere} coincide.

\begin{figure}[H]
	\centering
    \includegraphics[width=0.5\linewidth]{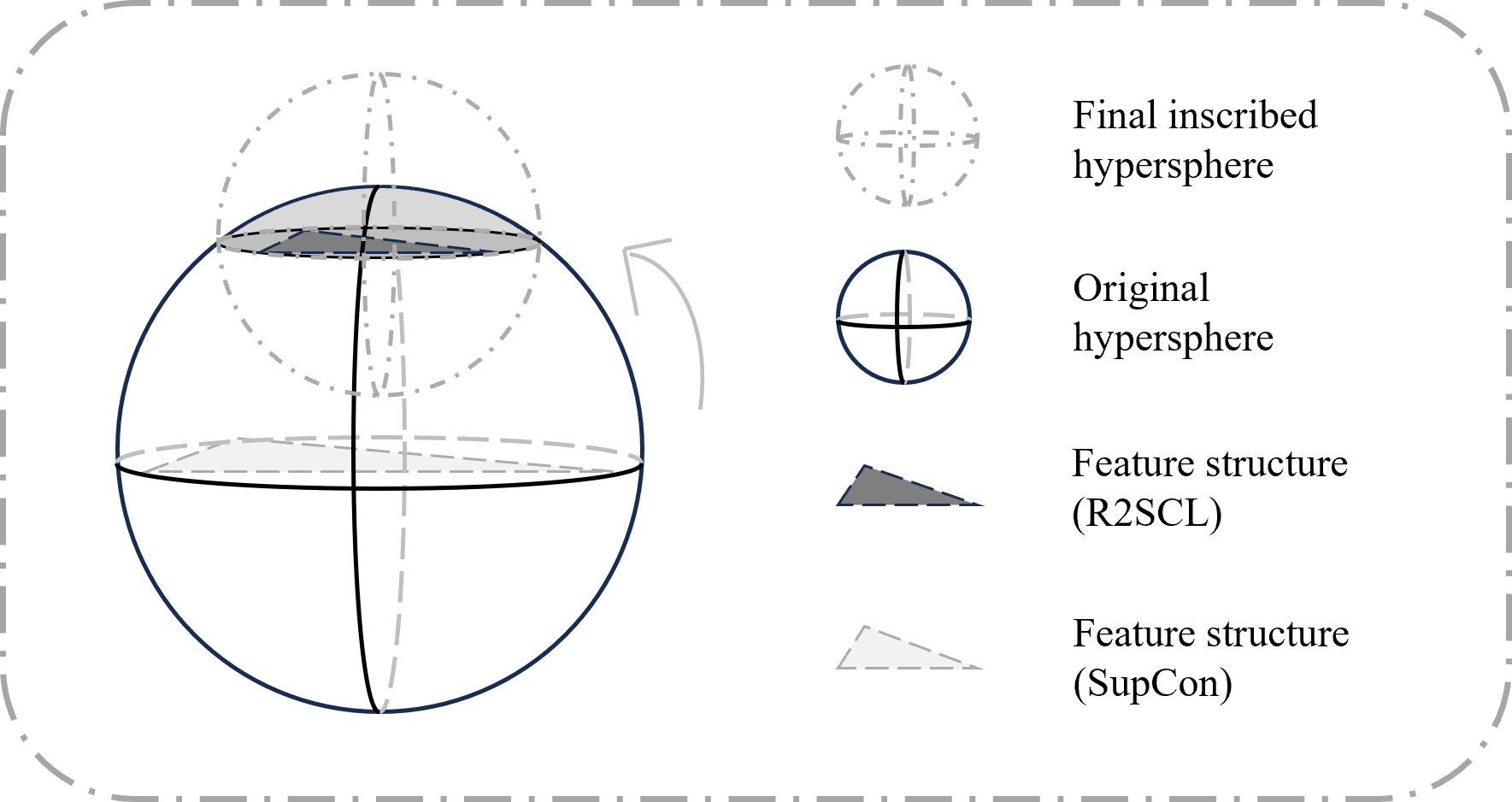}
	\caption{Illustration of \Cref{lemma:local_supcon_theorem}. 
    Varying the lower bound of the inter-class similarity will change the radius and position of the final inscribed hypersphere of the intra-task ETF.}
	\label{fig:single}
\end{figure}

\subsection{Global Pre-fixing, Local Adjusting for Supervised Contrastive Learning}
\label{sec:scaffold}

\Cref{lemma:local_supcon_theorem} indicates that constraints on inter-class similarity can result in scaling the range of the final feature structure on the hypersphere. This provides an insight for our algorithm design: we can divide the entire unit hypersphere of representation into non-overlapping fixed regions for different tasks to obtain inter-task separability. Meanwhile, we preserve the contrastive learning paradigm to obtain a discriminative intra-task structure within its allocated region. Next, we discuss our approach from \textit{local adjusting} and \textit{global pre-fixing} learning strategy.

\noindent\textbf{Local Adjusting:}
Local Adjusting aims to perform contrastive learning in a specific region. Based on \Cref{lemma:local_supcon_theorem}, we can set a threshold to constrain the intra-task similarity. A simple form would be:
\begin{equation}
   \begin{aligned}
    \label{def:naive_lsc}
    \mathcal{L}_{range}(Z;B)\triangleq 
    &\mathcal{L}_{SupCon}(Z; Y, B)\\
   +&\lambda_{range} \cdot \sum_{y_i \neq y_j}\max(0, k-\dm{z_i}{z_j})
    \end{aligned} 
\end{equation}
\textit{k} is the threshold to control the feature structure of contrastive learning by constraining the inter-class similarity. However, due to the random sampling of data in each batch, directly optimizing \Cref{def:naive_lsc} cannot restrict our features to a \textit{specific} region as expected. 
Note that the final inscribed hypersphere is determined by \textit{the radius} ($\rho= \sqrt{(1-\frac{1}{|\mcY|})(1-k)}$ ) and \textit{the center} ($\frac{1}{N}\sum_i z_i$). 
Specifically, the former determines the range of the distribution region, which is controlled by our constraints on the similarity in \Cref{def:naive_lsc}. The latter is naturally defined as the \textbf{task prototype} as shown in \Cref{def:task_prototype}, determining the position of the region.

\begin{definition}[Task Prototype] For task $t$, let $ Z^t=\{z_1^t,z_2^t,\ldots,z_N^t\}, z_i^t = f_e(x_i^t)$. We denote $P^t$ as its prototype:
\label{def:task_prototype}
    \begin{equation}
    \nonumber
        P^t \triangleq \frac{1}{N} \sum\nolimits_{i} z_i^t 
    \end{equation}
    
\end{definition}
Our method aims to constrain all task features within a fixed region of the representation space. In this context, the center of the hypersphere determines where the features will ultimately reside. To ensure that this location aligns with a predefined and desired position, we introduce a constraint on $P^t$ and $P_{fix}^t$, where $P_{fix}^t$ determines the position of the pre-allocated region.
The difference between them is that $P^t$ is the real prototype learned from the current task, while $P_{fix}^t$ is the desired prototype location. 
Building upon this, we introduce an additional MSE loss between the task prototype and the pre-fixed region center $P_{fix}^t$. The benefit of minimizing the distance between them is that it allows us to control the position of each task's feature distribution, which helps reduce inter-task interference and align different task prototypes $P^t$ to different pre-allocated centers $P_{fix}^t$ with maximal separation to overcome task-level confusion.

\begin{equation}
   \begin{aligned}
    \label{def:position_loss}
    \mathcal{L}_{position}(Z;B,P_{fix}^t)\triangleq
    \lambda_{position} \cdot \mathcal{L}_{mse}(P^t,P_{fix}^t)
    \end{aligned} 
\end{equation}

Based on \Cref{def:position_loss} and \Cref{def:naive_lsc}, we propose the \textbf{R}egion \textbf{R}estricted \textbf{S}upervised \textbf{C}ontrastive \textbf{L}oss (R2SCL) that can adjust contrastive feature in a task-specific region centered on $P_{fix}^t$.

\begin{equation}
   \begin{aligned}
    \label{def:prototype_loss}
    \mathcal{L}_{R2SCL}(Z;B,P_{fix}^t)=
    &\mathcal{L}_{range}(Z;B)\\
    +&\mathcal{L}_{position}(Z;B,P_{fix}^t)
    \nonumber
    \end{aligned} 
\end{equation}
\begin{figure*}[htbp]
	\centering
	\subfigure[no threshold]{
		\begin{minipage}[b]{0.13\textwidth}
			\includegraphics[width=1\textwidth]{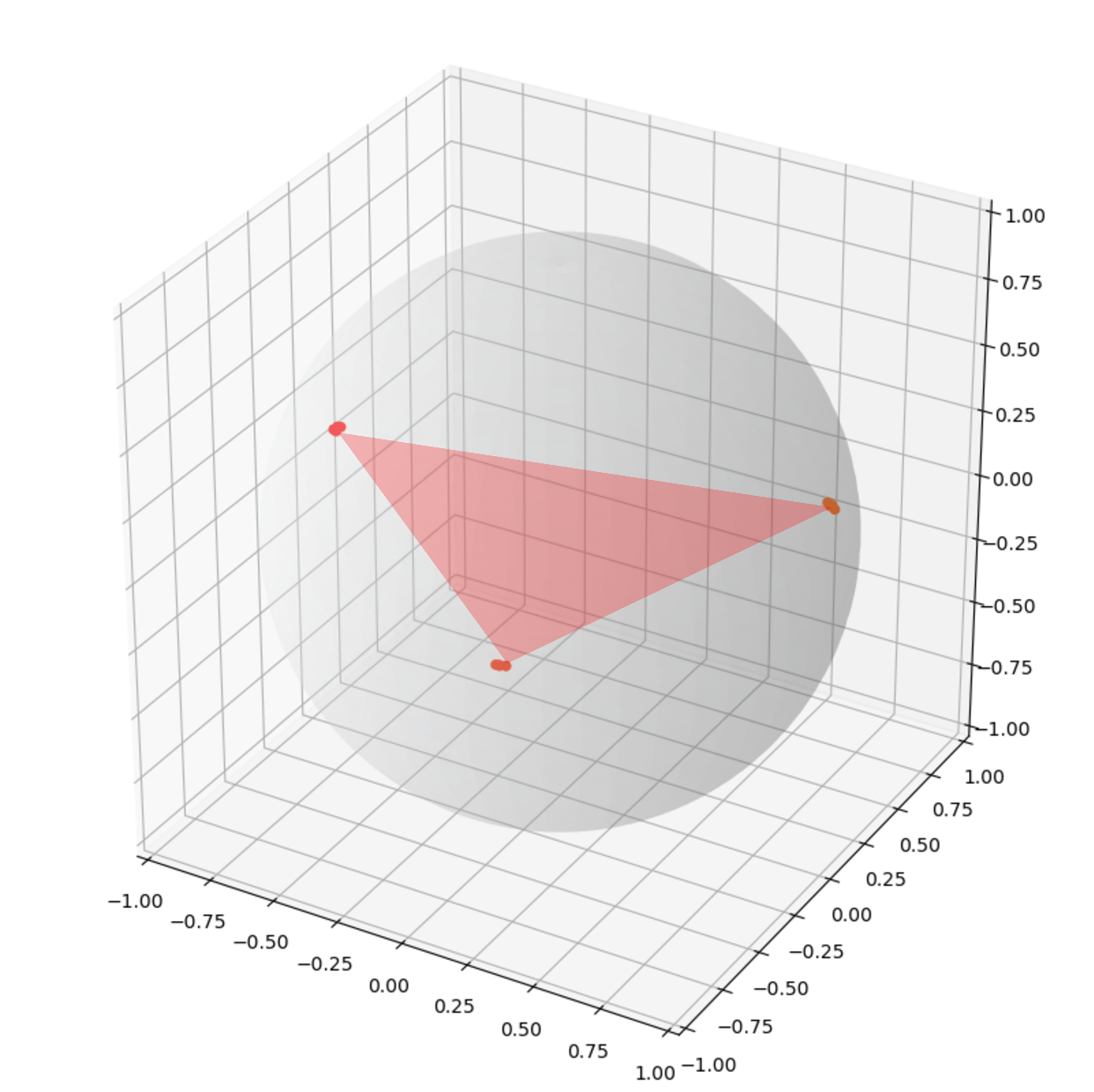}
		\end{minipage}
		\label{fig:different_lamda_a}
	}
    \subfigure[threshold=0.3]{
    	\begin{minipage}[b]{0.13\textwidth}
   		\includegraphics[width=1\textwidth]{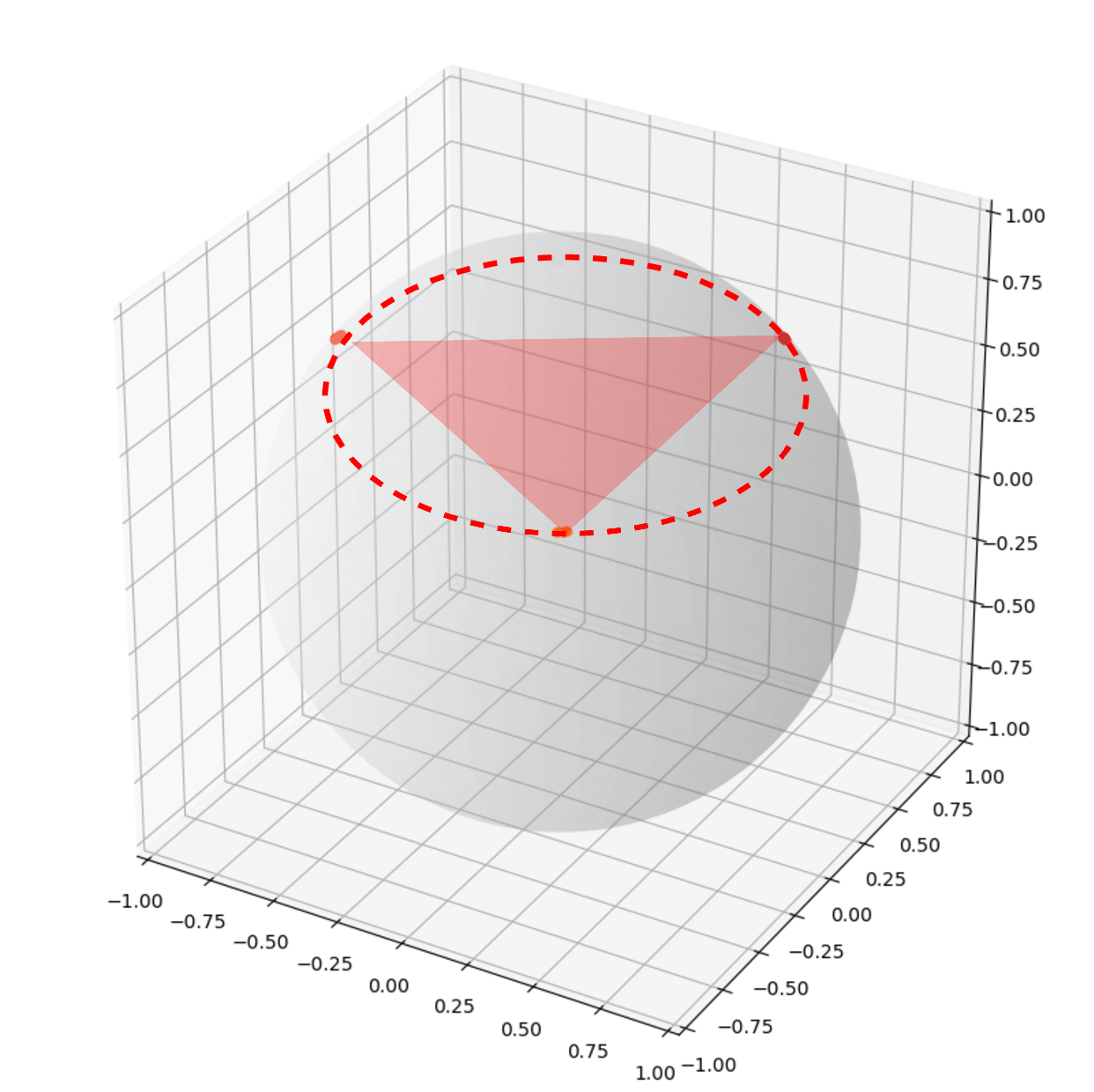}
    	\end{minipage}
	\label{fig:different_lamda_b}
    }
    \subfigure[threshold=0.7]{
    	\begin{minipage}[b]{0.13\textwidth}
   		\includegraphics[width=1\textwidth]{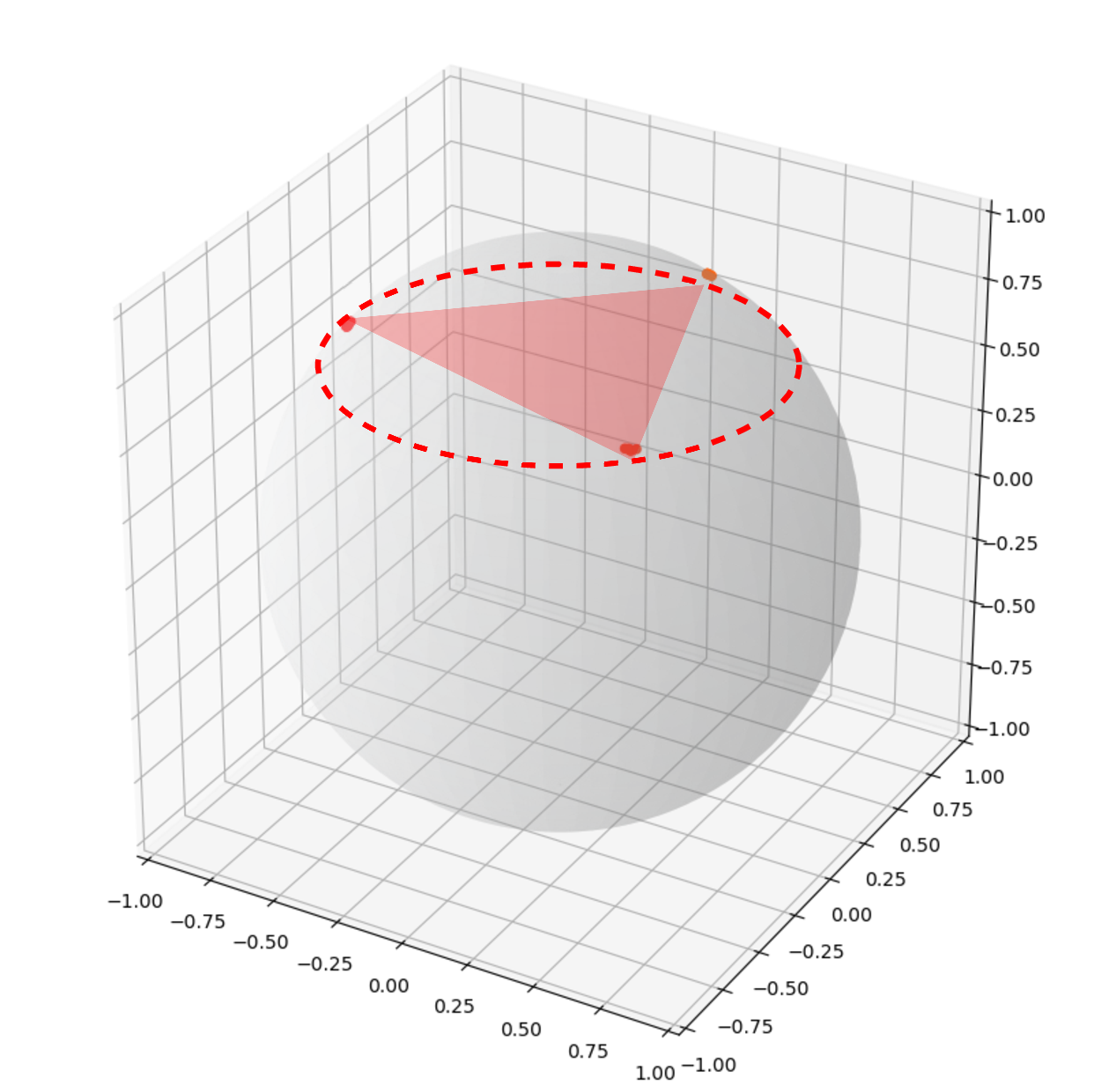}
    	\end{minipage}
	\label{fig:different_lamda_c}
    }
    \subfigure[threshold=0.9]{
    	\begin{minipage}[b]{0.13\textwidth}
   		\includegraphics[width=1\textwidth]{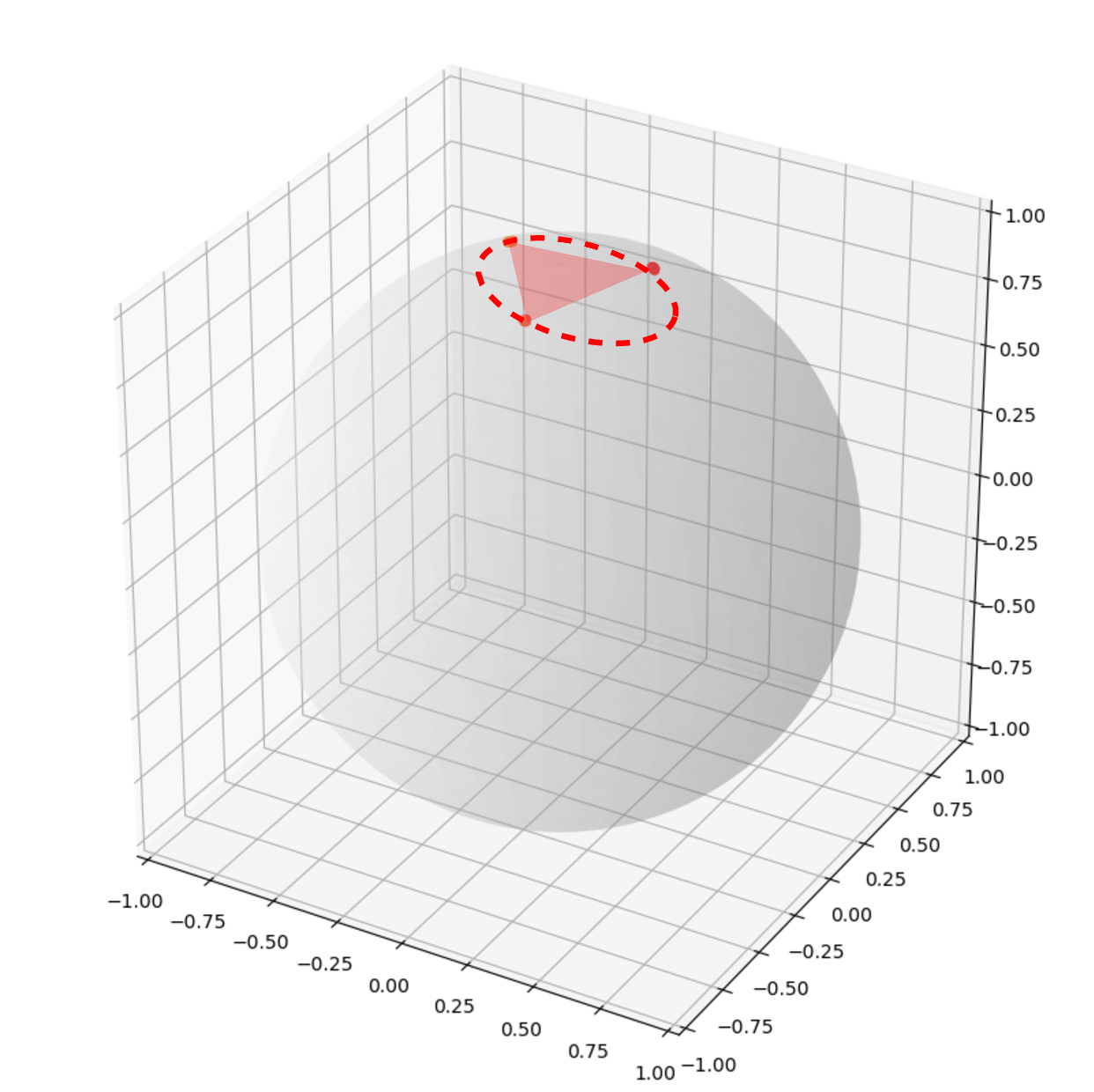}
    	\end{minipage}
	\label{fig:different_lamda_d}
    }
      \subfigure[$\mathcal{L}_{SupCon}$]{
    	\begin{minipage}[b]{0.13\textwidth}
   		\includegraphics[width=1\textwidth]{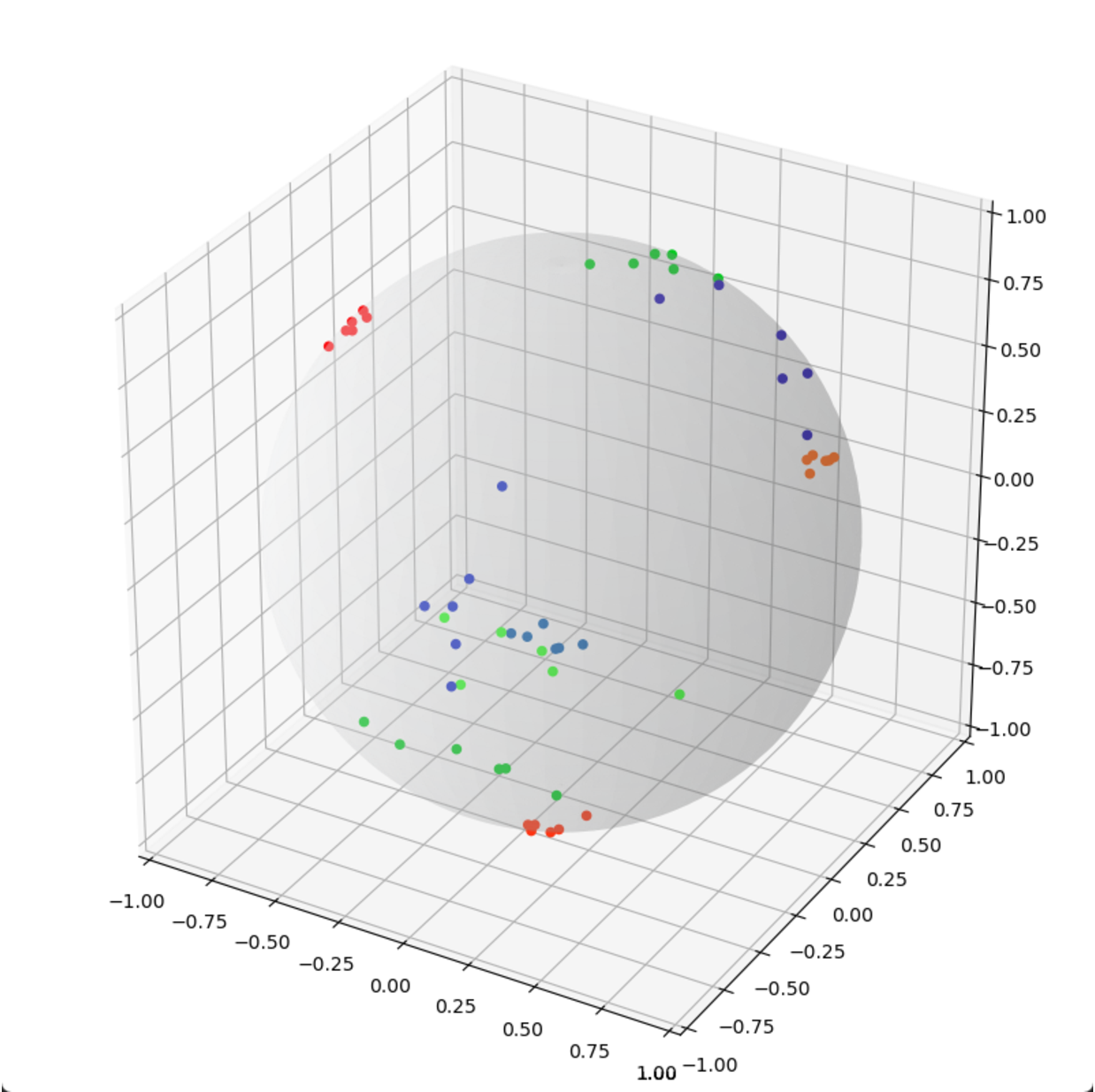}
    	\end{minipage}
	\label{fig:continual_3d_toy_example_e}
    }
    \subfigure[$\mathcal{L}_{R2SCL}$]{
    	\begin{minipage}[b]{0.13\textwidth}
   		\includegraphics[width=1\textwidth]{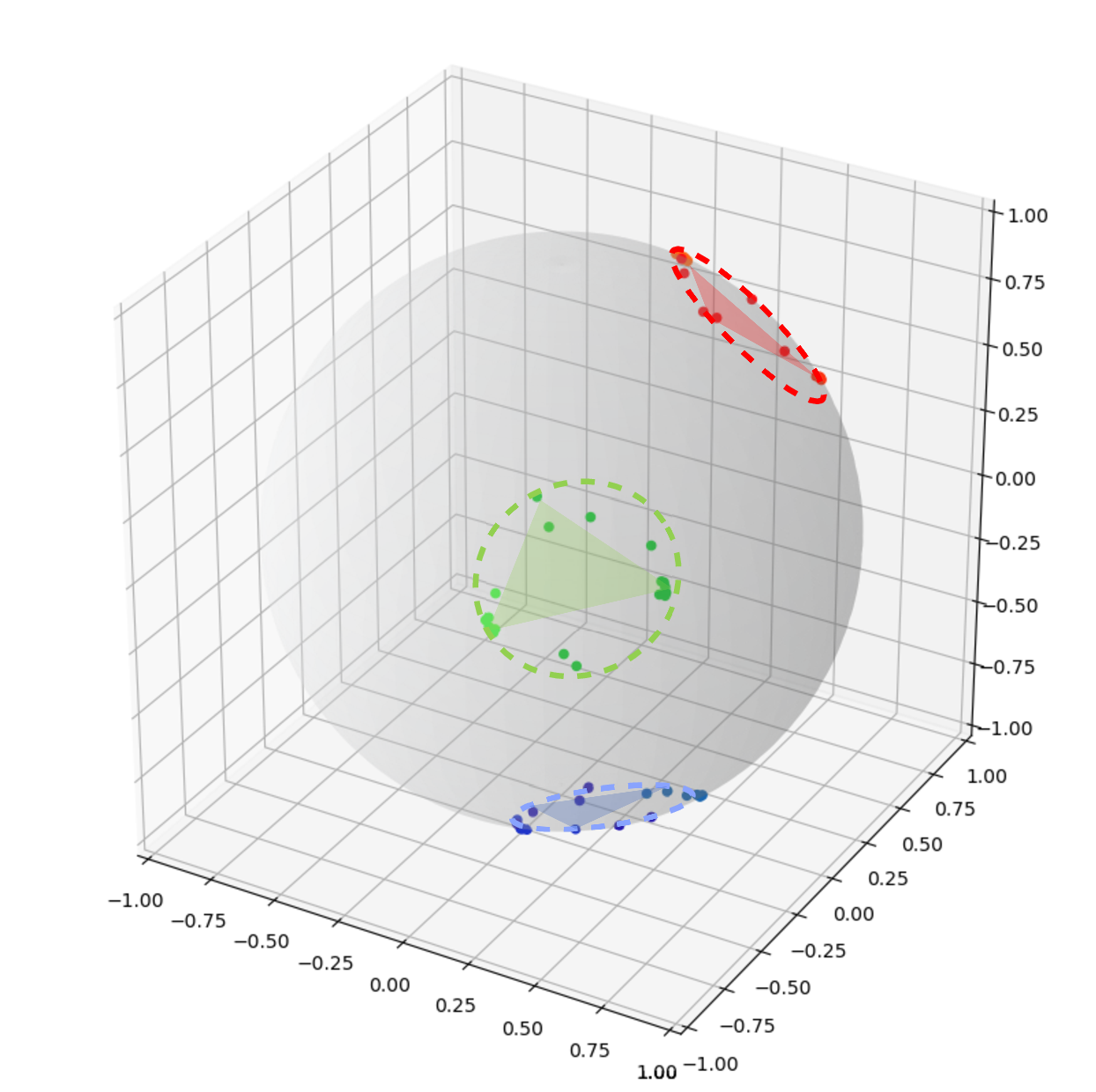}
    	\end{minipage}
	\label{fig:continual_3d_toy_example_f}
    }
	\caption{\textbf{A simple toy example that demonstrates the effectiveness of our method}.
    We use different colors to represent data from \textit{different tasks}. \textbf{(a,b,c,d):} Conventional SupCon versus ours in single task. The red triangles represent the final feature structure. \textbf{(e,f):} Different contrastive learning strategy in continual learning. 
    }
	\label{fig:continual_3d_toy_example}
\end{figure*}
\noindent\textbf{Global Pre-fixing:}
Merely intra-task discriminability is not sufficient to guarantee good performance in CIL. The inter-task performance is also significant. To some extent, the distance between task prototypes represents the inter-task separation. To maximize it, we design a task-level pre-allocated ETF, which is widely considered as the structure with the greatest separability\cite{fickus2018equiangular}.
We naturally assign the task prototypes to the vertices of the ETF, i.e. $P_{fix}^t=P_{ETF}^t$, where $P_{ETF}^t, t=1,...,T$ form a task-level ETF.

\noindent\textbf{Threshold Selection:} 

To better encourage more separation between adjacent task regions, we introduce $margin$ between the pre-allocated regions, as shown in \Cref{fig:threshold}.
This constrains the radius, $\sqrt{(1-\frac{1}{|\mcY|})(1-k)}$ , which is controlled by the lower bound of inter-task similarity $k$. We theoretically explain the condition that $k$ must satisfy when  $margin \ge 0$, i.e. $k \geq k_{min}$, where $k_{min}=1 - \frac{{|\mcY|}}{{|\mcY| - 1}}\sin {(\frac{1}{2}{\theta _{ETF}})^2}$. We provide a proof sketch in \Cref{fig:threshold}, with the complete proof included in the appendix. We can moderately increase the $margin$ to further improve separability.


\begin{figure}
	\centering
    \includegraphics[width=0.5\linewidth]{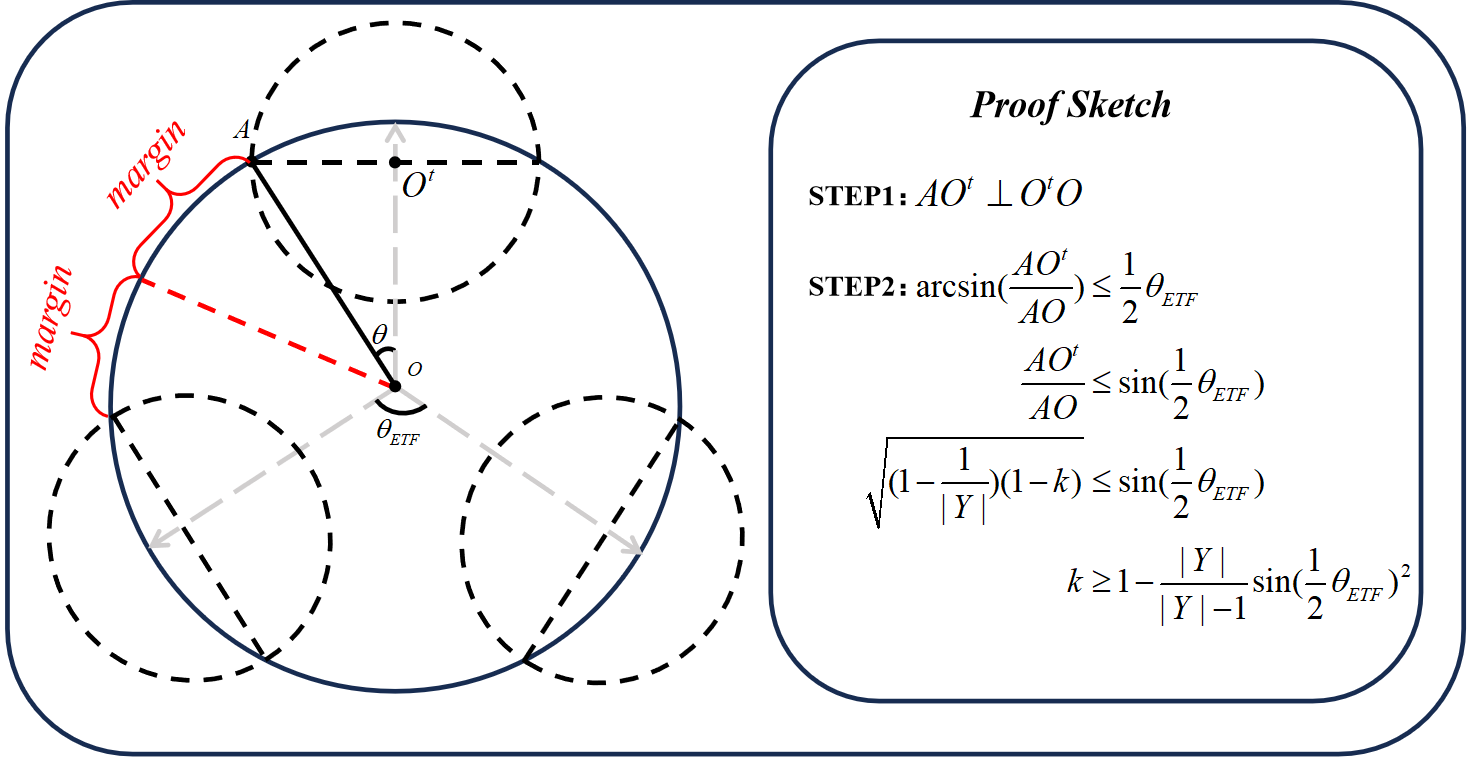}
	\caption{Illustration and proof sketch for threshold selection. $O^t$ represents the origin of the final inscribed hypersphere for task $t$. Because of the symmetry of the ETF, the threshold is globally shared.}
	\label{fig:threshold}
\end{figure}

\begin{equation}
   \begin{aligned}
    \label{eqn:thresholg_margin}
    k=(1-k_{min}) \cdot margin +k_{min}
    \nonumber
    \end{aligned} 
\end{equation}

where $ 0 \leq margin \leq 1$, representing additional gap introduced between regions.

\noindent\textbf{A Toy Example.}
We provide a toy example to show the effectiveness of our methods when there is no drift in the past features.

\noindent\textit{Toy Example Setting:} We follow the setting in \cite{graf2021dissecting}. Specifically, we used 3-dimensional data randomly sampled with Torch, which represents features $Z$ in deep model training. We apply the aforementioned \Cref{def:position_loss} to optimize $Z$ according to the gradients. In \Cref{fig:continual_3d_toy_example} (a, b, c, d), we present the final features for a single task with three classes, using different \textit{thresholds} and a fixed task prototype. In \Cref{fig:continual_3d_toy_example} (e, f), we first generate data for three tasks, each containing three classes. We use different colors to represent classes from different tasks and present the final features trained with SupCon (e) and ours (f) respectively.

\noindent\textit{Results:} 
As illustrated in \Cref{fig:continual_3d_toy_example}, we observe that SupCon suffers from features confusion, as it cannot spontaneously form a well-separated feature structure at the task level. In contrast, our methods have almost no overlap among tasks in our toy example. The reason is that our method can ensure both inter-task and intra-task well-separated feature structure.

\subsection{To Prevent Feature Drift: More Effective Feature-Level Distillation}
\label{sec:simple_distallation}

As tasks arrive continually, learned features will gradually drift away from their pre-fixed regions\cite{sdc}. Existing  contrastive  continual learning methods\cite{cha2021co2l,wen2024provable} usually adopt an instance relation distillation to prevent relative drift between features. To better prevent such feature drift in past tasks, we empirically explore which types of distillation methods are most effective in contrastive continual learning. 
We evaluate several commonly used distillation techniques\cite{bhat2023task,sarfraz2023error} in CL as well as their combinations, with results presented in \Cref{fig:distall_ablation}. Our findings show that incorporating MSE loss into existing contrastive continual learning frameworks yields the best performance. To minimize interference with the current task, we apply distillation using samples from the memory buffer rather than from the current training batch.
\begin{equation}
\scriptsize 
   \begin{aligned}
    \label{equ:mse_distall}
    \mathcal{L}_{distill}(D_M^{t-1};f_e^t,f_e^{t-1})\triangleq \lambda_{distill} \sum \nolimits_{x_i \in D_M^{t-1}} || f_e^t(x_i)-f_e^{t-1}(x_i)||^2
    \nonumber
    \end{aligned} 
\end{equation}

\noindent\textbf{Total Loss:}
\begin{equation}
   \begin{aligned}
\label{eqn:total}\mathcal{L}_{GPLASC}=\mathcal{L}_{range}+\mathcal{L}_{position}+\mathcal{L}_{distill}
\nonumber
    \end{aligned} 
\end{equation}

It addresses the challenges of CL from three aspects: $\mathcal{L}_{range}$ tackles \textit{intra-task feature confusion} by contrastive learning in the task-specific region, $\mathcal{L}_{position}$  tackles \textit{inter-task feature confusion} by pre-allocating regions and maximizing separability between region centers, and $\mathcal{L}_{distill}$  tackles \textit{forgetting} from feature level by introducing MSE loss.

%

\section{Experimental Setup}
\label{sec:experiments}
We follow most of the continual learning settings, model architecture, data preparation, and other training details from the past literature\cite{cha2021co2l,cclis,tiwari2022gcr}.

\noindent\textbf{Datasets}. 
We conduct experiments on Seq-CIFAR-10, Seq-CIFAR-100, and Seq-Tiny-ImageNet under both Task-Incremental Learning (TIL) and Class-Incremental Learning (CIL) settings. These datasets are widely adopted benchmarks in continual learning due to their increasing complexity.

\noindent\textbf{Seq-CIFAR-10} is derived from the CIFAR-10 dataset, which contains 60,000 $32 \times 32$ color images  distributed across 10 classes. Following the protocol in\cite{cclis}, we split the dataset into 5 disjoint tasks, each containing 2 classes. This dataset is relatively simple and is often used to validate the basic functionality of continual learning methods.

\noindent\textbf{Seq-CIFAR-100} is based on CIFAR-100 \cite{cifar}, which has 60,000 $32 \times 32$ images across 100 classes. It is divided into 5 tasks, with 20 classes per task. Compared to CIFAR-10, it introduces higher semantic overlap between classes, making it more challenging for feature discrimination and knowledge retention.

\noindent\textbf{Seq-Tiny-ImageNetis} built from the Tiny-ImageNet dataset, consisting of 100,000 images of resolution $64 \times 64$, spanning 200 object categories. Following prior work, we divide it into 10 sequential tasks, each containing 20 disjoint classes. This dataset poses greater challenges due to its higher visual diversity and larger class count.

All datasets are split in a class-incremental fashion and used consistently for both TIL and CIL experiments. The task sequences are fixed across all compared methods to ensure fair evaluation.

\noindent\textbf{Settings}. For the backbone, we use ResNet-18, which is a common choice in many previous methods\cite{cha2021co2l,cclis,tiwari2022gcr}. We adopt a two-stage training strategy, similar to \cite{cha2021co2l}: first, we train on the current task's data, then we freeze the feature extraction part of the model and train the classifier on both the current task and exemplar set. To avoid suffering from the class-imbalanced issue as much as possible, we follow the training approach in \cite{cha2021co2l}. In the following, we first assume that the specific number of tasks is also known, and in the ablation study, we explore the impact on performance when the task count is unknown and a sufficiently large task number is chosen instead.

\noindent\textbf{Evaluation}. For evaluation metrics, we use both \textit{Accuracy}  and \textit{Average Forgetting}. Details on hyperparameters tuning can be found in our appendix. Due to space limitations, additional experiments about \textit{Average Forgetting} and \textit{Accuracy} evaluation with memory sizes 500 is provided in the appendix.

%
\noindent\textbf{Baselines}. We select several classic baselines for comparison, ER\cite{er}, GEM \cite{agem}, A-GEM \cite{agem},iCaRL \cite{icarl}, FDR \cite{fdr} ,  DER \cite{der},DER++ \cite{der}. The choice of these baselines is consistent with prior studies\cite{cha2021co2l}. We also compared our approach with several recent works\cite{cha2021co2l,cclis,wen2024provable} that specifically focus on contrastive continual learning. 
All baseline results, except for \cite{wen2024provable}, are collected from \cite{cclis}.
Note that \cite{wen2024provable} do not have open source code. We try to reproduce the implementation ourselves.

\noindent\textbf{Results and Discussion}. We explore whether our method can learn high-quality embeddings to overcome feature confusion. In \Cref{tab:acc_result}, all existing contrastive continual learning methods show improved accuracy in CIL after integrating our approach.
Specifically, with 200 examples, we improved the CIL accuracy of Co2l by $\approx 5\% $ on CIFAR-10 and by $\approx 5\% $ on CIFAR-100. This improvement is significant because methods like Co2l achieve discriminative features between tasks only with insufficient negative samples, leading to poor inter-task separability. Our method simultaneously ensures discriminative feature structure both inter-task and intra-task. In addition to improving CIL performance, in most cases, our method also improves TIL performance. The reason is that although the intra-task feature region is constrained, feature can still converge to a local regular simplex within the allocated region. Meanwhile, the feature-level MSE loss prevent feature drift effectively as tasks sequentially coming.
\begingroup

A deeper analysis reveals that the efficacy of GPLASC hinges on the dataset's inherent characteristics, particularly the balance between the need for inter-task separation and the challenge of intra-task discrimination:

\begin{itemize}
\item When GPLASC Excels (e.g., CIFAR-10/100): GPLASC is most effective when the primary challenge is inter-task confusion (i.e., catastrophic forgetting). In scenarios with fewer classes per task (like CIFAR-10/100), our "Global Pre-fixing" strategy provides a strong, stable structure that clearly separates task representations. This structural prior is highly beneficial, as there is ample feature space for the "Local Adjusting" mechanism to organize the classes within each task.
\item When its Advantage is Less Pronounced (e.g., Tiny ImageNet): The advantage of GPLASC diminishes when intra-task complexity becomes the dominant bottleneck. On datasets like Tiny ImageNet with many difficult classes per task, the pre-allocated feature sub-region for each task becomes crowded. The rigidity of our global structure, which is an asset in simpler settings, becomes a constraint. It limits the model's flexibility to learn optimal representations for a large number of visually similar classes, a phenomenon we term "intra-task feature crowding."

\end{itemize}


\subsection{Generality and Compatibility with Mainstream Replay Methods}
\label{sub:replay_based}
To further investigate the generality and compatibility of our proposed GPLASC, we conducted a series of experiments to evaluate whether it can serve as a 'plug-and-play' module to enhance existing mainstream replay-based methods. This directly addresses the question of whether the benefits of GPLASC are complementary to established continual learning strategies.

\noindent\textbf{Experimental Setup}: We integrated GPLASC into three highly influential and competitive replay-based baselines:
\begin{itemize}
\item \textbf{Experience Replay (ER)}: The foundational replay algorithm.
\item \textbf{iCaRL}: A classic method that combines replay with a nearest-mean-of-exemplars classifier.
\item \textbf{DER}: A strong and simple continual learning baseline that mitigates forgetting by replaying both past samples and their corresponding model outputs (logits).

\end{itemize}

For each baseline, we augmented its original loss function with our GPLASC-regulated contrastive loss. This auxiliary loss was computed on the feature embeddings extracted from a combined batch, which included only samples from the current task. All other experimental settings, such as memory size, model architecture, and optimization hyperparameters, were kept identical to our main experiments for a fair comparison. We evaluated these augmented methods on the CIFAR-10, CIFAR-100, and Tiny-ImageNet benchmarks.

As shown in the \Cref{tab:er_baseline_}, augmenting the standard ER with GPLASC yields a notable accuracy gain of 2.7\% on CIFAR-100. More importantly, even a very strong baseline like DER benefits from the integration of GPLASC, achieving a performance boost of 1.9\% on CIFAR-100 and 1.9\% on Tiny-ImageNet. This synergistic effect highlights that our method's mechanism—enforcing a more structured and discriminative feature space—is complementary to existing replay strategies. While replay methods preserve knowledge by storing past samples, GPLASC ensures that the representations of both new and replayed samples are organized in a way that minimizes inter-task interference.

\begin{table*}[t]
\centering
\small 
\setlength{\tabcolsep}{3pt} 
\begin{tabularx}{\textwidth}{lXXXXXX} 
\toprule
\textbf{Method} & \textbf{CIFAR-10 (CIL)} & \textbf{CIFAR-10 (TIL)} & 
\textbf{CIFAR-100 (CIL)} & \textbf{CIFAR-100 (TIL)} & 
\textbf{Tiny-ImgNet (CIL)} & \textbf{Tiny-ImgNet (TIL)} \\
\midrule
ER              & 58.21±0.81 & 89.34±0.45 & 26.29±1.22 & 60.97±0.94 & 11.10±0.25 & 37.26±0.83 \\
\textbf{ER + Ours} & \textbf{61.51±0.75} & \textbf{90.64±0.41} & \textbf{29.79±1.10} & \textbf{63.47±0.88} & \textbf{12.80±0.35} & \textbf{40.06±0.79} \\
\midrule
iCaRL           & 32.44±0.93 & 74.59±1.24 & 28.00±0.91 & 51.43±1.47 & 5.50±0.52  & 22.89±1.83 \\
\textbf{iCaRL + Ours}& \textbf{35.94±1.02} & \textbf{75.39±1.21} & \textbf{32.80±1.15} & \textbf{53.73±1.55} & \textbf{5.65±0.78}  & \textbf{25.89±1.70} \\
\midrule
DER             & 63.69±2.35 & \textbf{91.91±0.51} & 31.23±1.38 & 63.09±1.09 & 13.22±0.92 & 42.27±0.90 \\
\textbf{DER + Ours} & \textbf{63.99±2.55} & 91.81±0.65 & \textbf{33.03±1.25} & \textbf{63.59±1.18} & \textbf{14.12±0.99} & \textbf{42.57±1.05} \\
\bottomrule
\end{tabularx}
\caption{Effectiveness of Our Method on Replay-Based Continual Learning Baselines}
\label{tab:er_baseline_}
\end{table*}

\begin{table*}[!t]
    \centering
    \small
    \begin{tabular}{cccccccc}
	\hline

\multirow{2}{*}
{\textbf{Buffer}}&\textbf{Dataset}&\multicolumn{2}{c}{\textbf{Seq-Cifar-10}}&\multicolumn{2}{c}{\textbf{Seq-Cifar-100}}&\multicolumn{2}{c}{\textbf{Seq-Tiny-ImageNet}}\\
            &\textbf{Scenario}&\textbf{Class-IL}&\textbf{Task-IL}&\textbf{Class-IL}&\textbf{Task-IL}&\textbf{Class-IL}&\textbf{Task-IL}\\
\hline
\multirow{10}{*}{200}&ER&49.16$\pm$2.08&91.92$\pm$1.01&21.78$\pm$0.48&60.19$\pm$1.01&8.65$\pm$0.16&38.83$\pm$1.15\\  
&iCaRL&32.44$\pm$0.93&74.59$\pm$1.24&28.0$\pm$0.91&51.43$\pm$1.47&5.5$\pm$0.52&22.89$\pm$1.83\\
    &GEM&29.99$\pm$3.92&88.67$\pm$1.76&20.75$\pm$0.66&58.84$\pm$1.00&-&-\\
    &GSS&38.62$\pm$3.59&90.0$\pm$1.58&19.42$\pm$0.29&55.38$\pm$1.34&8.57$\pm$0.13&31.77$\pm$1.34\\
    &DER&63.69$\pm$2.35&91.91$\pm$0.51&31.23$\pm$1.38&63.09$\pm$1.09&13.22$\pm$0.92&42.27$\pm$0.90\\
    &GCR&64.84$\pm$1.63&90.8$\pm$1.05&33.69$\pm$1.40&64.24$\pm$0.83&13.05$\pm$0.91&42.11$\pm$1.01\\
    
    \cmidrule[0.5pt]{2-8}
    &[ICCV, 2021] Co2L&65.57$\pm$1.37&93.43$\pm$0.78&27.73$\pm$0.54&54.33$\pm$0.36&13.88$\pm$0.40&42.37$\pm$0.74\\
    
    &Co2L + Ours&\textbf{70.59$\pm$1.26}&\textbf{95.55$\pm$0.37 }&\textbf{32.48$\pm$1.08}&\textbf{62.01$\pm$1.24}&\textbf{14.30$\pm$0.51}&\textbf{44.53$\pm$0.83}\\
    \cmidrule[0.5pt]{2-8}
    &[ICML, 2024] CILA&67.06$\pm$1.59&94.29$\pm$0.24&30.18$\pm$0.39&58.19$\pm$0.28&14.55$\pm$0.39&\textbf{44.15$\pm$0.70}\\

    &CILA + Ours&\textbf{71.8$\pm$1.35}&\textbf{96.41$\pm$0.86}&\textbf{33.13$\pm$1.34}&\textbf{62.13$\pm$1.88}&\textbf{15.51$\pm$0.47}&44.12$\pm$0.65\\

    \cmidrule[0.5pt]{2-8}
    &[AAAI, 2024] CCLIS&74.95$\pm$0.61&96.20$\pm$0.26&42.39$\pm$0.37&\textbf{72.93$\pm$0.46}&16.13$\pm$0.19&48.29$\pm$0.78\\

    &CCLIS + Ours&\textbf{76.33$\pm$1.14}&\textbf{96.73
$\pm$0.48}&\textbf{44.48$\pm$1.27}&72.91$\pm$1.45&\textbf{17.16$\pm$0.34}&\textbf{48.80$\pm$0.68} \\
\hline
\multirow{10}{*}{500}&ER&62.03$\pm$1.70&93.82$\pm$0.41&27.66$\pm$0.61&66.23$\pm$1.52&10.05$\pm$0.28&47.86$\pm$0.87\\
    &iCaRL&34.95$\pm$1.23&75.63$\pm$1.42&33.25$\pm$1.25&58.16$\pm$1.76&11.0$\pm$0.55&35.86$\pm$1.07\\
    &GEM&29.45$\pm$5.64&92.33$\pm$0.80&25.54$\pm$0.65&66.31$\pm$0.86&-&-\\
    &DER&72.15$\pm$1.31&93.96$\pm$0.37&41.36$\pm$1.76&71.73$\pm$0.74&19.05$\pm$1.32&53.32$\pm$0.92\\

    &GCR&74.69$\pm$0.80&94.44$\pm$0.32&45.91$\pm$1.30&71.64$\pm$2.10&19.66$\pm$0.68&52.99$\pm$0.89\\

\cmidrule[0.5pt]{2-8}
    &Co2L&74.26$\pm$0.77&95.90$\pm$0.26&36.39$\pm$0.31&63.97$\pm$0.42&20.12$\pm$0.42&53.04$\pm$0.69\\

    &Co2L + Ours&\textbf{77.24$\pm$0.54}&\textbf{96.8$\pm$0.38}&\textbf{40.09$\pm$1.23}&\textbf{67.21$\pm$0.25}&\textbf{22.34$\pm$0.34}&\textbf{53.30$\pm$0.86}\\

\cmidrule[0.5pt]{2-8}

    &CILA&77.03$\pm$0.79&96.40$\pm$0.21&37.06$\pm$0.84&66.48$\pm$0.26&20.64$\pm$0.59&54.13$\pm$0.72\\

    &CILA + Ours&\textbf{79.03$\pm$1.12}&\textbf{96.12$\pm$0.66}&\textbf{40.76$\pm$1.28}&\textbf{63.8$\pm$0.52}&\textbf{21.35$\pm$0.47}&\textbf{54.13$\pm$0.35}\\

\cmidrule[0.5pt]{2-8}
&CCLIS&78.57$\pm$0.25&96.18$\pm$0.43&46.08$\pm$0.67&\textbf{74.51$\pm$0.38}&22.88$\pm$0.40&57.04$\pm$0.43\\
    &CCLIS + Ours&\textbf{79.08$\pm$1.31}&\textbf{96.30$\pm$0.11}&\textbf{47.38$\pm$1.31}&73.35$\pm$0.89&\textbf{23.62$\pm$0.64}&\textbf{57.12$\pm$0.59}\\    
\hline
\end{tabular}

    \caption{Class-IL and Task-IL Continual Learning. We report our performance and the results of rehearsal-based baselines on Seq-Cifar-10, Seq-Cifar-100 and Seq-Tiny-ImageNet with memory sizes 200 and 500, all of which are averaged across five independent trails.
    }
\label{tab:acc_result}
\end{table*}

\subsection{Evaluation on Challenging Benchmarks to Verify Generalizability and Robustness}
\label{sub:eval_cub}
\begin{table*}
    \centering
    \small
    \begin{tabular}{cccccccc}
    \hline
    \multirow{2}{*}
    {\textbf{Buffer}}&\textbf{Dataset}&\multicolumn{2}{c}{\textbf{CUB200}}&\multicolumn{2}{c}{\textbf{Imagenet-R}}&\multicolumn{2}{c}{\textbf{Imagenet-A}}\\
    &\textbf{Scenario}&\textbf{Class-IL}&\textbf{Task-IL}&\textbf{Class-IL}&\textbf{Task-IL}&\textbf{Class-IL}&\textbf{Task-IL}\\
    \hline
    \multirow{2}{*}{200}
    &Co2l
    &8.21±1.37 &19.35±0.63 &9.33±0.79 &12.67±1.93 &\textbf{7.8±1.13} &9.4±0.75  
    \\ 
    
    &Co2l + ours
    &\textbf{10.20±0.92} &\textbf{20.81±1.74} &\textbf{11.20±1.99} &\textbf{14.27±1.99} &7.2±1.59 &\textbf{10.3±1.12}
    \\ 
    \hline
    \multirow{2}{*}{500}
      &Co2l
    &14.52±1.85 &25.71±1.51 &12.34±1.66 &20.32±0.73 &\textbf{11.2±1.97} &\textbf{13.5±2.12}
    \\ 
    
    &Co2l + ours
    &\textbf{15.32±1.20}	&\textbf{27.32±1.12}	&\textbf{14.94±1.13}	&\textbf{22.21±0.98}	&10.5±2.83	&12.9±1.21
    \\ 
    \hline
    \end{tabular}
    \caption{Class-IL and Task-IL Continual Learning. We report our performance and the results of rehearsal-based baselines on CUB200, Imagenet-R and ImageNet-A with memory sizes 200 and 500, all of which are averaged across five independent trails}
    \label{tab:cub200_imagenet_r_a}
\end{table*}

\textbf{Datasets}. We selected the following three datasets to rigorously test specific capabilities of our model:

\begin{itemize}
    \item CUB-200-2011 (CUB-200) \cite{cub200}: A challenging fine-grained visual classification dataset containing 11,788 images of 200 bird species. This benchmark evaluates the model's ability to learn subtle and nuanced distinctions between visually similar classes, a critical aspect for real-world applications. We divide the dataset into five tasks, each containing 40 classes.
    \item ImageNet-R (Renditions) \cite{img_r}: A dataset comprising 30,000 images of 200 ImageNet classes with various artistic renditions (e.g., cartoons, graffiti, paintings). We divide the dataset into five tasks, each containing 40 classes.
    \item ImageNet-A (Adversarial) \cite{img_a}: A collection of real-world, unmodified, and naturally adversarial examples that cause significant performance degradation in state-of-the-art classifiers. Success on this dataset indicates strong model robustness against challenging, real-world perturbations. We divide the dataset into five tasks, each containing 40 classes.
\end{itemize}

\noindent\textbf{Implementation Details}. A crucial aspect of our evaluation is the training protocol. To rigorously assess the intrinsic learning capability of our method, all models were trained entirely from scratch, without relying on any pre-trained backbones (e.g., pre-trained on ImageNet). This represents a significantly more challenging and realistic continual learning setting, as the model must build meaningful representations from the ground up.
Due to the substantial computational and memory demands of training from scratch on these large-scale datasets, particularly when maintaining a rehearsal buffer (we used a buffer of 200 samples), we adopted an image resolution of 224*224 for all experiments in this section. To ensure a fair and direct comparison, we subjected our strong baseline, Co2L \cite{cha2021co2l}, to the exact same demanding conditions, including the from-scratch training protocol and the 224*224 input resolution. We used a ResNet-18 as the backbone for all runs.

\noindent\textbf{Results}. As shown in \Cref{tab:cub200_imagenet_r_a}, our method outperforms existing approaches across most benchmarks.
\begin{itemize}
\item On CUB-200, the superior performance of our method underscores its advanced capability to learn and preserve fine-grained features in a continual setting, which is a known difficulty for existing methods.
\item On ImageNet-R, our method achieves significantly better performance compared to existing approaches, highlighting its strong robustness and generalization under domain shifts. 
\item However, the performance drops notably on the more challenging ImageNet-A dataset, suggesting that our approach still struggles with natural adversarial examples.We identify two potential causes for the performance degradation of our method on ImageNet-A: 1.(Data Scarcity and Overfitting) Each task contains an insufficient number of training samples, which elevates the risk of the model overfitting to the limited data. 2.(Inherent Task Difficulty) The task of learning discriminative features within ImageNet-A is a significant challenge. We argue that this intra-task difficulty is the primary limiting factor for its continual learning performance.
\end{itemize}

\endgroup

\section{Ablation}

\subsection{Effectiveness of Components}
To validate the effectiveness of our methods, we introduce three variants of GPLASC: Without $\mathcal{L}_{R2SCL}$ 
 , where we train only with baseline Co2l and $\mathcal{L}_{distill}$; with $\mathcal{L}_{R2SCL}$ only, where we optimize the model without $\mathcal{L}_{distill}$; baseline Co2l. We compare our methods with the three variants in Seq-Cifar-10 with 200 buffered samples and show the results in \Cref{tab:ablation_components}. 
\begin{table}
    \centering
    \small
    \caption{Ablation study of R2SCL and feature-level distillation. We train our model on the Seq-CIFAR-10 dataset with 200 buffered samples under CIL and TIL scenario to explore the effectiveness of components.}
    \label{tab:ablation_components}
    \begin{tabular}{lcccc}
        \toprule
        & \textbf{Distillation} & \textbf{R2SCL} & \textbf{Acc (CIL)} & \textbf{Acc (TIL)} \\
        \midrule
        Co2l                        & \ding{55} & \ding{55} & 65.57 & 93.43 \\
        w/ $\mathcal{L}_{distill}$ only 
                                    & \ding{51} & \ding{55} & 67.86 & 94.12 \\
        w/ $\mathcal{L}_{R2SCL}$ only 
                                    & \ding{55} & \ding{51} & 67.31 & 94.68 \\
        w/ $\mathcal{L}_{R2SCL}$+$\mathcal{L}_{distill}$ 
                                    & \ding{51} & \ding{51} & \textbf{70.59} & \textbf{95.55} \\
        \bottomrule
    \end{tabular}
\end{table}

Experimental results show that only incorporating feature-level distillation or R2SCL can improve baseline performance. However, combining both components yields the greatest performance enhancement. 
The reason is that our method adopts a \textit{global pre-fixing, local adjusting} strategy, which \textit{simultaneously} ensures discriminative feature structure both among tasks and within tasks.
Meanwhile, by introducing an additional feature-level MSE loss, the model can better prevent feature drift as tasks arrive continually.



\begin{figure}[!h]
    \centering
    \includegraphics[width=0.5\linewidth]{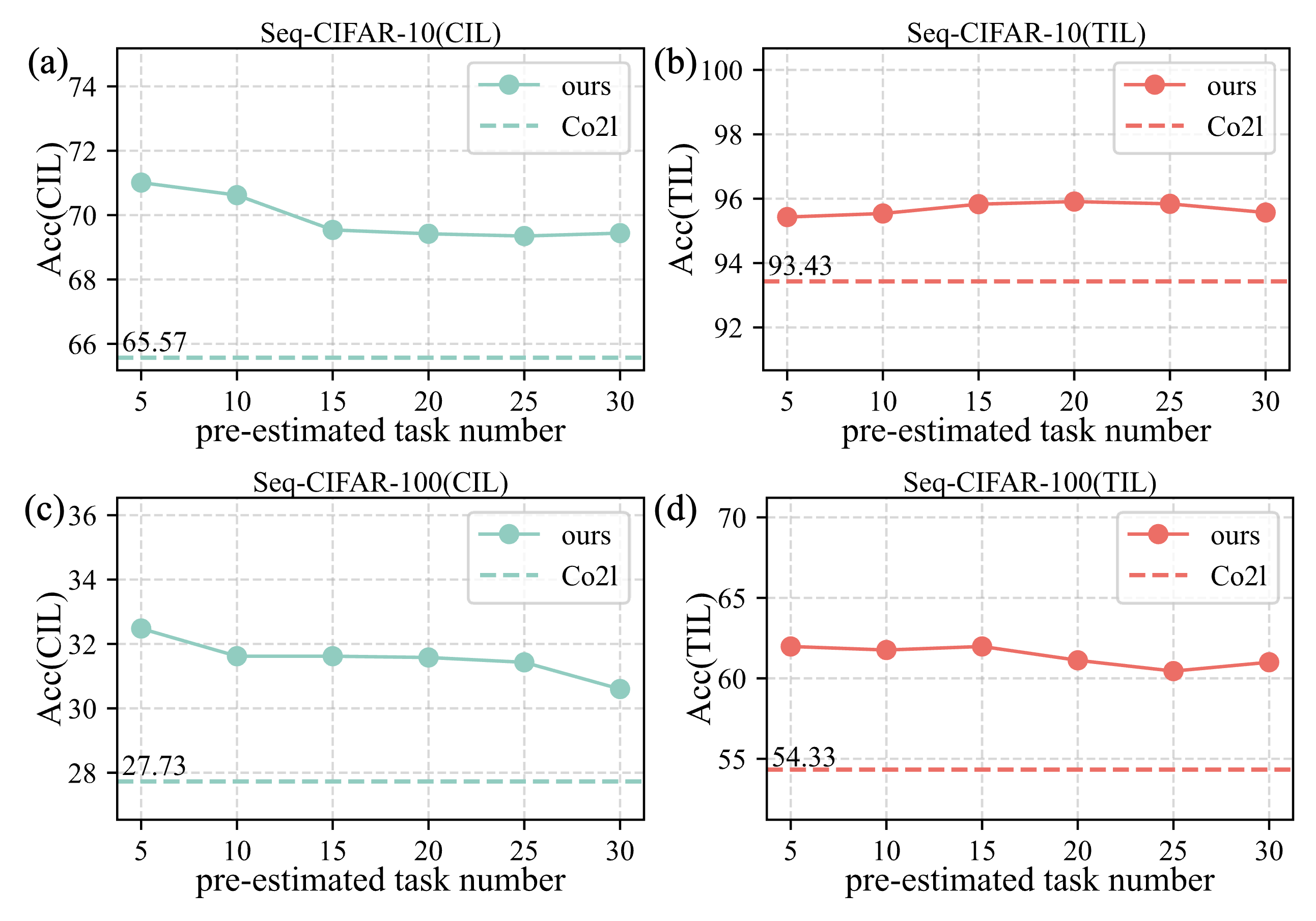}
    \caption{Performance with the different pre-estimated task numbers
    under CIL and TIL scenarios in Seq-CIFAR-10 and Seq-CIFAR-100.
    (a) CIL performance in Seq-CIFAR-10. 
    (b) TIL performance in Seq-CIFAR-10.
    (c) CIL performance in Seq-CIFAR-100.
    (d) TIL performance in Seq-CIFAR-100.
    }
    \label{fig:task_number_sensitivity}
\end{figure}

\subsection{Effectiveness of various Distillation Methods}
\label{sec:distillation_figs}
    On CIFAR-10, we select the standard SupCon as our baseline with 200 exemplars and apply our proposed R2SCL, as well as different distillation techniques:
    \textit{Projector}( Following \cite{shiprospective}, we use a simple under-complete autoencoder as projector. It consists of a linear layer followed by ReLU activation that maps the features to a low-dimensional subspace and another linear layer followed by sigmoid activation that maps the features back to high dimensions), \textit{IRD}(The distillation method used in \cite{graf2021dissecting} mainly involves distilling the relationships between positive and negative sample pairs),
    \textit{MSE}(For the same sample, the Mean Squared Error (MSE) is calculated between the features from the current and past encoders),
    \textit{IRD+MSE},
    \textit{IRD+Projector}.

\begin{figure}[H]
    \centering
    \includegraphics[width=0.5\linewidth]{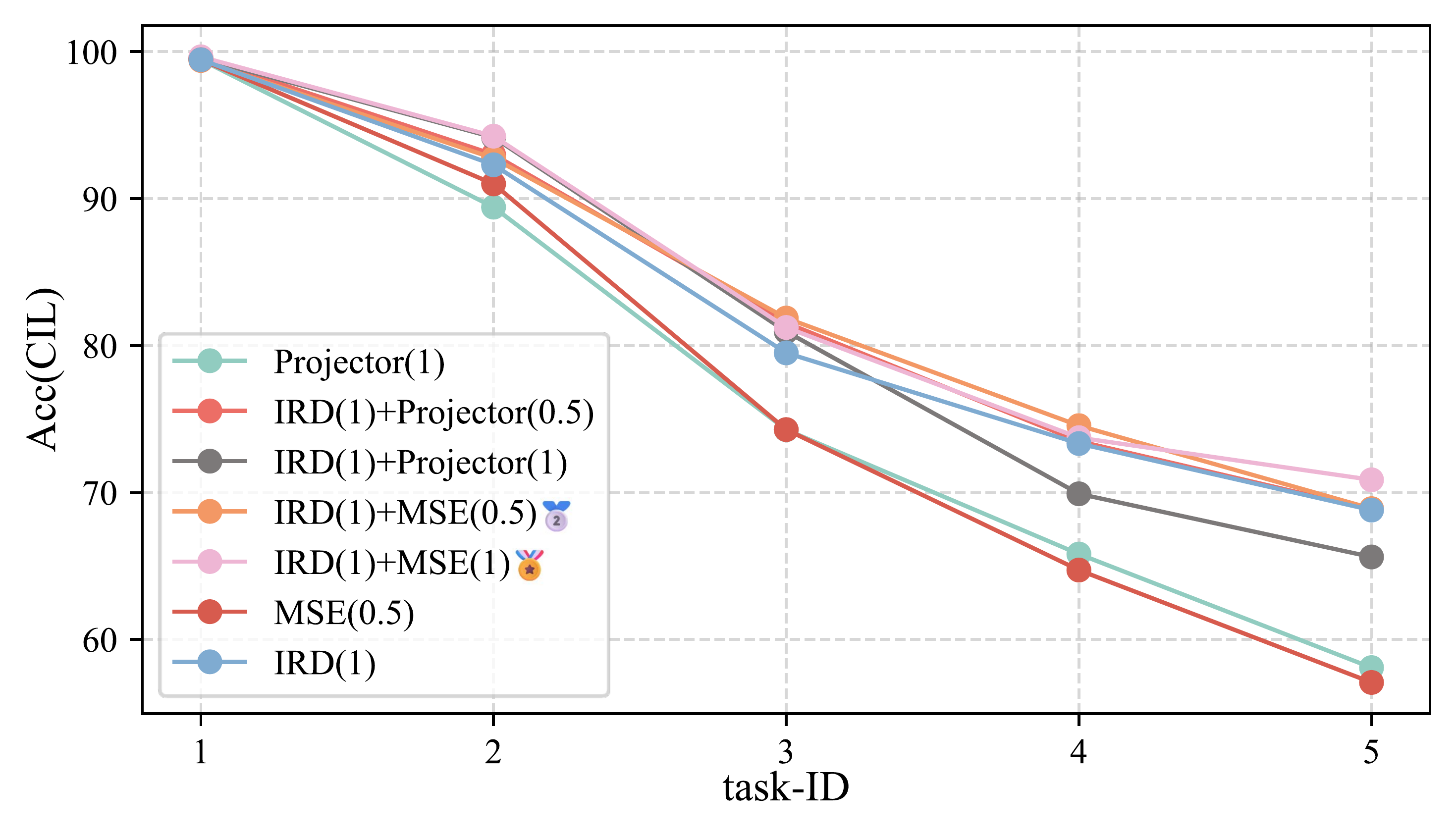}
    \caption{Performance with the different distillation methods and distillation power shown in $(\cdot)$ under CIL scenario in Seq-CIFAR-10. IRD(1)+MSE(1) stands out.}
    \label{fig:distall_ablation}
\end{figure}

Existing contrastive continual learning methods typically use \textit{IRD} as the distillation approach. As shown in \Cref{fig:distall_ablation}, \textit{IRD} demonstrates excellent performance to prevent feature drift. The reason is that \textit{IRD} distills the relative relationships between features, ensuring that the drift in relative positions is minimized. The combination of \textit{IRD} and \textit{MSE} achieved even better performance. The reason is that feature-level MSE distillation helps reduce the absolute drift of features. Our method, utilizing both types of distillation, minimizes feature drifting from their pre-fixed regions. 

\subsection{Effectiveness of Pre-Estimated Task Number}

We conduct experiments on Seq-CIFAR-10 and Seq-CIFAR-100 with Co2l to explore the impact of different pre-estimated task numbers. 
\textbf{Intuitively, a large number of pre-estimated tasks may compress the optimization space of features within tasks (similarly for CIL). However, a compressed feature space does not necessarily lead to a decline in discriminability — that is, it is still possible to form clear decision boundaries between classes within a task.}
Our results shown in \Cref{fig:task_number_sensitivity} indicate that larger pre-estimated task numbers do not significantly affect CIL and TIL performance. 
For TIL, performance remains almost unchanged. 
The underlying reason is that although our method restricts regions for individual tasks, it can still form a discriminative feature structure.


\begin{figure}[H]
    \centering
    \includegraphics[width=0.5\linewidth]{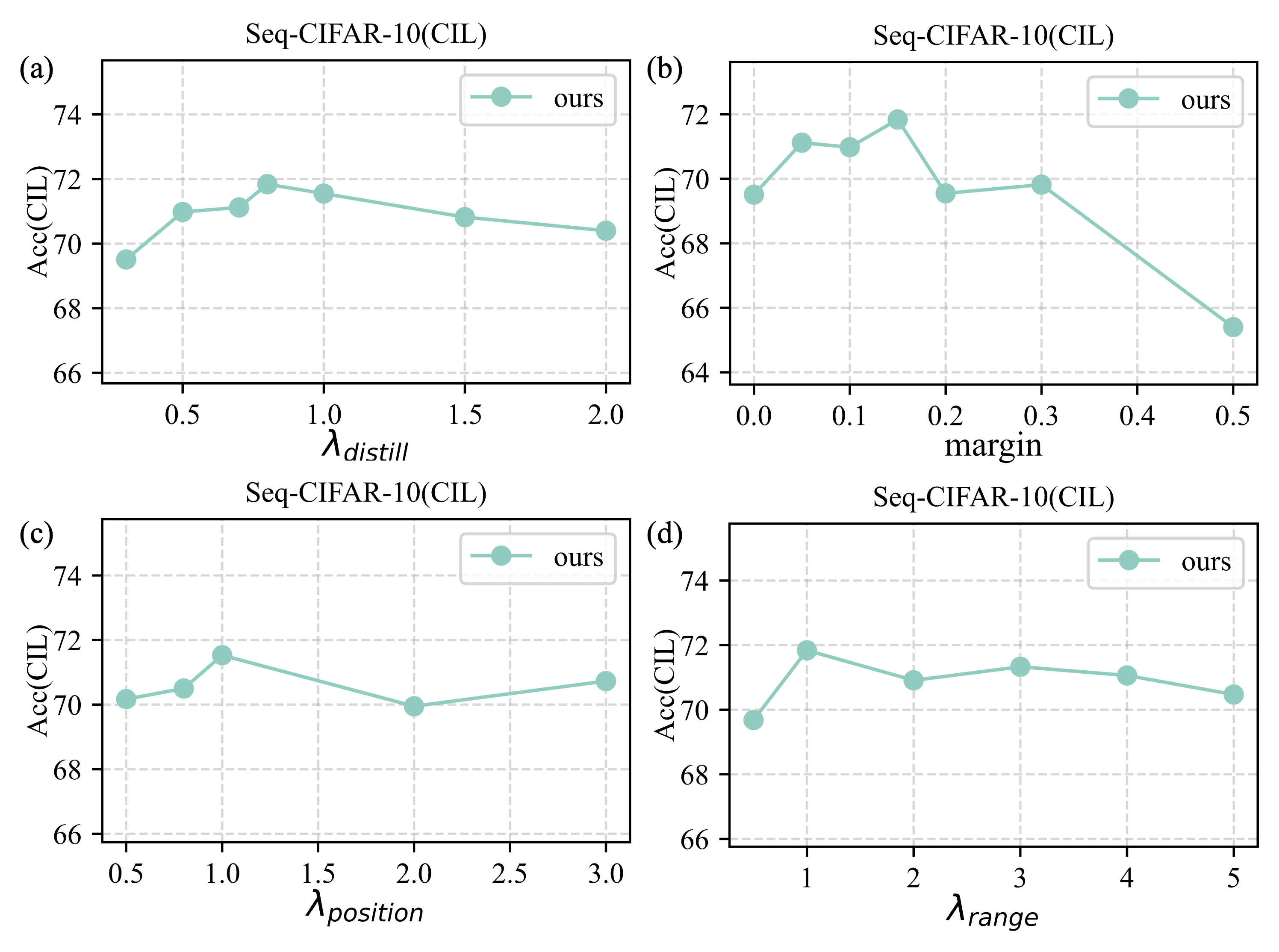}
    \caption{
        Sensitivity analysis is performed on four hyperparameters: 
        $\lambda_{position}$, $\lambda_{distill}$, $\lambda_{range}$, \textit{margin}. 
        (a) CIL performance on CIFAR-10 w.r.t. $\lambda_{position}$.
        (b) CIL performance on CIFAR-10 w.r.t. $\lambda_{distill}$.
        (c) CIL performance on CIFAR-10 w.r.t. $\lambda_{range}$.
        (d) CIL performance on CIFAR-10 w.r.t. \textit{margin}.
    }
    \label{fig:simple_sensitivity}
\end{figure}

\subsection{Simple Sensitivity Analysis}
Our method includes four hyperparameters:$\lambda_{position}$,$ \lambda_{distill}$,
$\lambda_{range}$, \textit{margin}. To facilitate comparisons, we apply our methods to Co2l with 200 examplars. The results are shown in \Cref{fig:simple_sensitivity}.Our approach consistently delivers robust outcomes across diverse hyperparameter values. Among the four hyperparameters, the \textit{margin} has a relatively larger impact on the results. The margin that is too small fails to ensure good inter-task separability, while the margin that is too large results in limited optimization regions within each task, thereby affecting the performance within tasks. Our empirical results show that a margin in the range of 0.1 to 0.15 ensures better performance in CIL.

\subsection{Additional Ablation Experiments}

\begin{itemize}
    \item (\Cref{appendix:additional_overlapping}) Feature overlap heatmap.
    \item (\Cref{sub:buffersize}) Ablation study with different buffer sizes.
    \item (\Cref{sub:tsne}) T-SNE visualization of features that demonstrate inter-task performance of our methods. 
    \item (\Cref{sub:sensitivity}) Extensive sensitivity analysis with different datasets in CIL and TIL scenarios.
    \item (\Cref{sec:threshold_loss}) Effectiveness of different loss function to regulate inter-class similarity.
    \item (\Cref{sec:time}) Analysis of the time cost.
\end{itemize}

\section{Conclusion, Limitations and Future Work}
\label{sec:conclusion}

In our work, we propose the \textit{global pre-fixing, local adjusting} strategy to tackle the feature confusion for contrastive continual learning. 
The strategy includes two steps: \textit{Global pre-fixing} pre-allocates non-overlapping regions for different tasks to avoid inter-task feature confusion. \textit{Local adjusting} optimizes the features localizing in pre-allocated region to mitigate intra-task feature confusion.
Our method achieves consistent performance improvement when integrated with existing contrastive continuous learning methods. However, our method still has several limitations: (1) Improvement in more complex datasets, such as TinyImageNet, is not so significant. (2) The threshold involved requires further selection. In the future, we hope to address these issues and explore their applications in online continual learning, self-supervised learning, and other broader real-world scenarios.



\section{Appendixes}
\subsection{Related Work}
\label{sec:related}

\subsubsection{Continual Learning}
The primary goal of continual learning is to enable machines to accumulate knowledge across multiple continuous tasks while minimizing the problem of forgetting. Existing methods can be categorized into three types. \textbf{Regularization-based methods}\cite{schwarz2018progress,zenke2017continual,aljundi2018memory,chaudhry2018riemannian,benzing2022unifying,liu2018rotate} use various kinds of regularization and constraints to preserve the knowledge of previous tasks. These methods can be broadly divided into two subcategories:
\textit{Parameter Weights Regularizations}\cite{schwarz2018progress,zenke2017continual,aljundi2018memory,chaudhry2018riemannian,benzing2022unifying,liu2018rotate,lee2020continual,park2019continual} estimate the importance of each parameter, such as ewc\cite{ewc} uses FIM\cite{benzing2022unifying}. And then they limit updates to the important ones.
\textit{Model-level Regularizations} \cite{li2017learning,ranne2017encoder,castro2018end} distill the knowledge from previous models (teachers) into current models (students) . The distilled knowledge can take the form of logits \cite{li2017learning,icarl}, features \cite{cha2021co2l,cassle,cclis,wen2024provable}, etc.  
\textbf{Replay-based methods} \cite{vitter1985random,lopez2017gradient,caccia2020online} integrate a small subset of past data into the current task to maintain performance. 
Some of them\cite{chaudhry2019tiny,riemer2018learning,vitter1985random,lopez2017gradient} directly store data from the past. \cite{kulesza2012determinantal,bang2021rainbow,kumari2022retrospective} explore how to store data to get better performance. Some of them\cite{shin2017continual,wu2018memory,ostapenko2019learning,xiang2019incremental} replay data through generative models. 
\textbf{Architecture-Based methods}\cite{wortsman2020supermasks,xue2022meta,kang2022forget} allocate different parameters for different tasks or introduce new model parameters\cite{du2023efficient,loo2020generalized,douillard2022dytox} for current task. 
For example, \cite{yan2021dynamically} propose DER, a two-stage approach that freezes the previously learned representation and augments it with additional feature dimensions for novel classes. In a similar vein, \cite{zhou2022model} present MEMO, a memory-efficient expandable model that adds specialized layers on top of a shared backbone to diversify representations with minimal memory cost. More recent studies leverage large pre-trained backbones: frozen pre-trained models already provide highly generalizable embeddings, and introducing adaptive tuning or modules (as in the APER framework) can further improve incremental plasticity and adaptivity\cite{zhou2025revisiting}. Together, these methods illustrate how dynamic architecture expansion and pretrained features help continual learners adapt to new classes while mitigating forgetting.
Most existing methods focus on classification performance, treating the model's encoder and classifier as a single unit, and applying the aforementioned strategies individually or in combination.

\subsubsection{Contrastive Continual Learning}

Existing methods primarily utilize contrastive loss in CL from two perspectives: (1) Some approaches exploit the transferability of features learned through contrastive loss to enhance CL performance. For example, \cite{cha2021co2l} combines supvised contrastive learning and relation knowledge distillation, achieving significant performance improvements. \cite{wen2024provable} provides a theoretical explanation for such performance enhancement\cite{cha2021co2l}. Based on \cite{cha2021co2l}, \cite{cclis} estimate previous task distributions via importance sampling with a weighted buffer during training. (2) Other approaches focus on incrementally building representations by self-supervised learning. These approaches have gained significant attention in recent years due to the widespread applications of self-supervised training in the large vision model. And it is impractical for a model to be trained on nearly infinite data all at once. The model must be able to learn continuously. In our work, we focus on the former. So far, most methods utilizing contrastive loss primarily focus on preventing feature forgetting. However, we argue that merely avoiding forgetting is insufficient for contrastive continual learning. More importantly, learning discriminative features across tasks is critical for aligning with the oracle model.

\renewcommand{\thefigure}{A\arabic{figure}}
\renewcommand{\thetable}{A\arabic{table}}
\setcounter{figure}{0}
\setcounter{table}{0}

\subsection{Details}
\subsubsection{Algorithm}
\label{sec:algorithm}
In the following, we detail the overall algorithm. Our algorithm can be integrated into any existing contrastive continual learning algorithms. We select \cite{cha2021co2l} as our baseline and highlight the additional steps of our approach using red colors. 
\begin{algorithm}[h]
\caption{Co2l+GPLASC}
\begin{algorithmic}[1]
\label{alg:algrithom_overview}

\REQUIRE Buffer size $B$, 
a sequence of training sets $\{D^t\}^T_{t=1}$.

\STATE  Initialize model $f_0$ and set buffer $\mathcal{M}\gets \emptyset$;
\STATE \textcolor{red}{
Initialize inter-task ETF with point set $\{P_{ETF}^1,P_{ETF}^2...P_{ETF}^T\}$;
}

\FOR{task $t = 1, \cdots, T$}

\STATE Construct dataset ${D} \gets D_t \cup \mathcal{M}$;
\STATE Initialize model $f^t\gets f^{t-1}$;

\STATE Compute ${\mathcal{L}}$ by ${\mathcal{L}} \gets \mathcal{L}_{Co2l}(f;B)$;
\\

\STATE \textcolor{red}{
Compute ${\mathcal{L}}$ by ${\mathcal{L}} \gets {\mathcal{L}} + \mathcal{L}_{range}+\mathcal{L}_{position}$;
} \\
\IF {$t > 1 $}
\STATE Update $ \mathcal{L}$ by \\ $ \mathcal{L} \gets  {\mathcal{L}} + \textcolor{red}{\mathcal{L}_{distill}}$;
\ENDIF

\STATE Update 
$f_{t}$ by SGD;

\STATE Collect buffer samples until $|\mathcal{M}|=B$;
\ENDFOR
\end{algorithmic}
\end{algorithm}
\vspace{-0.5em}

\subsubsection{Implementation details}
\label{sec:implementation}

We divide the training process into two stages. We use ResNet-18 (not pre-trained) as the base encoder for \textit{feature} learning, followed by a two-layer projection MLP that maps representations to a 128-dimensional \textit{latent feature} space. The hidden layer of the MLP projection consists of 512 hidden units.
During the representation learning phase, we apply a linear warm-up for the initial 10 epochs, followed by a cosine decay schedule for the learning rate, as proposed by \cite{loshchilov2016sgdr}. Training employs Stochastic Gradient Descent (SGD) with a momentum of 0.9 and a weight decay of 0.0001 across all trials.
For classifier training, we randomly select a class from the final period and uniformly sample an instance from that class. The linear classifier is trained over 100 epochs using SGD with a momentum of 0.9, omitting weight decay in this stage. The learning rate decays exponentially at the 60th, 75th, and 90th epochs, with decay rates of 0.2. We use learning rates of {0.5, 0.1, 0.05} for datasets {Seq-CIFAR-10, Seq-CIFAR-100, Seq-Tiny-ImageNet}, respectively. These training configurations are consistent with \cite{cha2021co2l}, ensuring a fair comparison.
\textbf{To obtain more transferable contrastive features, we calculate contrastive loss using \textit{latent features} and apply our method, including $\mathcal{L}_{R2SCL}$ and $\mathcal{L}_{distill}$, to the feature space before MLP.} Further explanation is provided in \Cref{sec:reason}.

\subsubsection{Conflicts with MLP}
\label{sec:reason}
MLP serves as a projection head to project features to another space. Through nonlinear transformation, MLP can extract more refined representations of input features and it becomes easier to distinguish positive samples from negative ones, enhancing the ability of supervised contrastive loss to group similar samples and differentiate dissimilar ones. Existing methods typically discard the MLP when training the linear classifier in the second stage. If we leverage our method to the \textit{latent features}, 
there would be a conflict that the restricted features are not consistent with the features employed in reality.
Moreover, the non-linear mapping makes these two spaces not equivalent. To resolve the conflict, we apply our method to the features before the MLP.

\subsubsection{Simple ETF Construction}
\label{sub:etf_construction}
A simplex equiangular tight frame can be produced from an orthogonal basis via:
	\begin{equation}\label{ETF_M}\nonumber
		\rmE=\sqrt{\frac{K}{K-1}}\mathbf{U}\left(\mathbf{I}_K-\frac{1}{K}\mathbf{1}_K\mathbf{1}_K^T\right),
	\end{equation}
where $\mathbf{E}=[\mathbf{e}_1,\cdots,\mathbf{e}_K]\in\mathbb{R}^{d\times K}$ is a simplex ETF, 
$\mathbf{U}\in\mathbb{R}^{d\times K}$ is an orthogonal basis and satisfies $\mathbf{U}^T\mathbf{U}=\mathbf{I}_K$,
$\mathbf{I}_K$ is an identity matrix and $\mathbf{1}_K$ is an all-ones vector.

\subsection{Proofs}
\subsubsection{Preliminary}

\begin{definition}[Gram Matrix \( \mathbf{G} \) of the Point Set]
Let \( S^{d-1} \subset \mathbb{R}^d \) represent the unit hypersphere and consider \( n \) points \( \{\mathbf{x}_1, \mathbf{x}_2, \dots, \mathbf{x}_n\} \) on \( S^{d-1} \). We define their gram matrix as:

\[
\mathbf{G} = [\mathbf{x}_i^\top \mathbf{x}_j]_{i,j=1}^n,
\]
where:
\begin{itemize}
    \item \( \mathbf{G}_{ii} = \mathbf{x}_i^\top \mathbf{x}_i = 1 \) (since the points lie on the unit hypersphere).
    \item \( \mathbf{G}_{ij} = k \) for \( i \neq j \).
\end{itemize}
\end{definition}

\begin{restatable}[\textbf{Conditions for Point Sets on a Hypersphere with Fixed Pairwise Inner Products}]{lem}{innercondition}
\label{lem:inner_condition}
Let \( S^{d-1} \) be a unit hypersphere in \( \mathbb{R}^d \), and consider \( n \) points \( \{\mathbf{x}_1, \mathbf{x}_2, \dots, \mathbf{x}_n\} \) on \( S^{d-1} \). For all pairs of distinct points, if their inner products are equal to a constant \( k \). For \( \mathbf{G} \) to represent a valid point configuration, the following conditions must be hold:  

\noindent 1.Range of \( k \):
\[
-\frac{1}{n-1} \leq k \leq 1.
\]
2. Dimensional constraint
\[d \geq n-1\] 
\end{restatable}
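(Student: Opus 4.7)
The plan is to reduce both conditions to purely linear-algebraic statements about the Gram matrix $\mathbf{G}$ and then read them off from its eigenvalues. Under the hypothesis that all points lie on $S^{d-1}$ with pairwise inner product $k$, the Gram matrix takes the clean form
\begin{equation}
\nonumber
\mathbf{G} \;=\; (1-k)\,\mathbf{I}_n \;+\; k\,\mathbf{1}_n\mathbf{1}_n^{\top}.
\end{equation}
This is the sum of a scalar multiple of the identity and a rank-one matrix, so its spectrum can be computed explicitly: since $\mathbf{1}_n\mathbf{1}_n^{\top}$ has eigenvalue $n$ on $\mathbf{1}_n$ and $0$ on the orthogonal complement, $\mathbf{G}$ has eigenvalue $1+(n-1)k$ with multiplicity $1$ and eigenvalue $1-k$ with multiplicity $n-1$.

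Next I would invoke the standard fact that any Gram matrix of vectors in a real inner-product space is positive semi-definite. Requiring both eigenvalues of $\mathbf{G}$ to be non-negative immediately yields $1-k \geq 0$ (hence $k \leq 1$) and $1+(n-1)k \geq 0$ (hence $k \geq -\tfrac{1}{n-1}$), establishing the first condition. The PSD characterization of Gram matrices is the key structural fact; everything else is an algebraic consequence.

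For the dimension bound, I would use that $\operatorname{rank}(\mathbf{G})$ equals the dimension of the linear span of $\{\mathbf{x}_1,\ldots,\mathbf{x}_n\}$, which must therefore be at most $d$. Whenever $k < 1$, the eigenvalue $1-k$ is strictly positive with multiplicity $n-1$, so $\operatorname{rank}(\mathbf{G}) \geq n-1$, forcing $d \geq n-1$. Combined with the trivial case $k=1$ (where the points coincide and the bound is vacuous for distinct configurations), this gives the dimensional constraint.

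The only real subtlety, and the step I would be most careful about, is the handling of the boundary case $k = -\tfrac{1}{n-1}$: here the eigenvalue $1+(n-1)k$ vanishes, so $\operatorname{rank}(\mathbf{G}) = n-1$ exactly, which is precisely the regular-simplex configuration alluded to in \Cref{def:simplex} and is what makes the bound $d \geq n-1$ tight rather than a strict inequality. I would note this explicitly so that the rank count is consistent with the eigenvalue analysis and the lower end of the admissible range of $k$ is not incorrectly excluded.
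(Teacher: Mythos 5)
Your proposal is correct and follows essentially the same route as the paper's proof: the rank-one-plus-identity decomposition of the Gram matrix, the explicit eigenvalues $1+(n-1)k$ and $1-k$, and positive semi-definiteness to obtain the range of $k$. Your rank argument for the dimensional constraint (and your remark on the boundary case $k=-\tfrac{1}{n-1}$) is in fact more explicit than the paper, which simply asserts that $d\ge n-1$ is needed for the embedding, so this is a welcome tightening rather than a deviation.
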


\begin{proof}
    
    \( \mathbf{G} \) can be expressed as:
        \[
    \mathbf{G} = (1-k)\mathbf{I}_n + k\mathbf{1}_n\mathbf{1}_n^\top,
    \]
     Where \( \mathbf{I}_n \) is the \( n \times n \) identity matrix, and \( \mathbf{1}_n \) is an \( n \)-dimensional vector of ones.
    For \( \mathbf{G} \) to represent a valid point configuration, it must be positive semidefinite (PSD). 
    
    Thus, for \( \mathbf{G} \) to represent a valid point configuration, it must be positive semidefinite (PSD). The eigenvalues of \( \mathbf{G} \) are:
    \[
    \lambda_1 = 1 + k(n-1), \quad \lambda_2 = 1-k \; \text{(with multiplicity \( n-1 \))}.
    \]
    The PSD condition imposes the following constraints:
    \[
    1 - k \geq 0 \quad \text{and} \quad 1 + k(n-1) \geq 0.
    \]
    
Combining the above inequalities gives range of \( k \):
\[
-\frac{1}{n-1} \leq k \leq 1.
\]
    The ambient dimension \( d \) must satisfy \( d \geq n-1 \). This ensures that the \( n \) points can be embedded in \( \mathbb{R}^d \) without violating the inner product constraint.

\end{proof}

\begin{remark}[Interpretation of \( k \)] When $k$ takes different values, the geometric distribution of our features will vary:
\begin{itemize}
    \item \( k = 1 \): All points coincide.
    \item \( k = 0 \): Points are pairwise orthogonal (possible only for \( n \leq d+1 \)).
    \item \( k = -\frac{1}{n-1} \): Points form a spherical simplex, such as a regular tetrahedron for \( n = 4 \).
\end{itemize}

\end{remark}

\subsection{\Cref{lemma:local_supcon_theorem}}
\label{appendix:proofs}
We first prove that the final features geometry is a $\rho$-sphere-inscribed regular simplex in Part 1 and we prove the lower bound of inter-class similarity constrained constrastive loss in Part 2.

\localsupcontheorem*

\noindent \textbf{PART1:}

\begin{proof}[proof]

   \noindent The supervised contrasive loss $\LSC(Z;Y)$ has a lower bound based on \Cref{lemma:lower_bound_contrastive}:
    \begin{align}
        \LSC(Z;Y) \notag
        \ge
        &\sum_{l = 2}^{b} l M_l  \notag                          
            \log 
            ( 
                l - 1 + (b-l)
                \exp ( 
                    \frac {1} {M_l}
                    ( \\
                        &\underbrace{ \notag
                             - \frac{1}{|B_{y}|\left(|B_{y}|-1\right)}\sum_{y\in \mathcal{Y}}{\sum_{B\in 
                            \mathcal{B}_{y,l}}{
                                \sum_{i\in B_{y}}\sum_{j\in B_{y}\setminus\{\{i\}\}}\langle z_{i},z_{j}\rangle 
                            }}
                        }_{\text{attraction term}}\\
                        &+ 
                        \underbrace{
                            ( 
                            \frac{|\mathcal{B}_{y,l}|}{N^2}\frac{|\mathcal{Y}|^2}{|\mathcal{Y}|-1}
                            \sum_{y\in\mathcal{Y}}\sum_{ \substack{n\in[N]\\y_n=y}}
                            \sum_{y_m\neq y}\langle z_n,z_m \rangle 
                            ) 
                        }_{\text{repulsion term}}
                    )
                )
            )\nonumber
        \enspace,
    \end{align}

    For \textit{attraction term}, when features within the same class collapse to a point i.e. $\langle z_{i},z_{j}\rangle =1$ where $y_{i}=y_{j}$. We let each $z \in y_i$ converging to a single vector $\zeta_i$. Our assumption has no impact on the final result.
    
    For the \textit{ repulsion term}, we ignore the constant terms and focus on \( \sum_{y \in \mathcal{Y}} \sum_{\substack{n \in [N] \\ y_n = y}} \sum_{y_m \neq y} \langle z_n, z_m \rangle \). Such term represents all negtive pairs in the batch. 

    \begin{align}
        &min \sum_{y_i \neq y_j}\left< z_i,z_j \right>\nonumber \\
        s.t. &\forall y_i \neq y_j ,\left< z_i,z_j \right> \ge k\nonumber
    \end{align}

        \begin{align}
    \mathcal{L}\left( \left< z_i,z_j \right>,\lambda  \right) =&\sum_{y_i \neq y_j}\left< z_i,z_j \right>+\sum_{y_i \neq y_j} \lambda_{ij}\left(  k-\left< z_i,z_j \right>\right)
    \nonumber
    \end{align}
    According to the KKT conditions, we obtain.
    \begin{align}
        &\left\{ \begin{array}{l}
            \forall y_i \neq y_j,1-\lambda_{ij}=0\\
            \lambda_{ij}\ge0\\
            \forall y_i \neq y_j, k-\left< z_i,z_j \right> \leq 0 \nonumber\\
             \lambda_{ij}\left( k-\left< z_i,z_j\right> \right)=0    
        \end{array} \right. \\
        \Rightarrow 
        &\forall  y_i \neq y_j ,\left< z_i,z_j\right>=k \nonumber
    \end{align}

    For each negative sample pair \( \left< z_i, z_j \right> \), it a convex optimization process. The resulting KKT conditions provide a necessary and sufficient condition\footnote{The Slater condition is clearly satisfied.}. This conclusion leads to a very insightful result: when the inner products between all complex numbers reach the threshold, \textit{repulsion term} is minimized. Next, we will explore the conditions and corresponding feature structure when the  \( \left< z_i, z_j \right> =k \) is achieved. 
    We show the \Cref{lem:inner_condition} along with its proof. Note that when $d \geq |\mathcal{Y}| - 1$ and $-\frac{1}{|\mathcal{Y}|-1} \leq k \leq 1$ hold, we can achieve \( \left< z_i, z_j \right> =k \), for \( y_i \neq y_j\). In particular, when \(  k = -\frac{1}{|\mathcal{Y}|-1} \), we exactly form a $\rho_{\mcZ}$-sphere-inscribed regular simplex, which corresponds to the conclusion in the \cite{graf2021dissecting}. However, when \(  k > -\frac{1}{|\mathcal{Y}|-1} \), our minimum value is always directly related to the threshold $k$, providing us with an approach to control the distribution within tasks.

Next, we explore the final geometric structure formed under the constraint of similarity. 

    Assume that for any $i,j$, there exist $z_i,z_j$, $\left< z_i,z_j \right> = k $.\\
1.\textit{Centroid Condition}
    
    \begin{equation}
        \sum_i (z_i-\frac{1}{n} \sum_j z_j)=0 \nonumber
    \end{equation}

\noindent 2.\textit{Radius Condition} 
          
          for i $\in$ $[N]$ 
          
  \begin{align}
    \rho^2 =&||z_i-\frac{1}{N} \sum_j z_j||^2 \nonumber\\
        =&||z_i||^2+\frac{1}{N^2} \sum_{i,j} \left< z_i,z_j\right> -2 z_i\frac{1}{N} \sum_j z_j \nonumber\\
        =&1+\frac{1}{N^2} ( (\frac{N}{|\mcY|})^2 \cdot |\mcY|+(|\mcY|-1)|\mcY|\cdot (\frac{N}{|\mcY|})^2 \cdot k) \notag \nonumber\\ &
        - 2\frac{1}{N} (\frac{N}{|\mcY|}+(N-\frac{N}{|\mcY|})\cdot k) \nonumber\\
        =& (1-\frac{1}{|\mcY|})(1-k)\nonumber
    \end{align}
3.\textit{Equiangular Condition} $\exists k \in \R: k=<\zeta_i,\zeta_j>$, 

From 1.2.3, we can conclude that the result will form a regular simplex inscribed within a hypersphere, centered at \( \frac{1}{n} \sum_j z_j) \) with radius \(  \sqrt{(1-\frac{1}{|\mcY|})(1-k)} \).

\end{proof}

\noindent\textbf{PART2:}

Before we prove the new lower bound of task similarity constrained constrastive loss, we review some theory results in\cite{graf2021dissecting}, which simplifies our proof.

\begin{lemma}[Sum of attraction terms]
    \label{lem:supcon_att}
    Let $l\in \set{2,\dots,b}$ and let $\mcZ = \bbS_{\rho_{\mathcal Z}}$. 
    For every $Y\in \mcY^N$ and every $Z\in \mcZ^N$, it holds that 
    \begin{align}
        \label{lem:supcon_att:eq1}
        &- \frac{1}{|B_{y}|\left(|B_{y}|-1\right)}\sum_{y\in \mathcal{Y}}{\sum_{B\in 
                            \mathcal{B}_{y,l}}{
                                \sum_{i\in B_{y}}\sum_{j\in B_{y}\setminus\{\{i\}\}}\langle z_{i},z_{j}\rangle 
                            }} \notag \\
        \ge
        &- \left(\sum_{y\in\mcY} |\mcB_{y,l}|\right) 
        {\rho_{\mcZ}}^2
        \enspace,
    \end{align}
    where equality is attained if and only if:
        \label{con:supcon_att}
            For every $n,m\in [N]$, $y_n = y_m$ implies $z_n = z_m$\enspace.
\end{lemma}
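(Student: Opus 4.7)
The plan is to derive the bound from a pointwise Cauchy--Schwarz inequality and then carefully characterize the equality case. Since every feature $z \in \bbS_{\rho_{\mcZ}}$ has norm $\rho_{\mcZ}$, Cauchy--Schwarz gives $\langle z_i, z_j\rangle \le \rho_{\mcZ}^{2}$ with equality iff $z_i = z_j$. Applying this to every ordered pair $(i,j)$ with $i, j \in B_y$ and $i \neq j$ (multiset sense) and negating yields $-\langle z_i, z_j\rangle \ge -\rho_{\mcZ}^{2}$ pointwise.

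Next, for each fixed batch $B \in \mcB_{y,l}$, I would count: the number of ordered pairs $(i,j)$ with $i \in B_y$ and $j \in B_y \setminus \lmset i \rmset$ is exactly $|B_y|(|B_y|-1) = l(l-1)$. Thus
\begin{equation*}
\frac{1}{|B_y|(|B_y|-1)} \sum_{i \in B_y} \sum_{j \in B_y \setminus \lmset i \rmset} \langle z_i, z_j \rangle \;\le\; \rho_{\mcZ}^{2}.
\end{equation*}
Multiplying by $-1$ and summing over $B \in \mcB_{y,l}$ and then over $y \in \mcY$ telescopes to the asserted bound $-\bigl(\sum_{y \in \mcY} |\mcB_{y,l}|\bigr)\rho_{\mcZ}^{2}$.

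For the equality characterization, tracing the chain backwards shows that equality holds iff $\langle z_i, z_j\rangle = \rho_{\mcZ}^{2}$ for every same-label ordered pair $(i,j)$ that co-occurs in some batch $B \in \mcB_{y,l}$. Combined with the Cauchy--Schwarz equality case on the sphere, this forces $z_i = z_j$ on every such pair. The converse implication (that $y_n = y_m \Rightarrow z_n = z_m$ globally makes every pointwise inequality tight) is immediate. The main obstacle is the combinatorial step translating the batch-local condition into the global statement in the lemma: I must verify that under the convention that $\mcB_{y,l}$ enumerates all batches containing exactly $l$ samples of label $y$, the condition $l \ge 2$ together with the presence of at least $l$ class-$y$ samples guarantees that every pair of indices $n, m$ with $y_n = y_m = y$ appears together in at least one $B \in \mcB_{y,l}$. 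Everything else is a routine Cauchy--Schwarz estimate.
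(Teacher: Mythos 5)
Your argument is essentially the proof this lemma has in the literature: the paper itself does not prove \Cref{lem:supcon_att} but explicitly defers to \cite{graf2021dissecting} (``proof in [graf2021dissecting]''), where the bound is obtained exactly as you describe, by the pointwise Cauchy--Schwarz estimate $\langle z_i,z_j\rangle\le\rho_{\mcZ}^2$ applied to the $l(l-1)$ ordered same-label pairs in each batch and then summed over $\mcB_{y,l}$ and $y$. The one step you flag --- that batch-local equality forces the global condition $y_n=y_m\Rightarrow z_n=z_m$ --- is indeed the only point needing care: it holds precisely when every same-label pair co-occurs in some $B\in\mcB_{y,l}$, which for $l\ge 2$ is guaranteed under the standing assumptions of the surrounding theorem (balanced labels with $\mult{Y}(y)\ge l$ and $N-\mult{Y}(y)\ge b-l$), though in degenerate cases where $\mcB_{y,l}=\emptyset$ the ``only if'' direction of the stated equality condition is vacuously violated; since the lemma is only invoked under the balanced setting, this does not affect its use in the paper.
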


\begin{remark}
proof in \cite{graf2021dissecting}. According to \Cref{lem:supcon_att}, \textit{attraction term} has a lower bound.This means that the features in the same class need to collapse to a single point to ensure that we can achieve the lower bound.
Therefore, the key lies in the \textit{repulsion term}, which determines how we can push samples from different classes further.
\end{remark}

\begin{proof}[proof(part2)]
\begin{align}
        &\LSC(Z;Y) \notag
        \ge\\
        &\sum_{l = 2}^{b} l M_l \notag
            \log 
            ( 
                l - 1 + (b-l)
                \exp ( 
                    \frac {1} {M_l}
                    ( \\
                        &\underbrace{ \notag
                             - \frac{1}{|B_{y}|\left(|B_{y}|-1\right)}\sum_{y\in \mathcal{Y}}{\sum_{B\in 
                            \mathcal{B}_{y,l}}{
                                \sum_{i\in B_{y}}\sum_{j\in B_{y}\setminus\{\{i\}\}}\langle z_{i},z_{j}\rangle 
                            }}
                        }_{\text{attraction term}}\\
                        &+
                        \underbrace{
                            \frac{|\mathcal{B}_{y,l}|}{N^2}\frac{|\mathcal{Y}|^2}{|\mathcal{Y}|-1}
                            \sum_{y\in\mathcal{Y}}\sum_{ \substack{n\in[N]\\y_n=y}}
                            \sum_{y_m\neq y}\langle z_n,z_m \rangle 
                        }_{\text{repulsion term}}
                    )
                )
            )\nonumber
        \enspace,
    \end{align}
   
We obtained the lower bound of the \textit{attraction term} from the \Cref{lem:supcon_att}.

When \( d \geq |\mcY| - 1 \) and \( -\frac{1}{|\mcY|-1} \leq k \leq 1 \), each negtive pairs can reach its minimum.The \textit{repulsion term} lower bound can be:

\begin{align}
    &\sum_{y\in \mathcal{Y}}{\sum_{B\in \mathcal{B}_{y,l}}{S}_{\text{rep}}}\left( Z;Y,B,y \right)\nonumber\\ 
   \ge& 
      \textit{Constant} \cdot \left(
        |\mcY| \cdot \frac{N}{|\mcY|} \cdot 
        (N-\frac{N}{|\mcY|}) \cdot k
    \right)\cdot {\rho_{\mcZ}}^2  \nonumber\\
    =&
    \frac{|\mcB_{y,l}|}{\mult{Y}(y)(N - \mult{Y}(y))} \cdot \left(
        N^2 \cdot (1-\frac{1}{|\mcY|}) \cdot k
    \right)\cdot {\rho_{\mcZ}}^2 \nonumber\\
    =&\frac{|\mcB_{y,l}|}{N^2} \frac{|\mcY|^2}{|\mcY|-1}\cdot  \left(
        N^2 \cdot (1-\frac{1}{|\mcY|}) \cdot k
    \right)\cdot {\rho_{\mcZ}}^2 \nonumber\\
    =&
    |\mcB_{y,l}|  |\mcY| k\cdot {\rho_{\mcZ}}^2 
    \label{eqn:lower_bound_rep}
\end{align}

 Leveraging the bounds on the \textit{attraction terms} and the \textit{repulsion terms} from \cref{lem:supcon_att:eq1} and \cref{eqn:lower_bound_rep}, respectively, we get

\begin{align}
         & \textit{attraction terms} + \textit{repulsion terms} \notag \\
         \ge &
            - |\mcY| |\mcB_{y,l}|
            {\rho_{\mcZ}}^2
            + 
             |\mcY||\mcB_{y,l}| k {\rho_{\mcZ}}^2 
            \nonumber\\
        = & 
            - |\mcY| |\mcB_{y,l}|
            {\rho_{\mcZ}}^2
            (1-k) 
        \enspace.
        \label{eqn:lower_bounde_both}
    \end{align}
 Since $Y$ is balanced, $|\mcB_{y,l}|$ does not depend on $y$, and so 
    \begin{equation}
    \label{eqn:Ml}
        M_l = {\sum_{y\in Y} |\mcB_{y,l}|} = |\mcY| |\mcB_{y,l}|
        \enspace.
    \end{equation}

\Cref{eqn:Ml} +\Cref{eqn:lower_bounde_both} 
    \[\begin{array}{c}
    {{\cal L}_{{\rm{SC}}}}(Z;Y) \ge \sum\limits_{l = 2}^b l {M_l}\log (l - 1 + (b - l)\exp (
     - (1-k){\rho_{\mcZ}}^2
             ))
    \end{array}\]

\end{proof}

\subsection{When \(  -1 \leq k \leq -\frac{1}{|\mcY|-1} \)}
\label{sec:k_annother}

    We show the final results in \cite{graf2021dissecting}, when $k \geq -1$:
    \begin{restatable}[\textbf{Supervised contrastive loss lower bound}]{corollary}{thm@supcon}
    \label{thm:supcon_continual}
    Let $\rho_{\mathcal Z}>0$ , let $\mcZ = \bbS_{\rho_{\mathcal Z}}^{h-1}$ 
    Further,for each task $t$,let $Z=(z_1,\ldots,z_N) \in \mcZ^N$ be an $N$ point configuration with labels $Y=(y_1,\ldots,y_N) \in [|\mcY|]^N$. If the label configuration $Y$ is balanced, it holds that 
    \begin{align*}
        \LSC(Z;Y) 
        \ge 
        \sum_{l=2}^{b} 
        l\, M_l
        \log 
        \left( 
            l - 1 + (b-l)
            \exp \left( 
                - \frac{|\mcY|\rho_{\mcZ}^2}{|\mcY|-1}                 
            \right)
        \right)\enspace,
    \end{align*}
    where 
    \begin{equation*}
        M_l = \sum_{y\in\mcY} |\set{ {B \in \mcB}:~ |B_y|=l }|\enspace.
    \end{equation*}
    Equality is attained if and only if the following conditions are satisfied. There are $\zeta_1, \dots, \zeta_{K} \in \R^h$ such that:
    \begin{enumerate}
        \item
        $\forall n \in [N]: z_n = \zeta_{y_n}$
        \item 
        $\{\zeta_y\}_y$ form a $\rho_{\mcZ}$-sphere-inscribed regular simplex
    \end{enumerate}
\end{restatable}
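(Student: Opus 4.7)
The plan is to start from Lemma~\ref{lemma:lower_bound_contrastive}, which already decomposes $\LSC(Z;Y)$ into an \emph{attraction term} and a \emph{repulsion term} inside a logarithm. Since $\log(l-1+(b-l)\exp(\cdot))$ is monotone increasing in its argument, it suffices to separately minimize the two inner terms (subject to the constraint $\langle z_i,z_j\rangle\ge k$ for $y_i\ne y_j$) and then combine. The attraction term is minimized exactly when positive pairs collapse, i.e.\ $\langle z_i,z_j\rangle=1$ whenever $y_i=y_j$, and the paper already records this bound as Lemma~\ref{lem:supcon_att} (with its equality condition: all samples of the same class collapse to a common vector $\zeta_{y_n}$). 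So the real work concentrates on the repulsion term and the downstream geometric characterization.

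For the repulsion term, I would formulate the following constrained problem: minimize $\sum_{y_i\ne y_j}\langle z_i,z_j\rangle$ subject to $\langle z_i,z_j\rangle\ge k$ for each inter-class pair, together with the implicit sphere constraint $\|z_i\|=\rho_{\mcZ}$ (here $\rho_{\mcZ}=1$). Treating each inter-class inner product as a decision variable, this is a linear objective with linear lower-bound constraints, so the unique (and convex-feasible) optimum saturates every constraint: by KKT, $\lambda_{ij}=1$ and hence $\langle z_i,z_j\rangle=k$ for all $y_i\ne y_j$. Substituting into the repulsion-term expression, counting the number of inter-class pairs (assuming a balanced $Y$, giving $N/|\mcY|$ per class), and simplifying the pre-factor $|\mcB_{y,l}||\mcY|^2/((|\mcY|-1)N^2)$ collapses the repulsion bound to $|\mcB_{y,l}||\mcY|\, k\, \rho_{\mcZ}^2$. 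Combining with the attraction bound $-|\mcY||\mcB_{y,l}|\rho_{\mcZ}^2$ and using the balanced identity $M_l=|\mcY||\mcB_{y,l}|$ yields the $-(1-k)\rho_{\mcZ}^2$ factor in the exponent of the claimed lower bound.

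The main obstacle is showing this simultaneous saturation is actually \emph{achievable} in $\R^h$, which is where the range condition $-\tfrac{1}{|\mcY|-1}\le k\le 1$ and the dimensional condition $h\ge|\mcY|-1$ enter. I would argue via the Gram matrix $\mathbf{G}=(1-k)\mathbf{I}_{|\mcY|}+k\mathbf{1}\mathbf{1}^\top$ of the prospective class centers $\zeta_y$: $\mathbf{G}$ has eigenvalues $1-k$ (multiplicity $|\mcY|-1$) and $1+k(|\mcY|-1)$, so PSD-ness — equivalently, realizability of such a point set on $\bbS^{h-1}$ — holds exactly on the stated $k$-range, and the required embedding dimension is the rank of $\mathbf{G}$, giving $h\ge|\mcY|-1$. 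This is the role of Lemma~\ref{lem:inner_condition}.

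Finally, for the geometric characterization, under the equality conditions ($z_n=\zeta_{y_n}$ with $\langle\zeta_i,\zeta_j\rangle=k$ for $i\ne j$), I would verify the three defining properties of a $\rho$-sphere-inscribed regular simplex from Definition~\ref{def:simplex} with center $c=\tfrac{1}{N}\sum_i z_i$. The \emph{Equiangular} and \emph{Centroid} conditions are immediate from symmetry. The \emph{Radius} condition is a direct computation:
\begin{align*}
\rho^2 &= \bigl\| z_i - \tfrac{1}{N}\textstyle\sum_j z_j\bigr\|^2
= 1 - \tfrac{2}{N}\textstyle\sum_j \langle z_i,z_j\rangle + \tfrac{1}{N^2}\textstyle\sum_{j,j'}\langle z_j,z_{j'}\rangle,
\end{align*}
which, after splitting sums by class membership (each class has $N/|\mcY|$ points with inner product $1$, the rest contribute $k$), telescopes to $\rho^2=\bigl(1-\tfrac{1}{|\mcY|}\bigr)(1-k)$. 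The special case $k=-\tfrac{1}{|\mcY|-1}$ recovers $\rho=1$ and matches the earlier result of \cite{graf2021dissecting}, confirming the bookkeeping. The anticipated tricky spot is keeping the pre-factors in the repulsion-term bound consistent with the (batched, balanced) normalization used in Lemma~\ref{lemma:lower_bound_contrastive}; I would expand $\tfrac{|\mcB_{y,l}|}{N^2}\tfrac{|\mcY|^2}{|\mcY|-1}$ explicitly before taking the minimum, rather than relying on the final clean form.
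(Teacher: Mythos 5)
There is a genuine gap, and it sits exactly at the step you flag as ``the main obstacle'' but then resolve only partially. Your route is to prove the \emph{constrained} statement (repulsion term minimized subject to $\langle z_i,z_j\rangle\ge k$ for inter-class pairs, saturation by KKT, feasibility via the Gram matrix $\mathbf{G}=(1-k)\mathbf{I}+k\mathbf{1}\mathbf{1}^\top$) and then specialize to $k=-\tfrac{1}{|\mcY|-1}$. But the corollary as stated carries \emph{no} pairwise constraint: it asserts the bound for every configuration $Z\in\mcZ^N$. Your argument therefore only covers configurations in which every inter-class inner product already satisfies $\langle z_i,z_j\rangle\ge-\tfrac{1}{|\mcY|-1}$; configurations where the inner products are unequal and some dip below that value (perfectly possible on the sphere --- the Gram-matrix lemma only rules out \emph{all pairs simultaneously equal} to a value below $-\tfrac{1}{|\mcY|-1}$, not individual pairs) are simply not addressed. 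If you instead run your saturation argument with the only constraint that is actually implied by the hypotheses, namely $\langle z_i,z_j\rangle\ge-1$, you get the exponent $-2\rho_{\mcZ}^2$, which for $|\mcY|\ge3$ is a strictly weaker lower bound than the claimed $-\tfrac{|\mcY|}{|\mcY|-1}\rho_{\mcZ}^2$. So the specialization does not close the argument.

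The missing ingredient is a bound on the \emph{sum} of inter-class inner products rather than on each pair: from $0\le\bigl\|\sum_n z_n\bigr\|^2$ together with $\bigl\|\sum_{n:y_n=y}z_n\bigr\|\le\tfrac{N}{|\mcY|}\rho_{\mcZ}$ one gets $\sum_{y_n\ne y_m}\langle z_n,z_m\rangle\ge-\tfrac{N^2\rho_{\mcZ}^2}{|\mcY|}$, which after the prefactor $\tfrac{|\mcB_{y,l}|}{N^2}\tfrac{|\mcY|^2}{|\mcY|-1}$ yields exactly $-\tfrac{|\mcY|\,|\mcB_{y,l}|\,\rho_{\mcZ}^2}{|\mcY|-1}$ and, combined with \Cref{lem:supcon_att}, the claimed exponent; equality forces centroid zero, class collapse, and hence the $\rho_{\mcZ}$-sphere-inscribed regular simplex. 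This is the route of \cite{graf2021dissecting}, and it is in fact all the paper itself does here: the corollary is quoted from that work, with only the remark that it coincides with the $k=-\tfrac{1}{|\mcY|-1}$ instance of \Cref{lemma:local_supcon_theorem}. Your KKT/Gram-matrix machinery is the right tool for the constrained \Cref{lemma:local_supcon_theorem}, but for this unconstrained corollary you need to either import the $\|\sum_n z_n\|^2\ge0$ argument or prove separately that no configuration violating the pairwise bound can beat the equiangular optimum. Your radius computation and the consistency check $\rho^2=(1-\tfrac{1}{|\mcY|})(1-k)\to\rho_{\mcZ}^2$ at $k=-\tfrac{1}{|\mcY|-1}$ are fine as far as they go.
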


Note that when $k \geq -1$, the minimum value is achieved when $ k=-\frac{1 }{|\mcY|-1}$, which implies that when \(  -1 \leq k \leq -\frac{1}{|\mcY|-1} \), the minimum is achieved when $ k=-\frac{1}{|\mcY|-1}$ (the global minimum is smaller than the local minimum).

\textbf{It also shows that \cite{graf2021dissecting} is our special case.}

\subsection{Threshold Selection}

\begin{figure}[H]
	\centering
    \includegraphics[width=0.5\linewidth]{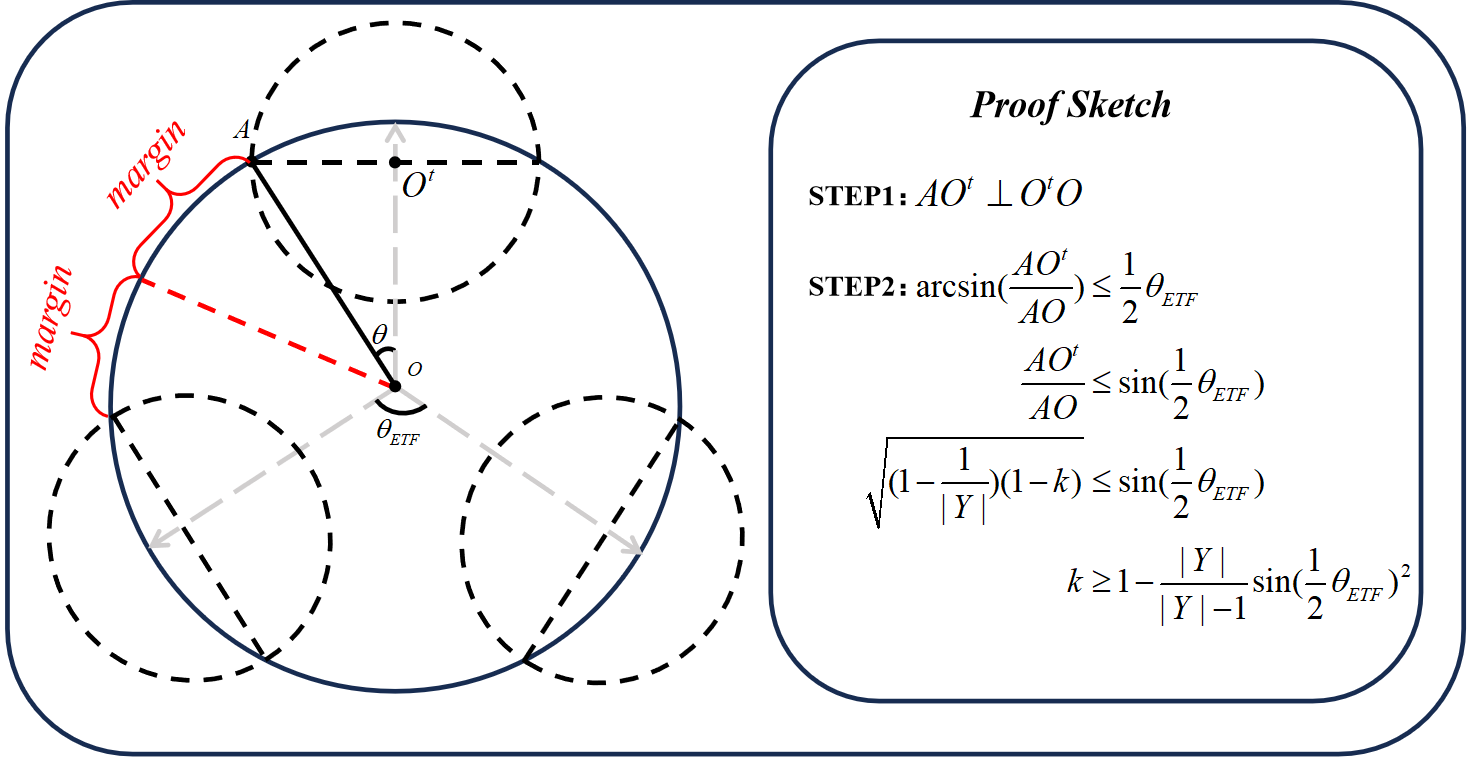}
	\caption{An illustration and a simple proof sketch for threshold selection. $O^t$ represents the origin of the final inscribed hypersphere for task $t$. Due to the symmetry of the pre-allocated ETF, the thresholds for different tasks are the same.}
	\label{fig:threshold_proof}
\end{figure}

\begin{proof}
    STEP1: For each task, according to our theoretical work mentioned earlier, features with the same label $y_i$ will collapse to a specific vector $\zeta_{y_i}$, and the final feature will form a regular simplex inscribed within a hypersphere, centered at \( \frac{1}{n} \sum_j z_j \) with radius \(  \sqrt{(1-\frac{1}{|\mcY|})(1-k)} \). In other words, our objective is to prove that: 
    \[
(\zeta_{i}- \frac{1}{|\mcY|} \sum_j \zeta_{j} )  \bot \frac{1}{|\mcY|} \sum_j \zeta_{j} 
    \]
    where $\zeta_{i} \in \{\zeta_{1},...,\zeta_{|\mcY|} \}$. if $|\mcY|=2$ and the feature dimension is 2, we show it in \Cref{fig:threshold_proof}. $AO$ represents a specific $\zeta_{i}$.
    \begin{align}
        &(\zeta_{i}- \frac{1}{|\mcY|} \sum_j \zeta_{j} ) \cdot \frac{1}{|\mcY|} \sum_j \zeta_{j}\nonumber\\
        =&\zeta_{i} \cdot\frac{1}{|\mcY|} \sum_j \zeta_{j}-(\frac{1}{|\mcY|} \sum_j \zeta_{j})^2\nonumber\\
        =& \zeta_{i} \cdot\frac{1}{|\mcY|} \sum_j \zeta_{j}-\frac{1}{|\mcY|^2}(\sum_{j} ||\zeta_{j}||^2+ 2 \cdot \sum_{1 \leq j < l  \leq |\mcY|} \zeta_{j} \cdot \zeta_{l}  )\nonumber\\
        =& \frac{1}{|\mcY|} \cdot (1+(|\mcY|-1)\cdot k)-\frac{1}{|\mcY|^2}(\sum_{j} 1+ 2 \cdot \binom{n}{2} \cdot k
        )\nonumber\\
        =& \frac{1}{|\mcY|} \cdot (1+(|\mcY|-1)\cdot k)-\frac{1}{|\mcY|^2}(|\mcY|+ |\mcY|(|\mcY|-1) \cdot k
        )\nonumber\\
        =&\frac{1}{|\mcY|} \cdot (1+(|\mcY|-1)\cdot k-(1+(|\mcY|-1)k))\nonumber\\
        =&0\nonumber
    \end{align}

\noindent STEP2: We denote AO as a specific $\zeta_{i}$:
    
\begin{align}
    arcsin(\frac{AO^t}{AO}) &\leq \frac{1}{2} \theta_{ETF}\nonumber\\
    \frac{AO^t}{AO} &\leq sin(\frac{1}{2}\theta_{ETF})\nonumber\\
    \sqrt{(1-\frac{1}{|\mcY|})(1-k)}&\leq  sin(\frac{1}{2}\theta_{ETF})\nonumber\\
    k &\geq1-\frac{|\mcY|}{|\mcY|-1}sin(\frac{1}{2}\theta_{ETF})^2\nonumber
\end{align}

\end{proof}

\subsection{Hyperparameters}
\subsubsection{Hyperparameters Selection}

\label{sec:parameters_selection}

As shown in \Cref{tab: hyper_selection},
we conduct a thorough hyperparameter exploration, considering the following key hyperparameters: $\lambda_{position}$, $\lambda_{distill}$, $\lambda_{range}$, \textit{margin}, learning rate ($lr$), number of epochs ($epochs$), number of epochs in the first task (\textit{start\_epoch}) and batchsize ($bsz$). Similar explorations are carried out for all baselines to ensure a consistent comparison. For parameters not explicitly mentioned, we adhere to the optimal values provided by the original baseline to maintain alignment with its configuration.
\subsubsection{Final Hyperparameters}
As shown in \Cref{tab: final_hyper_selection}, except for the hyperparameters specific to our work, all other hyper-parameters are set to the optimal values as provided by the original baseline. This ensures a fair comparison by isolating the effects of our GPLASIC mechanism while maintaining consistency with the baseline's established settings.
\label{appendix:hyper-parameters}

\begin{table*}[t]
    \centering
    \small
    \caption{\textbf{Hyper-parameters selection in our experiments.}}
    \label{tab:final_hyper_selection}
    \begin{tabularx}{\textwidth}{ccX} 
        \hline
        \textbf{Method} & \textbf{Buffer} & \textbf{Hyper-parameters} \\
        \hline
        \multicolumn{3}{c}{\textbf{Seq-CIFAR-10}} \\
        \hline
        \multirow{2}{*}{Co2l+GPLASC} & 200 & $\tau:0.1, \lambda_{distill}:1, \lambda_{position}:1, \lambda_{range}:1, margin:0.15, epochs:100, start\_epochs:500$, \textit{bsz}:256 \\
                                     & 500 & \\
        \multirow{2}{*}{CCLIS+GPLASC} & 200 & $\tau:0.1, \lambda_{distill}:1, \lambda_{position}:1, \lambda_{range}:1, margin:0.15, epochs:100, start\_epochs:500$, \textit{bsz}:256 \\
                                     & 500 & \\
        \hline
        \multicolumn{3}{c}{\textbf{Seq-CIFAR-100}} \\
        \hline
        \multirow{2}{*}{Co2l+GPLASC} & 200 & $\tau:0.1, \lambda_{distill}:1, \lambda_{position}:1, \lambda_{range}:1, margin:0.1, epochs:100, start\_epochs:500$, \textit{bsz}:256 \\
                                     & 500 & \\
        \multirow{2}{*}{CCLIS+GPLASC} & 200 & $\tau:0.1, \lambda_{distill}:1, \lambda_{position}:1, \lambda_{range}:1, margin:0.1, epochs:100, start\_epochs:500$, \textit{bsz}:256 \\
                                     & 500 & \\
        \hline
        \multicolumn{3}{c}{\textbf{Seq-Tiny-ImageNet}} \\
        \hline
        \multirow{2}{*}{Co2l+GPLASC} & 200 & $\tau:0.1, \lambda_{distill}:1, \lambda_{position}:1, \lambda_{range}:1, margin:0.1, epochs:50, start\_epochs:500$, \textit{bsz}:256 \\
                                     & 500 & \\
        \multirow{2}{*}{CCLIS+GPLASC} & 200 & $\tau:0.1, \lambda_{distill}:1, \lambda_{position}:1, \lambda_{range}:1, margin:0.1, epochs:50, start\_epochs:500$, \textit{bsz}:256 \\
                                     & 500 & \\
        \hline
        \multicolumn{3}{c}{\textbf{CUB200}} \\
        \hline
        \multirow{2}{*}{Co2l+GPLASC} & 200 & $\tau:0.1, \lambda_{distill}:1, \lambda_{position}:1, \lambda_{range}:1, margin:0.1, epochs:200, start\_epochs:500$, \textit{bsz}:128 \\
                                     & 500 & \\
        \hline
        \multicolumn{3}{c}{\textbf{ImageNet-R}} \\
        \hline
        \multirow{2}{*}{Co2l+GPLASC} & 200 & $\tau:0.1, \lambda_{distill}:1, \lambda_{position}:1, \lambda_{range}:1, margin:0.1, epochs:200, start\_epochs:500$, \textit{bsz}:128 \\
                                     & 500 & \\
        \hline
        \multicolumn{3}{c}{\textbf{ImageNet-A}} \\
        \hline
        \multirow{2}{*}{Co2l+GPLASC} & 200 & $\tau:0.1, \lambda_{distill}:1, \lambda_{position}:1, \lambda_{range}:1, margin:0.1, epochs:200, start\_epochs:500$, \textit{bsz}:128 \\
                                     & 500 & \\
        \hline
    \end{tabularx}
\end{table*}

\begin{table*}[!htbp]
    \centering  
    \begin{tabular}{ccc}
	\hline
\textbf{Dataset}&\textbf{Parameter}&\textbf{values}\\
\hline
\multirow{8}{*}{Seq-Cifar-10}
&\textit{margin}&{0, 0.05, 0.1, 0.15, 0.2, 0.3, 0.5}\\
&$\lambda_{distill}$&{0.1, 0.2, 0.3, 0.5,1,2,3}\\
&$\lambda_{range}$&{0.5, 1, 2, 3, 4, 5}\\
&$\lambda_{position}$&{0.5, 0.8, 1, 2, 3}\\
&learning rate&{0.1, 0.2, 0.3, 0.5,1}\\  
&$start\_epochs$&{500,200}\\
&$epochs$&{100, 50}\\
&$bsz$&{256, 512, 1024}\\
&$\tau$&{0.1, 0.5, 1.0}\\
\hline

\multirow{8}{*}{Seq-Cifar-100}
&\textit{margin}&{0, 0.05, 0.1, 0.15, 0.2, 0.3, 0.5}\\
&$\lambda_{distill}$&{0.1, 0.2, 0.3, 0.5,1,2,3}\\
&$\lambda_{range}$&{0.5, 1, 2, 3, 4, 5}\\
&$\lambda_{position}$&{0.5, 0.8, 1, 2, 3}\\
&learning rate&{0.1, 0.2, 0.3, 0.5,1}\\
&$start\_epochs$&{500,200}\\
&$epochs$&{100, 50}\\
&$bsz$&{256, 512, 1024}\\
&$\tau$&{0.1, 0.5, 1.0}\\
\hline  

\multirow{8}{*}{
Seq-Tiny-Imagenet/CUB200
/Imagenet-R
/Imagenet-A}
&\textit{margin}&{0, 0.05, 0.1, 0.15}\\
&$\lambda_{distill}$&{0.1, 0.2, 0.3, 0.5}\\
&$\lambda_{range}$&{0.5, 1, 2}\\
&$\lambda_{position}$&{1, 2, 3}\\
&learning rate&{0.5}\\
&$start\_epochs$&{500}\\
&$epochs$&{50,200}\\
&$bsz$&{64,128,256,512}\\
&$\tau$&{0.1, 0.5, 1.0}\\
\hline

\end{tabular}
    \caption{
    \textbf{Hyperparameter Space}}
\label{tab: hyper_selection}
\end{table*}

\subsection{Additional Experiments}

\label{appendix:additional_experiements}

\subsubsection{\textit{Average Forgetting} Evaluations}
We utilize the Average Forgetting metric as defined following to quantify how much information the model has forgotten about previous tasks:
\[
F=\frac{1}{T-1} \sum_{i=1}^{T-1} max_{t\in \{ 1,...,T-1 \}} (Acc_{t,i}-Acc_{T,i})
\]
\Cref{tab:forgetting_result} report the average forgetting results of our method compared to all other contrastive continual learning methods. The results show that our method can effectively mitigate forgetting, especially even without using additional buffers.

\begin{table*}
    \centering
    \small
    \begin{tabular}{cccccccc}
	\hline
\multirow{2}{*}
{\textbf{Buffer}}&\textbf{Dataset}&\multicolumn{2}{c}{\textbf{Seq-Cifar-10}}&\multicolumn{2}{c}{\textbf{Seq-Cifar-100}}&\multicolumn{2}{c}{\textbf{Seq-Tiny-ImageNet}}\\
&\textbf{Scenario}&\textbf{Class-IL}&\textbf{Task-IL}&\textbf{Class-IL}&\textbf{Task-IL}&\textbf{Class-IL}&\textbf{Task-IL}\\
\hline
\multirow{6}{*}{200}
&Co2l
&36.35$\pm$1.16
&6.71$\pm$0.35
&67.82$\pm$0.41
&38.22$\pm$0.34
&73.25$\pm$0.21
&47.11$\pm$1.04
\\ 

&Co2l + ours
&\textbf{25.43}$\pm$\textbf{1.35}
& \textbf{5.37}$\pm$\textbf{0.85}
&\textbf{59.64}$\pm$\textbf{1.37}
&\textbf{36.58}$\pm$\textbf{0.58}
&\textbf{68.64}$\pm$\textbf{1.54}
&\textbf{42.71}$\pm$\textbf{0.47}
\\ 
\cmidrule[0.5pt]{2-8} 
&CILA 

&30.03$\pm$1.40 
&\textbf{4.66$\pm$0.55}
&63.30$\pm$2.26
&35.37$\pm$0.54
&70.63$\pm$2.28
&39.34$\pm$0.81
\\ 
&CILA + ours
&\textbf{23.96}$\pm$\textbf{1.57}
&5.57$\pm$0.84
&\textbf{51.38$\pm$2.18}
&\textbf{33.50$\pm$0.58}
&\textbf{65.44$\pm$1.83}
&\textbf{36.77$\pm$0.78}
\\ 
\cmidrule[0.5pt]{2-8}
&CCLIS
&22.59$\pm$0.18
&\textbf{2.08$\pm$0.27}
&46.89$\pm$0.59
&14.17$\pm$0.20
&62.21$\pm$0.34
&\textbf{33.20$\pm$0.75}
\\

&CCLIS + ours
&\textbf{21.07$\pm$2.28}
&2.19$\pm$0.47
&\textbf{43.82$\pm$1.74}
&\textbf{12.32$\pm$0.54}
&\textbf{60.58$\pm$1.30}
&33.33$\pm$0.82
\\
\hline
\multirow{6}{*}{500}
&Co2l
&25.33$\pm$0.99
&\textbf{3.41$\pm$0.8}
&51.96$\pm$0.80
&26.89$\pm$0.45
&65.15$\pm$0.26
&39.22$\pm$0.69
\\
&Co2l + ours
&\textbf{22.83$\pm$2.01}
&4.56$\pm$0.87
&\textbf{43.69$\pm$1.22}
&\textbf{26.67$\pm$1.22}
&\textbf{60.26$\pm$1.60}
&\textbf{35.75$\pm$0.59}
\\ 
\cmidrule[0.5pt]{2-8} 
&CILA
&\textbf{23.05$\pm$2.45}
&\textbf{4.49$\pm$0.59 }
&\textbf{46.24$\pm$1.17}
&\textbf{23.68$\pm$1.23}
&60.95$\pm$1.41
&37.65$\pm$0.91
\\ %

&CILA + ours
&25.37$\pm$1.40 
&4.62$\pm$1.44
&49.29$\pm$1.59
&30.08$\pm$1.15
&\textbf{60.06$\pm$1.43}
&\textbf{35.30$\pm$1.77}
\\ 
\cmidrule[0.5pt]{2-8} 
&CCLIS
&18.93$\pm$0.61
&\textbf{1.69$\pm$0.12}
&42.53$\pm$0.64
&12.68$\pm$1.33
&50.15$\pm$0.20
&\textbf{23.46$\pm$0.93}
\\ 
&CCLIS + ours
&\textbf{17.30$\pm$1.42}
&3.68$\pm$0.72
&\textbf{40.53$\pm$1.81}
&\textbf{11.54$\pm$0.70}
&\textbf{49.15$\pm$1.90}
&25.30$\pm$0.96
\\
\hline
\end{tabular}
    \caption{Average Forgetting (lower is better) in Continual Learning, all averaged across five independent trails. For simplicity, we only compare our method with contrastive continual learning methods}
\label{tab:forgetting_result}
\end{table*}

\subsubsection{More experiments about feature overlapping}
\label{appendix:additional_overlapping}
We select ResNet-18 with feature dimension of 512 as our encoder and complete our experiments on CIFAR-10. The CIFAR-10 dataset is divided into 5 tasks, each containing 2 classes. We choose SupCon(simply fine-tuning on all tasks with 200 examplars), Co2l\cite{cha2021co2l}, our methods, and oracle model and compare the \textit{overlapping} between these four methods. 
We define \textit{overlapping} between two distributions as:
\[
E_O(P_A,P_B)=\int_{R{^d}} {\min (P_A(x),P_B(x))dx} 
\]
$P_A$ and $P_B$ can be the feature distribution for a specific class or task. 
Specifically, after finishing all tasks, we used the validation set of each task to estimate the probability density functions via kernel density estimation (KDE). 
On the one hand, considering the feature dimension, the time complexity in calculating the KDE for N samples in a space of dimensions D is \(O(N^2D)\). When the feature dimension is very large, the time cost increases significantly. On the other hand, due to the curse of dimensionality, to avoid the sparsity in high-dimensional spaces, we first apply PCA to reduce the dimensionality to a 3D subspace before performing the KDE calculation.
Note that the features were not mapped onto a hypersphere. Then, we calculate the degree of overlap $E_O(P_A, P_B)$ between the tasks and visualized the results as a heatmap, as shown in \Cref{fig:resnet18_overlapping}.

\begin{table*}[!htbp]
    \centering
    \begin{tabular}{cccccc}
        \hline  
        &\textbf{100}&\textbf{200}&\textbf{300}&\textbf{400}&\textbf{500}\\
        \hline
           SupCon &48.82$\pm1.08$&51.97$\pm$0.85&53.12$\pm$0.98&55.99$\pm$1.53&57.94$\pm$2.15\\
            Co2l &60.68$\pm$1.38&65.57$\pm$1.37&71.41$\pm$1.58&72.38$\pm$1.54&$74.26\pm$1.42\\
           \textbf{ Co2l + ours} &\textbf{64.63}$\pm$\textbf{1.68}&\textbf{69.9}$\pm$\textbf{1.26}&\textbf{73.58}$\pm$\textbf{2.01}&\textbf{75.9}$\pm$\textbf{0.98}&\textbf{77.24}$\pm$\textbf{2.52}\\
        \hline
    \end{tabular}
    \caption{Different buffer sizes on Seq-Cifar-10 with Resnet18 (CIL).}
    \label{tab:buffer_abla_cil}
\end{table*}

\begin{figure*}[t]
	\centering
	\subfigure[SupCon]{
		\begin{minipage}[b]{0.2\textwidth}
		\includegraphics[width=1\textwidth]{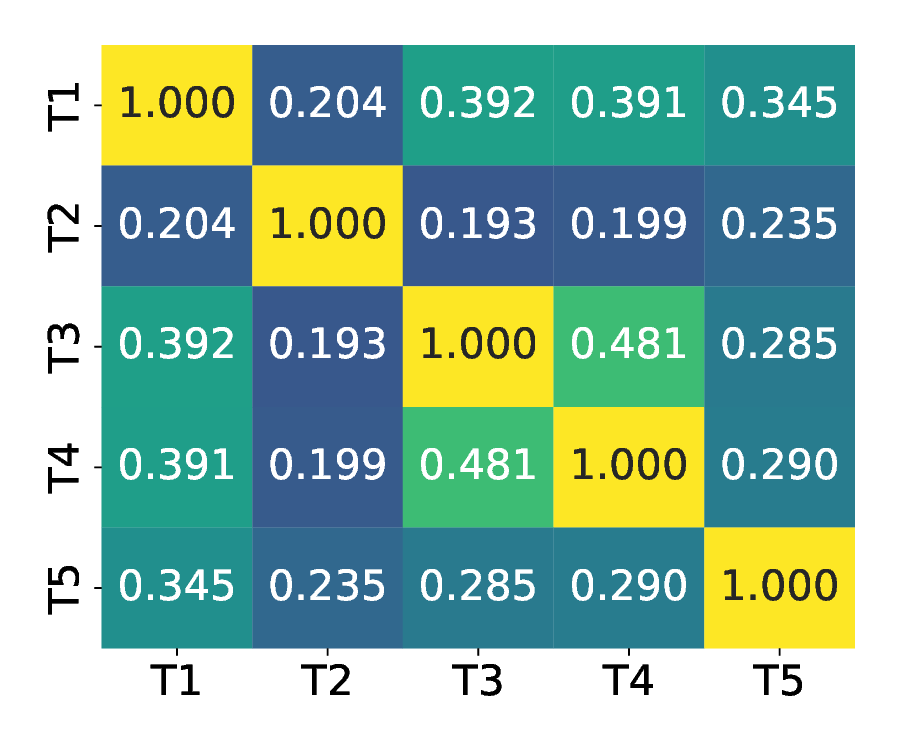}
		\end{minipage}
       
	}
    \subfigure[Co2l]{
	   \begin{minipage}[b]{0.2\textwidth}
		\includegraphics[width=1\textwidth]{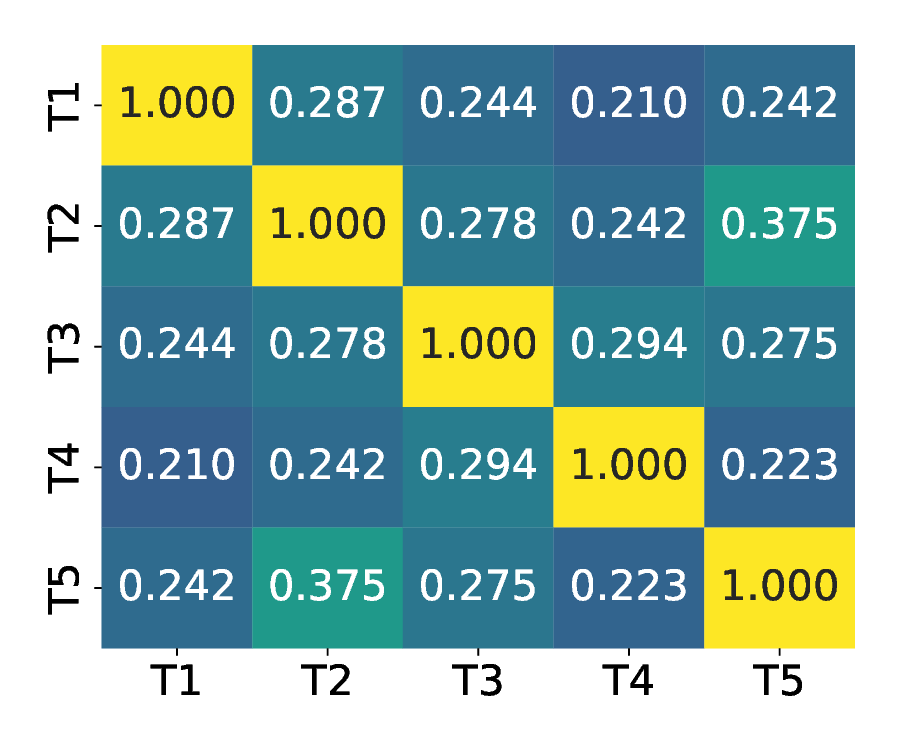}
		\end{minipage}
	}
    \subfigure[our method]{
		   \begin{minipage}[b]{0.2\textwidth}
		\includegraphics[width=1\textwidth]{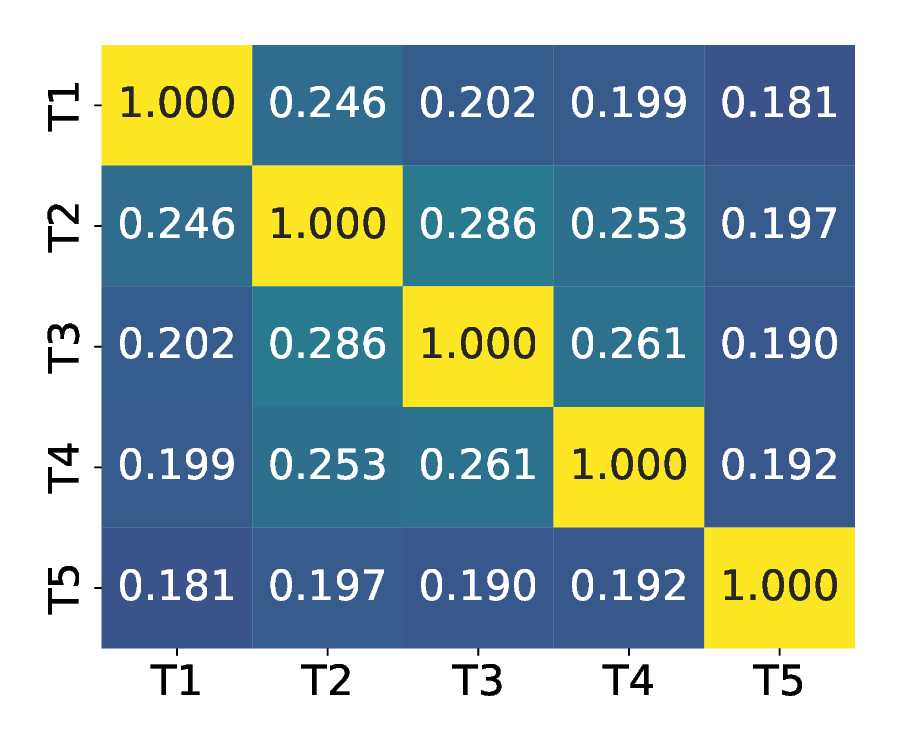}  
		\end{minipage}
	}
    \subfigure[oracle]{
		
        \begin{minipage}[b]{0.2\textwidth}
		\includegraphics[width=1\textwidth]{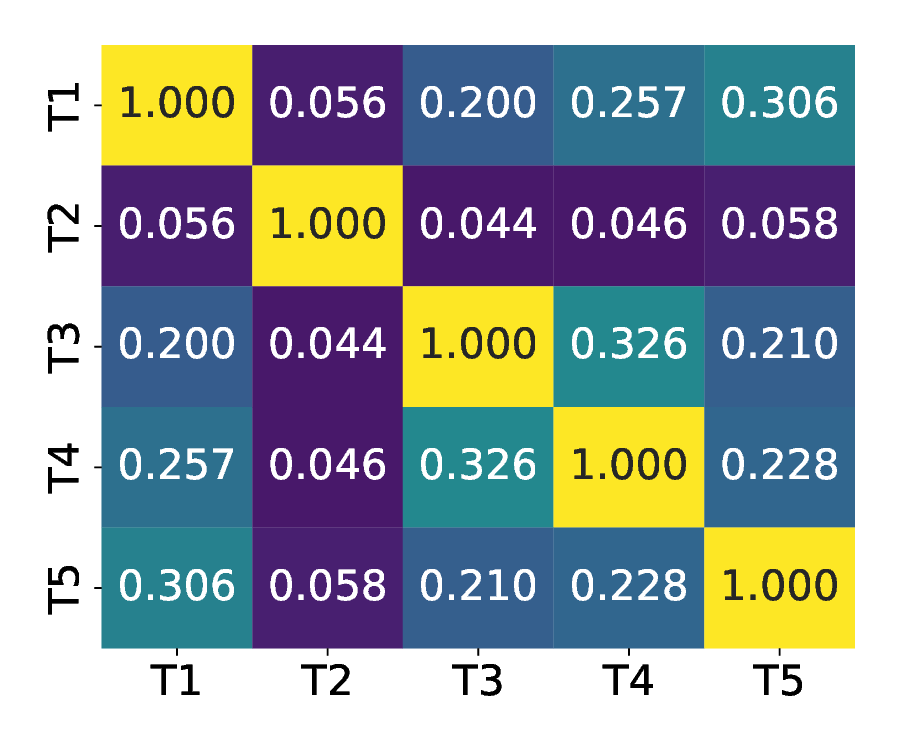}
		\end{minipage}
       
	}
		\label{fig:hor_2figs_1cap_2subcap_1}
	\caption{\textbf{Feature Overlap Heatmap}: darker colors represent less overlap, indicating better task separability. Each number represents the specific $E_O(P_A, P_B)$ between the corresponding row and column tasks.
    }
	\label{fig:resnet18_overlapping}
\end{figure*}

The results show that SupCon, with only 200 exemplars, exhibits significant overlap between tasks. Specifically, the overlap between tasks T4 and T3 reached 0.481. Co2l alleviates the feature overlap between tasks, but still shows a substantial gap compared to the oracle. However, our method shows the smallest difference from the oracle. This is because our approach constrains the features within a specific local region, effectively reducing the overlap between tasks.

\begin{figure*}[!htbp]
	\centering
            \begin{minipage}[t]{0.3\linewidth}
		\centering 
		\includegraphics[width=1.1\textwidth]{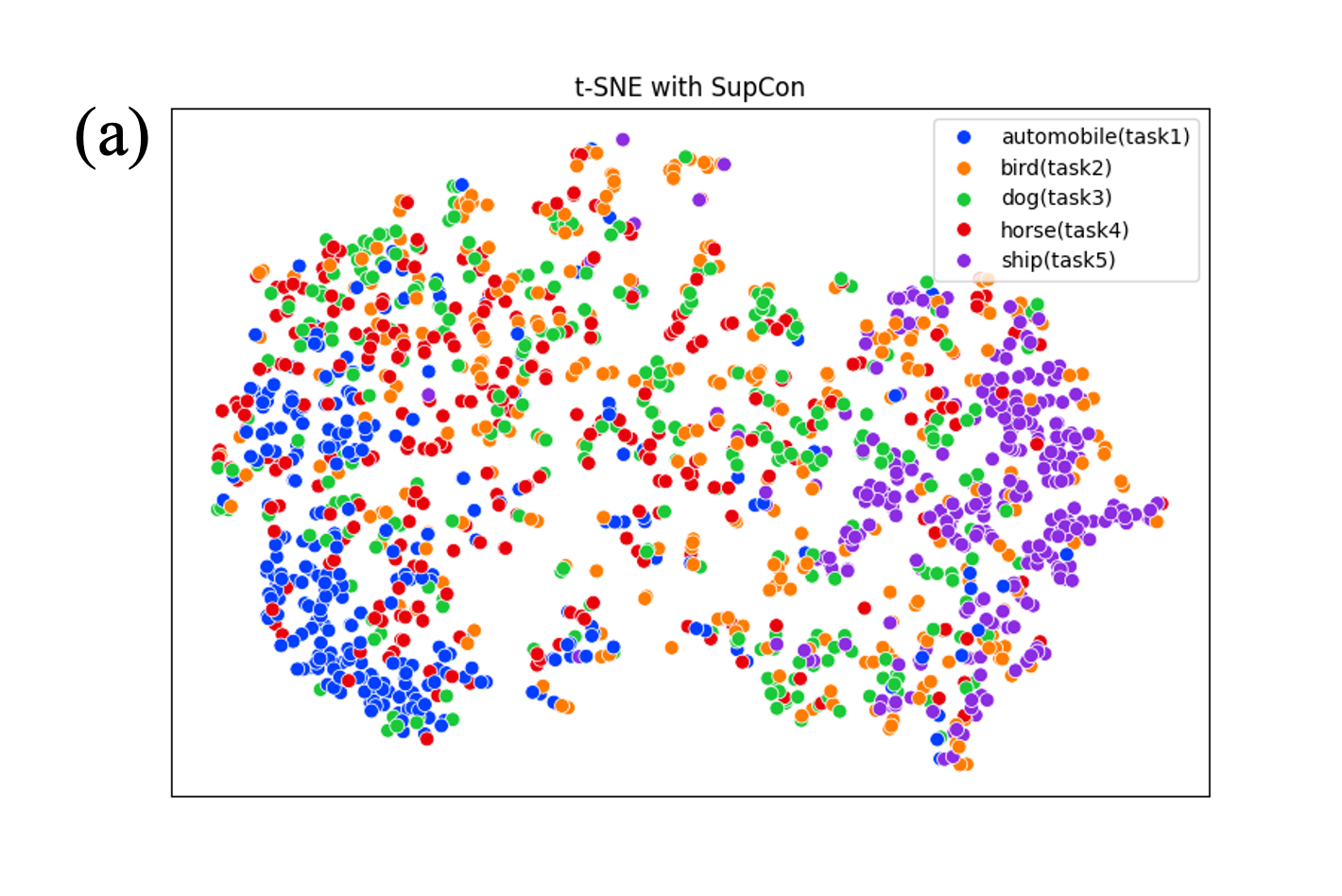}
	\end{minipage}
	\begin{minipage}[t]{0.3\linewidth}
		\centering
		\includegraphics[width=1.1\textwidth]{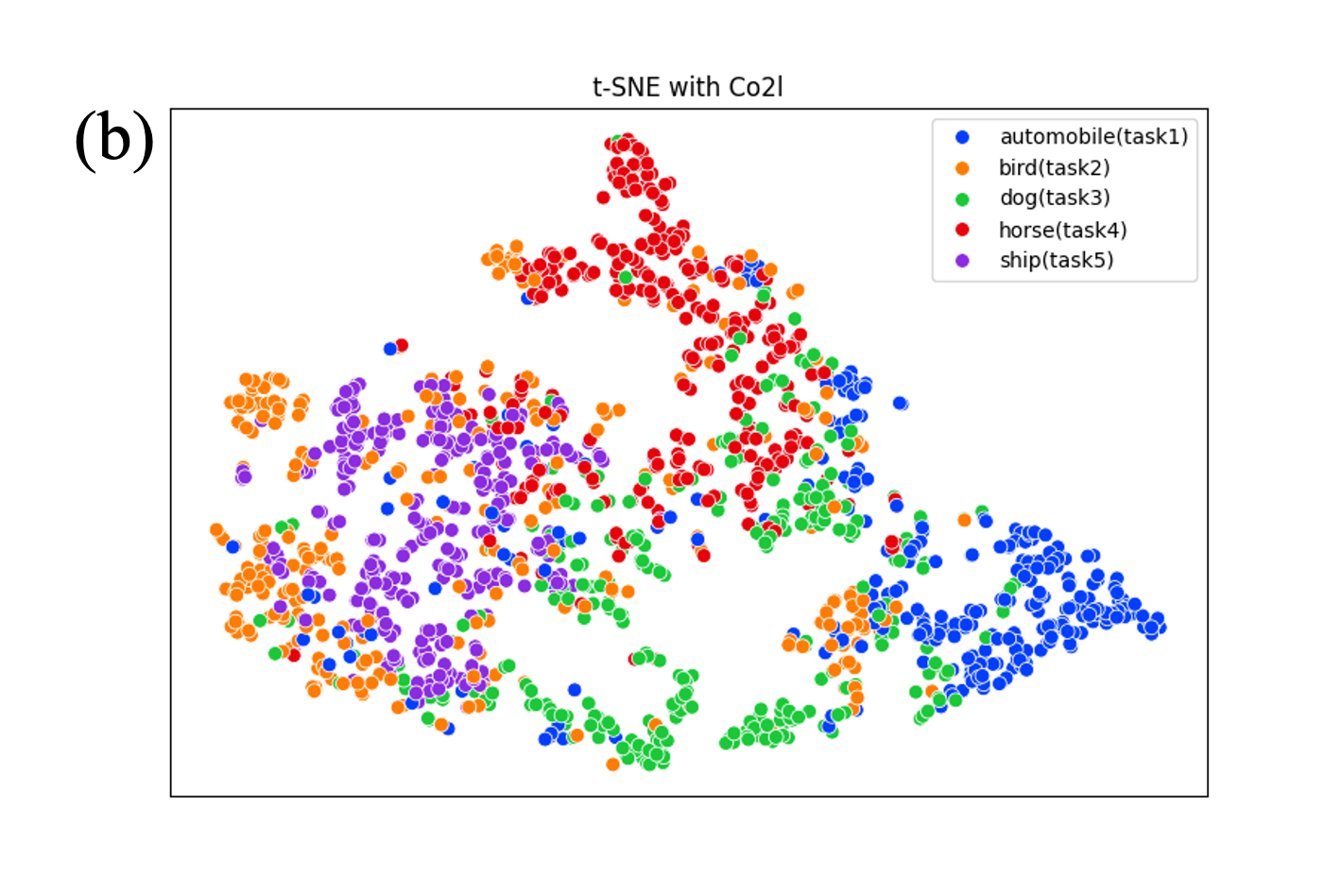}
	\end{minipage}
        \begin{minipage}[t]{0.3\linewidth}
		\centering
		\includegraphics[width=1.1\textwidth]{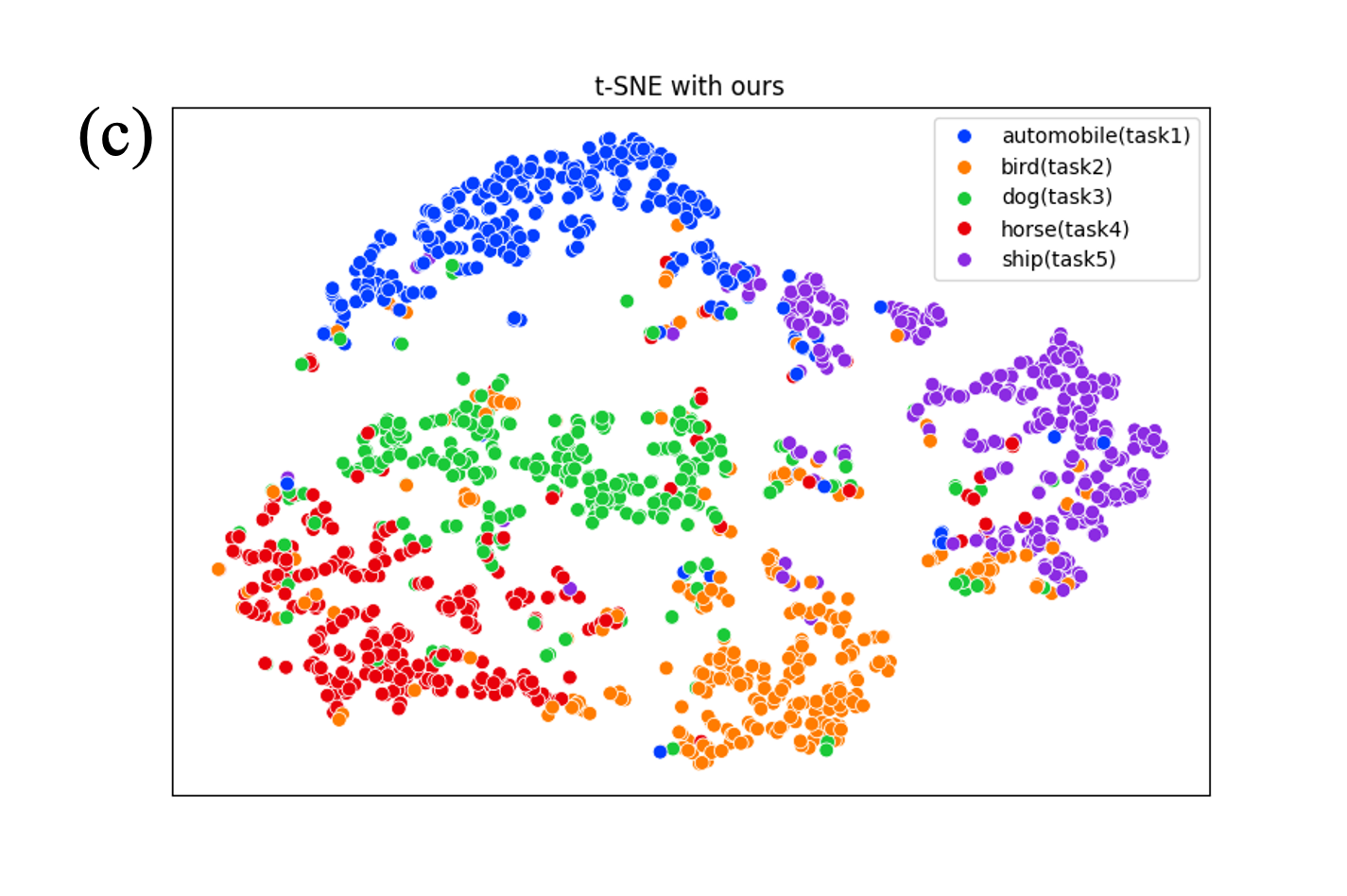}
	\end{minipage}
	\caption {t-SNE visualization of feature embeddings from validation set of Seq-Cifar-10. We select one class from each tasks. \textit{Features have been mapped onto the hypersphere}. (a) Fintune SupCon with 200 examplars. (b) Co2l, a contrastive continual learning algorithm with random sampling with 200 examplars. (c) Co2l with 200 examplars + our methods.}
	\label{fig:tsne}
\end{figure*}

\begin{table*}[H]
    \centering
    \begin{tabular}{cccccc}
        \hline  
        &\textbf{100}&\textbf{200}&\textbf{300}&\textbf{400}&\textbf{500}\\
        \hline
           SupCon &82.56$\pm$ 1.21&83.33$\pm$0.65&83.99$\pm$0.87&86.66$\pm$1.45&87.1$\pm$1.53\\
            Co2l &\textbf{93.11}$\pm$\textbf{1.65}&93.43$\pm$0.78&95.02$\pm $0.62&\textbf{95.90}$\pm$\textbf{0.54}&95.90$\pm$0.26\\
            \textbf{Co2l + ours} &92.33$\pm$1.23&\textbf{96.12}$\pm$\textbf{0.37}&\textbf{95.17}$\pm$\textbf{1.02}&95.07$\pm$0.48&\textbf{96.8}$\pm$\textbf{0.38}\\
        \hline

    \end{tabular}
    \caption{Different buffer sizes on Seq-Cifar-10 with Resnet18 (TIL).}
    \label{tab:buffer_abla_til}
\end{table*}

\begin{table*}[!htbp]
    \centering
    \label{tab:ablation_thresholg_loss}
    \begin{tabular}{lcc}
        \toprule
        \textbf{loss name} & \textbf{loss form} & Acc(CIL) \\
        \midrule
            Hinge  & $\mathcal{L}_{Hinge}(x)=\sum_{i \neq j}\max(0,x) $
                            & 70.09 \\
            Softplus  & $\mathcal{L}_{Softplus}(x) = 
                                       \log(1 + e^x)
                                        $  & 69.85 \\\rule{0pt}{3ex}
             LeakyReLU  &
                        $\mathcal{L}_{LeakyReLU}(x) = 
                                        \begin{cases} 
                                        x & \text{if } x > 0 \\
                                        \alpha x & \text{if } x \leq 0 
                                        \end{cases}
                                        $ 
            & 68.15 \\
            Squared Hinge  & 

            $\mathcal{L}_{Squared Hinge}(x) =\left( \max(0, x) \right)^2$
            & 66.82 \\
        \bottomrule
    \end{tabular}
   
     \caption{Ablation study of R2SCL and feature-level distillation. We train our model on the Seq-CIFAR-10 dataset with 200 examplars under a Class-IL scenario to explore the effectiveness of components.}
\end{table*}

\subsubsection{Different Buffer Sizes}
\label{sub:buffersize}
 We conduct additional experiments to explore the effectiveness of our algorithm with different buffer sizes. As illustrated in \Cref{tab:buffer_abla_cil}, with an increase in cached samples, the method achieves stable superiority over naive SupCon and Co2l within a specific range. However, as the cached sample quantity continues to grow, the advantage of our method gradually diminishes. The reason is that our method serves only as a complement to inter-task performance. As the number of samples increases, the improvement in inter-task discriminability diminishes.

\subsubsection{T-SNE: GPLASC Enhances Task-Level Separability}

\label{sub:tsne}
We conduct our ablation experiments with ResNet-18. We first select three baselines as shown in \Cref{fig:tsne}. We use Seq-CIFAR-10 as our training set. We collect the validation set and select one class from each task: \textit{automobile}, \textit{bird}, \textit{dog}, \textit{horse} and \textit{ship}. We plotted their t-SNE visualizations with the model after the last task. Note that the features are not mapped to the hypersphere. For the standard SupCon, which includes 200 examplars, there was almost no separability between tasks. By introducing IRD\cite{cha2021co2l}, the separability between tasks improves, but some classes, such as \textit{bird} and \textit{ship}, still overlap to some extent.

In contrast, our method achieved the best separability in t-SNE. Only a few points near the boundaries showed low separability. This improvement is primarily due to our method's ability to pre-separate subspaces of different tasks, naturally enhancing class separability across tasks.

\subsubsection{Extensive Sensitivity Analysis}
\label{sub:sensitivity}
\begin{figure*}[!h]
	\centering
    \includegraphics[width=0.8\textwidth]{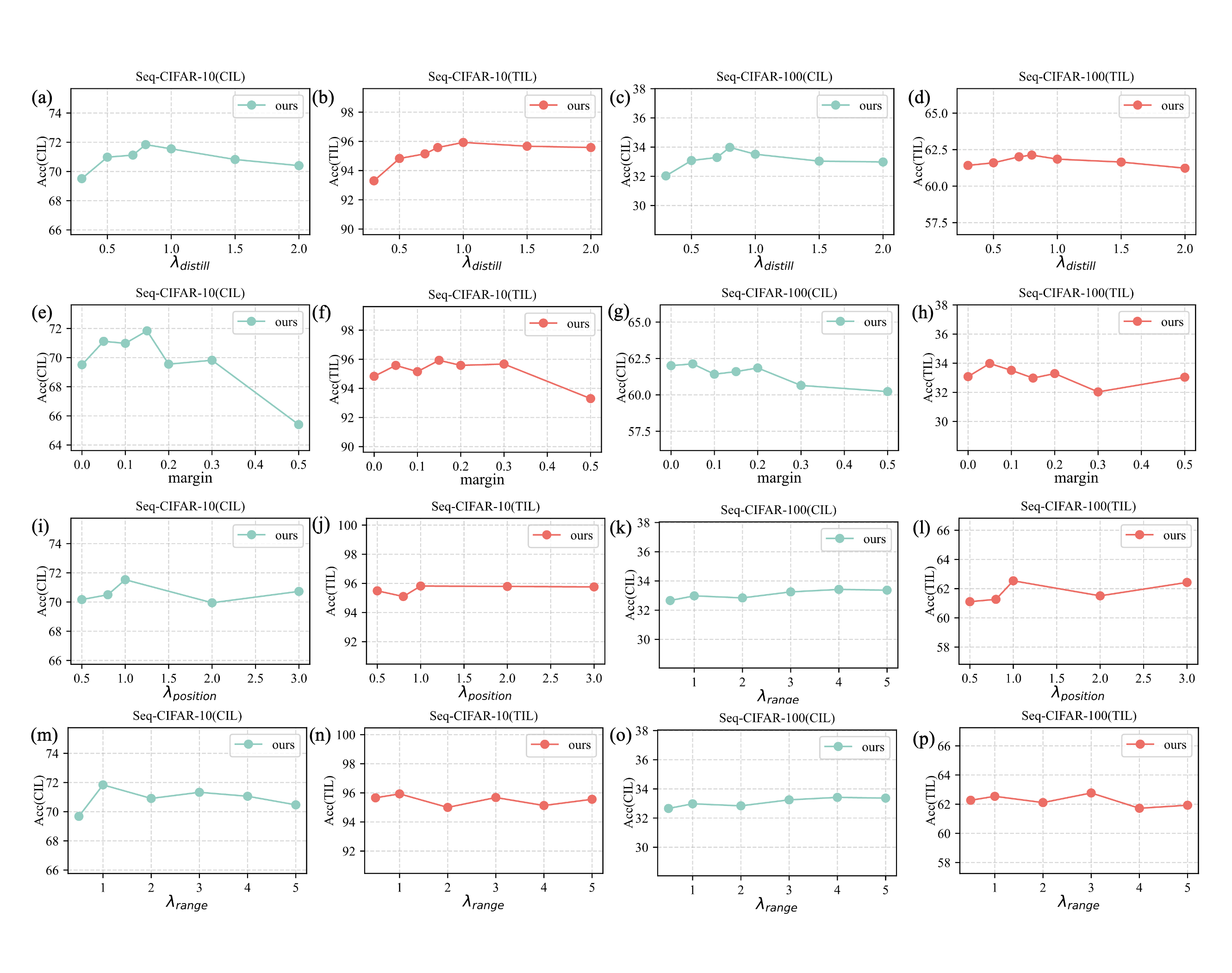}
	\caption{ Extensive sensitivity analysis was performed on five hyperparameters: $\lambda_{position}$, $\lambda_{distill}$, $\lambda_{range}$, \textit{margin} and pre-estimated task counts.
    (a-d) Sensitivity analysis is performed on $\lambda_{distill}$. 
    From left to right: CIL performance on cifar-10, TIL performance on cifar-10, CIL performance on cifar-100, TIL performance on cifar-100.
    (e-h) Sensitivity analysis is performed on \textit{margin}.
    (i-l) Sensitivity analysis is performed on $\lambda_{position}$. 
    (m-p) Sensitivity analysis is performed on $\lambda_{range}$.
    }

	\label{fig:all_ablation}
\end{figure*}

Our method includes four hyperparameters:$\lambda_{position}$,$ \lambda_{distill}$,
$\lambda_{range}$, \textit{margin}. To facilitate comparisons, we apply our methods to Co2l with 200 examplars. The results are shown in \Cref{fig:all_ablation}. Our approach consistently delivers robust outcomes across diverse hyperparameter values. Among the four hyperparameters, the \textit{margin} has a relatively greater impact on the results. The margin that is too small fails to ensure good inter-task separability, while the margin that is too large results in limited optimization regions within each task, thereby affecting the performance within tasks. Our empirical results show that a margin in the range of 0.1 to 0.15 leads to better performance in CIL.
Additional ablation studies also imply that our method can also consistently improve TIL performance.

\subsubsection{Effectiveness of Different Loss Function to Regulate Inter-Class Similarity}
\label{sec:threshold_loss}

We evaluate different loss function to regulate the threshold including \textit{Hinge }, \textit{Softplus }, \textit{LeakyReLU }, \textit{Squared Hinge }. 
Among them, the \textit{Softplus Loss} and \textit{Squared Hinge Loss} exhibit smoother gradient changes. For the \textit{LeakyReLU Loss}, we set the $\alpha$  to 0.1. As shown in \Cref{tab:ablation_thresholg_loss}, the Hinge Loss achieves the best performance.

\subsection{Analysis of Time Cost}
\label{sec:time}
A key consideration in our method's design is its computational efficiency. Our proposed framework integrates several components, including ETF-based classifier pre-allocation, supplementary loss terms (R2SCL, MSE), and a knowledge distillation mechanism. A natural concern arises regarding the potential increase in computational complexity introduced by these additions. While we have meticulously designed these components to be lightweight and operate within the existing network architecture without introducing additional forward or backward passes, we provide a quantitative analysis here to address this concern thoroughly.

To provide a concrete quantification of this overhead, we measured the total training time of our method and compared it against the original baseline methods on the CIFAR-10 dataset. The experiments were conducted on a single NVIDIA RTX 3090 GPU. The results are detailed in Table~\ref{tab:time_comparison}.
 
As shown in the table, our method introduces a modest computational overhead compared to the baselines across all tasks. For instance, in the first task ($t_1$), our method adds approximately 13.3\% and 9.4\% to the training time of Co2L and CCLIS, respectively. This increase is primarily attributed to the additional computations required by our framework, particularly the extensive inner product calculations used to measure the similarity between task-specific features and the pre-allocated task prototypes for the R2SCL loss.
\begin{table}[h!]
\centering
\caption{Total training time (in seconds) per task on CIFAR-10. We compare the original baseline methods (Co2L, CCLIS) with their enhanced versions incorporating our proposed modules ('+ours'). The time $t_i$ corresponds to the training duration for task $i$.}
\label{tab:time_comparison}
\begin{tabular}{lc} 
\hline\hline
\textbf{Methods}         & \textbf{Time (s)}  \\ 
\hline
Co2L ($t_1$)       & 1557               \\
\textbf{Co2L+ours ($t_1$)} & \textbf{1764}      \\
\hline
Co2L ($t_2$)       & 345                \\
\textbf{Co2L+ours ($t_2$)} & \textbf{444}       \\
\hline
Co2L ($t_3$)       & 371                \\
\textbf{Co2L+ours ($t_3$)} & \textbf{403}       \\
\hline
Co2L ($t_4$)       & 369                \\
\textbf{Co2L+ours ($t_4$)} & \textbf{424}       \\
\hline
Co2L ($t_5$)       & 354                \\
\textbf{Co2L+ours ($t_5$)} & \textbf{441}       \\
\hline\hline
CCLIS ($t_1$)      & 1991               \\
\textbf{CCLIS+ours ($t_1$)}& \textbf{2178}      \\
\hline
CCLIS ($t_2$)      & 410                \\
\textbf{CCLIS+ours ($t_2$)}& \textbf{454}       \\
\hline
CCLIS ($t_3$)      & 448                \\
\textbf{CCLIS+ours ($t_3$)}& \textbf{492}       \\
\hline
CCLIS ($t_4$)      & 413                \\
\textbf{CCLIS+ours ($t_4$)}& \textbf{497}       \\
\hline
CCLIS ($t_5$)      & 445                \\
\textbf{CCLIS+ours ($t_5$)}& \textbf{461}       \\
\hline\hline
\end{tabular}
\end{table}

\clearpage

\bibliography{ref}
\bibliographystyle{plain}

\end{document}